\newcommand{\IGNORE}[1]{}
\newcommand{\sbt}{\mathrm{s.t. }}
\newcommand{\sign}{\text{sign}}
\newcommand{\B}{\boldsymbol}
\newcommand{\M}{\mathbf}
\newcommand{\cmark}{\ding{51}}%
\newcommand{\xmark}{\ding{55}} 
\newcommand\eqdef{\mathrel{\overset{\makebox[0pt]{\mbox{\normalfont\tiny\sffamily def}}}{=}}}
\newcommand{\normaltilde}{\raise.17ex\hbox{$\scriptstyle\sim$}}
\newtheorem{prop}{Proposition}
\newtheorem{cor}{Corollary}
\newtheorem{asu}{Assumption}
\DeclareMathOperator*{\argmin}{argmin}
\begin{document}

\title{Learning Sparse Classifiers: Continuous and Mixed Integer Optimization Perspectives}

\author{\name Antoine Dedieu \email tonio.dedieu@gmail.com  \\
       \addr Operations Research Center\\
       Massachusetts Institute of Technology\\
       Cambridge, MA 02139, USA
       \AND
        \name Hussein Hazimeh \email hazimeh@mit.edu \\
        \addr Operations Research Center\\
               Massachusetts Institute of Technology\\
               Cambridge, MA 02139, USA
       \AND
       \name Rahul Mazumder \email rahulmaz@mit.edu \\
       \addr Sloan School of Management\\
       Operations Research Center \\
       MIT Center for Statistics\\
             Massachusetts Institute of Technology\\
             Cambridge, MA 02142, USA}

\editor{}

\maketitle

\begin{abstract}
We consider a discrete optimization formulation for learning sparse classifiers, where the outcome depends upon a linear combination of a small subset of features. Recent work has shown that mixed integer programming (MIP) can be used to solve (to optimality) $\ell_0$-regularized regression problems at scales much larger than what was conventionally considered possible. Despite their usefulness, MIP-based global optimization approaches are  significantly slower compared to the relatively mature algorithms for $\ell_1$-regularization and heuristics for nonconvex regularized problems. We aim to bridge this gap in computation times by developing new MIP-based algorithms for $\ell_0$-regularized classification. We propose two classes of scalable algorithms: an exact algorithm that can handle $p\approx 50,000$ features in a few minutes, and approximate algorithms that can address instances with $p\approx 10^6$ in times comparable to the fast $\ell_1$-based algorithms. Our exact algorithm is based on the novel idea of \textsl{integrality generation}, which solves the original problem (with $p$ binary variables) via a sequence of mixed integer programs that involve a small number of binary variables. Our approximate algorithms are based on coordinate descent and local combinatorial search. In addition, we present new estimation error bounds for a class of $\ell_0$-regularized estimators. Experiments on real and synthetic data demonstrate that our approach leads to models with considerably improved statistical performance (especially, variable selection) when compared to competing methods.
\end{abstract}

\begin{keywords}
  sparsity, sparse classification, l0 regularization, mixed integer programming
\end{keywords}

\section{Introduction}
We consider the problem of sparse linear classification, where the output depends upon a linear combination of a small subset of features. This is a core problem in high-dimensional statistics~\citep{hastie2015statistical} where the number of features $p$ is comparable to or exceeds the number of samples $n$. In such settings, sparsity can be useful from a statistical viewpoint and can lead to more 
interpretable models. We consider the typical binary classification problem with samples $(\M{x}_i, y_i), i = 1, \ldots, n$, features $\M{x}_i \in \mathbb{R}^p$, and outcome $y_i \in \{-1,+1\}$. In the spirit of best-subset selection in linear regression~\citep{miller2002subset}, we consider minimizing the empirical risk (i.e., a surrogate for the misclassification error) while penalizing the number of nonzero coefficients:
\begin{equation} \label{eq:l0vanilla}
\min \limits_{ \B{\beta}  \in \mathbb{R}^p } \frac{1}{n} \sum_{i=1}^n f \left( \langle \mathbf{x}_i,  \B{\beta} \rangle , y_i \right) + \lambda_0 \| \B{\beta}  \|_0,
\end{equation} 
where $f: \mathbb{R} \times \{-1,+1\} \to \mathbb{R}$ is the loss function (for example, hinge or logistic loss). The term $\| \B{\beta} \|_0$ is the $\ell_0$ (pseudo)-norm of $\B\beta$ which is equal to the number of nonzeros in $\B{\beta}$, and $\lambda_0>0$ is a regularization parameter which controls the number of nonzeros in $\B{\beta}$. We ignore the intercept term in the above and throughout the paper to simplify the presentation. Problem \eqref{eq:l0vanilla} is known to be NP-Hard and poses computational challenges~\citep{natarajan1995sparse}. In this paper, we introduce scalable algorithms for this optimization problem using techniques based on both continuous and discrete optimization, specifically, mixed integer programming~\citep{wolsey1999integer}.

There is an impressive body of work on obtaining approximate solutions to Problem~\eqref{eq:l0vanilla}: popular candidates include greedy (a.k.a. stepwise) procedures~\citep{bahmani2013greedy}, 
proximal gradient methods~\citep{blumensath2009iterative}, among others. The $\ell_{1}$-norm~\citep{tibshirani1996regression} is often used as a convex surrogate to the $\ell_0$-norm, leading to a convex optimization problem. Nonconvex continuous penalties (such as MCP and SCAD)~\citep{zhang2010nearly}
provide better approximations of the $\ell_0$-penalty but lead to nonconvex problems, for which gradient-based methods~\citep{gong2013general,boyd2014,li2015accelerated} and coordinate descent~\citep{ncvreg,sparsenet} are often used. These algorithms may not deliver optimal solutions for the associated nonconvex problem.  Fairly recently, there has been considerable interest in exploring Mixed Integer Programming (MIP)-based methods~\citep{wolsey1999integer,bestsubset,ustun2016supersparse,sato2016feature} to solve variants of Problem~\eqref{eq:l0vanilla} to optimality. MIP-based methods create a branch-and-bound tree that simultaneously leads to feasible solutions and corresponding lower-bounds (a.k.a. dual bounds). Therefore, these methods deliver  optimality certificates for the nonconvex optimization problem. Despite their appeal in delivering nearly optimal solutions to Problem~\eqref{eq:l0vanilla}, MIP-based algorithms are usually computationally expensive compared to convex relaxations or greedy (heuristic) algorithms~\citep{fastbestsubset,hastie2020}---possibly limiting their use in time-sensitive applications that arise in practice.

The vanilla version of best-subset selection is often perceived as a gold-standard for high-dimensional sparse linear regression, when the signal-to-noise ratio (SNR) is high. However, it suffers from overfitting when the SNR becomes moderately low~\citep{friedman2001elements,lowsnr,hastie2020}. A possible way to mitigate this shortcoming is by imposing additional continuous regularization---see for example,~\citet{lowsnr,fastbestsubset} for studies in the (linear) regression setting. Thus, we consider an extended family of estimators which combines $\ell_0$ and $\ell_q$ (for $q \in \{1, 2\}$) regularization:
\begin{equation} \label{eq:mainlagrangian}
\min \limits_{\B{\beta}  \in \mathbb{R}^p}~~~\frac{1}{n} \sum_{i=1}^n f \left( \langle \mathbf{x}_i,  \B{\beta} \rangle , y_i \right) + \lambda_0 \| \B{\beta}  \|_0 + \lambda_q \| \B{\beta} \|_q^q,
\end{equation}
where the regularization parameter $\lambda_0 \geq 0$ explicitly controls the sparsity in $\B\beta$, and $\lambda_q \geq 0$ controls the amount of continuous shrinkage on the nonzero coefficients of $\B\beta$ (for example, the margin in  linear SVM). 
In what follows, for notational convenience, we will refer to the combination of regularizers $\lambda_0 \| \B{\beta}  \|_0$ and $ \lambda_q \| \B{\beta} \|_q^q$, as the $\ell_0$-$\ell_q$ penalty. 
For flexibility, our framework allows for both choices of $q \in \{1, 2\}$, and the value of $q$ needs to be specified a-priori by the practitioner. When $q=2$, Problem~\eqref{eq:mainlagrangian} seeks to deliver a solution $\B\beta$ with few nonzeros (controlled by the $\ell_{0}$-penalty) and a small $\ell_{2}$-norm (controlled by the ridge penalty).  
Similarly, when $q=1$, we seek a model $\B\beta$ that has a small $\ell_{1}$-norm and a small $\ell_0$-norm. If $\lambda_{1}$ is large, the $\ell_1$-penalty may also encourage zeros in the coefficients. Note that the primary role of the $\ell_0$-penalty is to control the number of nonzeros in $\B\beta$; and that of the $\ell_1$-penalty is to shrink the model coefficients. In our numerical experiments we observe that both choices of $q\in \{1, 2\}$ work quite well, with no penalty uniformly dominating the other. We refer the reader to~\citet{lowsnr} for complementary discussions in the regression setting.

A primary focus of our work is to propose new scalable algorithms for solving Problem~\eqref{eq:mainlagrangian}, 
with certificates of optimality (suitably defined). Problem~\eqref{eq:mainlagrangian} can be expressed using MIP formulations. 
However, these formulations lead to computational challenges for off-the-shelf commercial MIP solvers (such as Gurobi and CPLEX). To this end, we propose a new MIP-based algorithm that we call ``integrality generation'', which allows for solving instances of Problem~\eqref{eq:mainlagrangian} with $p\approx 50,000$ (where $n$ is small) to optimality within a few minutes.\footnote{Empirically, we observe the runtime to depend upon the number of nonzeros in the solution. The runtime can increase if the number of nonzeros in an optimal solution becomes large---see Section~\ref{sec:experiments} for details.} This appears to be well beyond the capabilities of state-of-the-art MIP solvers, including recent MIP-based approaches, as outlined below. To obtain high-quality solutions for larger problem instances, in times comparable to the fast $\ell_1$-based solvers~\citep{glmnet}, we propose approximate algorithms based on coordinate descent (CD)~\citep{wright2015coordinate} and local combinatorial optimization,\footnote{The local combinatorial optimization algorithms are based on solving MIP problems over restricted search-spaces; and are usually much faster to solve compared to the full problem~\eqref{eq:mainlagrangian}.} where the latter leads to higher quality solutions compared to CD. Our CD and local combinatorial optimization algorithms are publicly available through our fast C++/R toolkit \texttt{L0Learn}: on CRAN at \url{https://cran.r-project.org/package=L0Learn} and also at \url{https://github.com/hazimehh/L0Learn}.

From a statistical viewpoint, we establish new upper bounds on the estimation error for solutions obtained by globally minimizing  $\ell_0$-based estimators~\eqref{eq:mainlagrangian}. These error bounds (rates) appear to be better than current known bounds for $\ell_1$-regularization; and have rates similar to the optimal minimax rates for sparse least squares regression~\citep{raskutti_wainwright}, achieved by $\ell_0$-based regression procedures.

\textbf{Related Work and Contributions:}
There is a vast body of work on developing optimization algorithms and understanding the statistical properties of various sparse estimators~\citep{hastie2015statistical,stats-HDD}.
We present a brief overview of work that relates to our paper.

\textbf{Computation:}
An impressive body of work has developed fast algorithms for minimizing the empirical risk regularized with convex or nonconvex proxies to the $\ell_0$-norm, e.g., \citet{glmnet,ncvreg,sparsenet,NesterovComposite,shalev2012proximal}. Below, we discuss related work that directly optimize objective functions involving an $\ell_0$ norm (in the objective or as a constraint). 

Until recently, global optimization with $\ell_0$-penalization was rarely used beyond $p = 30$ as popular software packages for best-subset selection (for example, \texttt{leaps} and \texttt{bestglm}) are unable to handle larger instances. \citet{bestsubset}~demonstrated that $\ell_0$-regularized regression problems could be solved to near-optimality for $p \approx 10^3$ by leveraging advances in first-order methods and the capabilities of modern MIP solvers such as Gurobi. \citet{bertsimas2017logistic,sato2016feature} extend the work of \citet{bestsubset} to solve $\ell_0$-regularized logistic regression by using an outer-approximation approach that can address problems with $p$ in the order of a few hundreds. \citet{bertsimas2017sparse} propose a cutting plane algorithm for the  $\ell_0$-constrained least squares problem with additional ridge regularization---they can handle problems with $n \approx p$, when the feature correlations are low and/or the amount of ridge regularization is taken to be sufficiently large. \citet{BertsimasSparseClassification} adapt \citet{bertsimas2017sparse}'s work to solve classification problems (e.g., with logistic or hinge loss). The approach of \citet{BertsimasSparseClassification} appears to require a fairly high amount of ridge regularization for the cutting plane algorithm to work well.

A separate line of research investigates algorithms to obtain feasible solutions for $\ell_0$-regularized problems. These algorithms do not provide dual bounds like MIP-based algorithms, but can be computationally much faster. These include: (i) first-order optimization algorithms based on hard thresholding, such as Iterative Hard Thresholding (IHT) \citep{blumensath2009iterative} and GraSP \citep{bahmani2013greedy}, (ii) second-order optimization algorithms inspired by the Newton method such as NTGP \citep{yuan2017}, NHTP \citep{zhou2019global}, and NSLR \citep{wang2019fast}, and  (iii) coordinate descent methods based on greedy and random coordinate selection rules \citep{BeckSparsityConstrained, randomCDL0}.

\citet{fastbestsubset} present algorithms that offer a \emph{bridge} between MIP-based global optimization and good feasible solutions for $\ell_0$-regularized problems, by using a combination of CD and local combinatorial optimization. The current paper is similar in spirit, but makes new contributions. We extend the work of~\citet{fastbestsubset} (which is tailored to the least squares loss function) to address the more general class of problems in~\eqref{eq:mainlagrangian}. Our algorithms can deliver solutions with better statistical performance (for example, in terms of variable selection and prediction error) compared to the popular fast algorithms for sparse learning (e.g., based on $\ell_1$ and MCP regularizers). Unlike heuristics that simply deliver an upper bound, MIP-based approaches attempt to solve~\eqref{eq:mainlagrangian} to optimality.
They can (i) certify via dual bounds the quality of solutions obtained by our CD and local search algorithms; and (ii) improve the solution if it is not optimal. However, as off-the-shelf MIP-solvers do not scale well, we present a new method: the \emph{Integrality Generation Algorithm} (IGA) (see Section~\ref{sec:MIO}) that allows us to \emph{solve} (to optimality) the MIP problems for instances that are larger than current 
methods~\citep{bertsimas2017logistic,BertsimasSparseClassification,sato2016feature}. 
The key idea behind our proposed IGA is to solve a sequence of relaxations of~\eqref{eq:mainlagrangian} by allowing only a subset of variables to be binary. On the contrary, a direct MIP formulation for~\eqref{eq:mainlagrangian} requires $p$ many binary variables; and can be prohibitively expensive for moderate values of $p$.

\textbf{Statistical Properties:}
Statistical properties of high-dimensional linear regression have been widely studied \citep{candes-sparse-estimation, raskutti_wainwright, bic-tsybakov, candes2007dantzig, lasso-dantzig}. One important statistical performance measure is the $\ell_2$-estimation error defined as $\| \B{\beta}^{*} - \hat{\B{\beta}} \|^2_2$, where $\B{\beta}^{*}$ is the $k$-sparse vector used in generating the true model and $\hat{\B{\beta}}$ is an estimator. For regression problems, \citet{candes-sparse-estimation, raskutti_wainwright} established a $(k/n)\log(p/k)$ lower bound on the $\ell_2$-estimation error. This optimal minimax rate is known to be achieved by a global minimizer of an $\ell_0$-regularized estimator~\citep{bic-tsybakov}. 
It is well known that the Dantzig Selector and Lasso estimators achieve a $(k/n)\log(p)$ error rate~\citep{candes2007dantzig, lasso-dantzig} under suitable assumptions for the high-dimensional regression setting. 
{Compared to regression, there has been limited work in deriving estimation error bounds for classification tasks. \citet{tarigan} study margin adaptation for $\ell_1$-norm SVM. A sizable amount of work focuses on 
the analysis of generalization error and risk bounds~\citep{greenshtein2006best,vdg_linear_models}. 
\citet{vssvm} study variable selection consistency of a nonconvex penalized SVM estimator, using a local linear approximation method with a suitable initialization. Recently, \citet{L1-SVM} proved a $(k/n) \log(p)$ upper-bound for the $\ell_2$-estimation error of $\ell_1$-regularized support vector machines (SVM), where $k$ is the number of nonzeros in the estimator that minimizes the population risk. \citet{Wainwright-logreg} show consistent neighborhood selection for high-dimensional Ising model using an $\ell_1$-regularized logistic regression estimator. \citet{one-bit} show  that one can obtain an error rate of $k/n\log(p/k)$ for 1-bit compressed sensing problems. In this paper, we present (to our knowledge) new $\ell_{2}$-estimation error bounds for a (global) minimizer of Problem~\eqref{eq:l0vanilla}---our framework applies to a family of loss functions including the hinge and logistic loss functions.}

\textbf{Our Contributions:} We summarize our contributions below:
\begin{itemize}
\item We develop fast first-order algorithms based on cyclic CD and local combinatorial search to (approximately) solve Problem \eqref{eq:mainlagrangian} (see Section \ref{sec:algorithm}). We prove a new result which establishes the convergence of cyclic CD under an asymptotic linear rate. We show that combinatorial search leads to solutions of higher quality than IHT and CD-based methods. We discuss how solutions from the $\ell_0$-penalized formulation, i.e., Problem~\eqref{eq:mainlagrangian}, can be used to obtain solutions to the cardinality constrained variant of~\eqref{eq:mainlagrangian}. We open source these algorithms through our sparse learning toolkit \texttt{L0Learn}. 
\item We propose a new algorithm: IGA, for solving Problem~\eqref{eq:mainlagrangian} to optimality. On  some problems, our algorithm reduces the time for solving a MIP formulation of Problem~\eqref{eq:mainlagrangian} from the order of hours to seconds, and it can solve high-dimensional instances with $p \approx 50,000$ and small $n$. The algorithm is presented in Section \ref{sec:MIO}.
\item We establish upper bounds on the squared $\ell_2$-estimation error for a cardinality constrained variant of Problem~\eqref{eq:mainlagrangian}. Our $(k/n)\log(p/k)$ upper bound matches the optimal minimax rate known for regression.
\item On a series of high-dimensional synthetic and real data sets (with $p \approx 10^5$), we show that our proposed algorithms can achieve significantly better statistical performance in terms of prediction (AUC), variable selection accuracy, and support sizes, compared to state-of-the-art algorithms (based on $\ell_1$ and local solutions to $\ell_0$ and MCP regularizers). 
Our proposed CD algorithm compares favorably in terms of runtime compared to current popular toolkits~\citep{glmnet,ncvreg,bahmani2013greedy,zhou2019global} for sparse classification. 
\end{itemize}

\subsection{Preliminaries and Notation} \label{sec:formulation-notation}
For convenience, we introduce the following notation: 
$$g(\B{\beta}) \eqdef \frac{1}{n} \sum_{i=1}^n f \left( \langle \mathbf{x}_i,  \B{\beta} \rangle , y_i \right)~~~\text{and}~~~G(\B{\beta}) \eqdef  g(\B{\beta})  + \lambda_1 \| \B{\beta} \|_1 + \lambda_2 \| \B{\beta} \|_2^2.$$
Problem \eqref{eq:mainlagrangian} is an instance of the following (more general) problem:
\begin{equation} \label{eq:lagrangian}
 \min \limits_{ \B{\beta}  \in \mathbb{R}^p } P(\B{\beta}) \eqdef G(\B{\beta}) + \lambda_0 \| \B{\beta} \|_0.
\end{equation}
In particular, Problem \eqref{eq:mainlagrangian} with $q=1$ is equivalent to Problem \eqref{eq:lagrangian} with $\lambda_2 = 0$. Similarly, Problem \eqref{eq:mainlagrangian} with $q=2$ is equivalent to  Problem \eqref{eq:lagrangian} with $\lambda_1 = 0$. Problem (3) will be the focus in our algorithmic development.

We denote the set $\{1,2,\dots,p\}$ by $[p]$ and the canonical basis for $\mathbb{R}^p$ by $\B{e}_1, \dots, \B{e}_p$. For $\B{\beta}\in \mathbb{R}^p$, we use $\text{Supp}(\B{\beta})$ to denote the support of $\B{\beta}$, i.e., the indices of its nonzero entries. For $S \subseteq [p]$, $\B{\beta}_S \in \mathbb{R}^{|S|}$ denotes the subvector of $\B{\beta}$ with indices in $S$. Moreover, for a differentiable function $g(\B{\beta})$, we use the notation $\nabla_S g(\B{\beta})$ to refer to the subvector of the gradient $\nabla g(\B{\beta})$ restricted to coordinates in $S$. We let $\mathbb{Z}$ and $\mathbb{Z}_{+}$ denote the set of integers and non-negative integers, respectively. A convex function $g(\B\beta)$ is said to be $\mu$-strongly convex if $\B\beta \mapsto g(\B\beta)-\mu\|\B\beta\|_{2}^2/2$ is convex. A function $h(\B{\beta})$ is said to be Lipschitz with parameter $L$ if $\| h(\B{\beta}) - h(\B{\alpha}) \|_2 \leq L \| \B{\beta} - \B{\alpha} \|_2$ for all $\B{\beta}, \B{\alpha}$ in the domain of the function.

\subsection{Examples of Loss Functions Considered} \label{sec:supportedloss}
In Table \ref{table:lossfunctions}, we give examples of popular classification loss functions that fall within the premise of our algorithmic framework and statistical theory. The column ``{FO \& Local Search}" indicates whether these loss functions are amenable to our first-order and local search algorithms (discussed in Section \ref{sec:algorithm}).\footnote{That is, these algorithms are guaranteed to converge to a stationary point (or a local optimum) for the corresponding optimization problems.} The column ``{MIP}" indicates whether the loss function leads to an optimization problem that can be solved (to optimality) via the MIP methods discussed in Section \ref{sec:MIO}. 
Finally, the column ``Error Bounds" indicates if the statistical error bounds (estimation error) discussed in Section \ref{sec:error-bound} apply to the loss function.

\begin{table}[tb]
\centering
\begin{tabular}{@{}llccc@{}}
\toprule
Loss             & $f(\hat{v},v)$ & FO \& Local Search & MIP & Error Bounds \\ \midrule
Logistic &  $\log(1+e^{- \hat{v} v })$             &        \cmark             &     \cmark       &    \cmark                       \\
Squared Hinge        &  $\max(0, 1 - \hat{v} v )^2$             &      \cmark               &    \cmark        &     \xmark                      \\
Hinge                &   $\max(0, 1 - \hat{v} v )$            &      \  \cmark*              &   \cmark         &    \cmark                       \\ \bottomrule
\end{tabular}
\caption{Examples of loss functions we consider. ``*'' denotes that our proposed first-order and local search methods apply upon using \citet{nesterov2012efficiency}'s smoothing on the non-smooth loss function.}
\label{table:lossfunctions}
\end{table}

\section{First-Order and Local Combinatorial Search Algorithms}
\label{sec:algorithm}

Here we present fast cyclic CD and local combinatorial search algorithms for obtaining high-quality local minima (we make this notion precise later) for Problem~\eqref{eq:lagrangian}. Our framework assumes that $g(\B\beta)$ is differentiable and has a Lipschitz continuous gradient. We first present a brief overview of the key ideas presented in this section, before diving into the technical details.

Due to the nonconvexity of~\eqref{eq:lagrangian}, the quality of the solution obtained depends on the algorithm---with local search and MIP-based algorithms leading to solutions of higher quality. The fixed points of the algorithms considered satisfy certain necessary optimality conditions for~\eqref{eq:lagrangian}, leading to different classes of local minima. 
In terms of solution quality, there is a hierarchy among these classes. 
We show that for Problem~\eqref{eq:lagrangian}, the minima corresponding to the different algorithms satisfy the following hierarchy:
\begin{equation}\label{eq:hierarchy}
\text{MIP Minima} ~\subseteq~ \text{Local Search Minima}  ~\subseteq~  \text{CD Minima}  ~\subseteq~ \text{IHT Minima}.
\end{equation}
The fixed points of the IHT algorithm contain the fixed points of the CD algorithm. As we move to the left in the hierarchy, the fixed points of the algorithms satisfy stricter necessary optimality conditions. At the top of the hierarchy, we have the global minimizers, which can be obtained by solving a MIP formulation of~\eqref{eq:lagrangian}.

Our CD and local search algorithms can run in times comparable to the fast $\ell_1$-regularized approaches~\citep{glmnet}. These algorithms can lead to high-quality solutions that can be used as warm starts for MIP-based algorithms. The MIP framework of Section~\ref{sec:MIO} can be used to certify the quality of these solutions via dual bounds, and to improve over them (if they are sub-optimal).

In Section \ref{section:cd}, we introduce cyclic CD for Problem~\eqref{eq:lagrangian} and study its convergence properties. Section~\ref{section:localsearch} discusses how the solutions of cyclic CD can be improved by local search and presents a fast heuristic for performing local search in high dimensions. In Section \ref{sec:algo-IHT}, we discuss how our algorithms can be used to obtain high-quality (feasible) solutions to the {\emph{cardinality constrained}} counterpart of~\eqref{eq:lagrangian}, in which the complexity measure $\| \B\beta \|_0$ appears as a constraint and not a penalty as in~\eqref{eq:lagrangian}.  
Finally, in Section \ref{section:L0Learn}, we briefly present implementation aspects of our toolkit \texttt{L0Learn}.

\subsection{Cyclic Coordinate Descent: Algorithm and Computational Guarantees} \label{section:cd}
We describe a cyclic CD algorithm for Problem \eqref{eq:lagrangian} and establish its convergence to stationary points of~\eqref{eq:lagrangian}.

\textbf{Why cyclic CD?} We briefly discuss our rationale for choosing cyclic CD. Cyclic CD has been shown to be among the fastest algorithms for fitting generalized linear models with convex and nonconvex regularization (e.g., $\ell_1$, MCP, and SCAD) \citep{glmnet,sparsenet, ncvreg}. Indeed, it can effectively exploit sparsity and active-set updates, making it suitable for solving high-dimensional problems (e.g., with $p \sim 10^6$ and small $n$). Algorithms that require evaluation of the full gradient at every iteration (such as proximal gradient, stepwise, IHT or greedy CD algorithms) have difficulties in scaling with $p$~\citep{nesterov2012efficiency}. In an earlier work, \citet{randomCDL0} proposed random CD for problems similar to \eqref{eq:lagrangian} (without an 
$\ell_1$-regularization term in the objective). However, recent studies have shown that cyclic CD can be faster than random 
CD~\citep{BeckConvergence,gurbuzbalaban2017cyclic,fastbestsubset}.
Furthermore, for $\ell_0$-regularized regression problems, cyclic CD is empirically seen to obtain solutions of higher quality (e.g., in terms of optimization and statistical performance) compared to random CD~\citep{fastbestsubset}.

\textbf{Setup.} Our cyclic CD algorithm for~\eqref{eq:lagrangian} applies to problems where $g(\B{\beta})$ is convex, continuously differentiable, and non-negative. Moreover, we will assume that the gradient of $\B\beta \mapsto g(\B{\beta})$ is coordinate-wise Lipschitz continuous, i.e., for every $i \in [p]$, $\B{\beta} \in \mathbb{R}^p$ and $s \in \mathbb{R}$, we have:
\begin{equation} \label{eq:coordinateLipschitz}
| \nabla_i g(\B{\beta} + \B{e}_i s) -  \nabla_i g(\B{\beta} ) | \leq L_i |s|,
\end{equation}
where $L_i > 0$ is the Lipschitz constant for coordinate $i$. This assumption  leads to the block Descent Lemma \citep{bertsekas2016nonlinear}, which states that 
\begin{equation} \label{eq:blockdescent}
g(\B{\beta} + \B{e}_i s) \leq g(\B{\beta}) + s \nabla_i g(\B{\beta}) + 
\frac12{L_i}s^2.
\end{equation}
Several popular loss functions for classification  fall under the above setup. For example, logistic loss and squared hinge loss satisfy~\eqref{eq:coordinateLipschitz} with $L_i = {\| \B X_i \|_2^2}/{4n}$ and $L_i = {2 \| \B X_i \|_2^2}/{n}$, respectively (here, $\M{X}_i$ denotes the $i$-th column of the data matrix $\M X$).

\textbf{CD Algorithm.} Cyclic CD~\citep{bertsekas2016nonlinear} updates one coordinate at a time (with others held fixed) in a cyclical fashion. 
Given a solution $\B{\beta} \in \mathbb{R}^p$, we attempt to find a new solution by changing the $i$-th coordinate of $\B\beta$---i.e., we find $\B\alpha$ such that ${\alpha}_j = {\beta}_j$ for all $j \neq i$ and $\alpha_i$ minimizes the one-dimensional function: $\beta_{i} \mapsto P(\B{\beta})$. However, for the examples we consider (e.g., logistic and squared hinge losses), there is no closed-form expression for this minimization problem. This makes the algorithm computationally inefficient compared to $g$ being the squared error loss~\citep{fastbestsubset}. Using~\eqref{eq:blockdescent}, we consider a quadratic upper bound $\tilde{g}(\B\alpha; \B\beta)$ for $g(\B\alpha)$ as follows:
\begin{equation}\label{defn-tilde-g}
g(\B{\alpha}) \leq \tilde{g}(\B\alpha; \B\beta) \eqdef g(\B{\beta}) + (\alpha_i - \beta_i) \nabla_i g(\B{\beta}) + \frac{\hat{L}_i}{2} (\alpha_i - \beta_i)^2,
\end{equation}
where $\hat{L}_i$ is a constant which satisfies $\hat{L}_i > L_i$. (For notational convenience, we hide the dependence of $\tilde{g}$ on $i$.) 
Let us define the function 
$$\psi(\B\alpha) = \sum_{i} \psi_{i} (\alpha_{i})~~~\text{where}~~~\psi_{i}(\alpha_i) = \lambda_0 \mathbb{1}(\alpha_i \neq 0) + \lambda_1 |\alpha_i| + \lambda_2 \alpha_i^2.$$ 
Adding $\psi(\B\alpha)$ to both sides of equation~\eqref{defn-tilde-g}, we get:
\begin{equation} \label{eq:Ptilde}
P(\B{\alpha}) \leq \widetilde{P}_{\hat{L}_i}(\B{\alpha};\B{\beta}) \eqdef \tilde{g}(\B\alpha;\B{\beta})  + \psi(\B\alpha).
\end{equation}
We can approximately minimize $P(\B{\alpha})$ w.r.t.~$\alpha_{i}$ (with other coordinates held fixed)
by minimizing its upper bound $\widetilde{P}_{\hat{L}_i}(\B{\alpha};\B{\beta})$ w.r.t.~$\alpha_i$. A solution $\hat{\alpha}_{i}$ for this one-dimensional optimization problem is given by
\begin{align} \label{eq:cdupperbd}
\hat{\alpha}_{i} \in \argmin_{\alpha_{i}}~\widetilde{P}_{\hat{L}_i}(\B{\alpha};\B{\beta})~~=~~\argmin_{\alpha_i} ~ \frac{\hat{L}_i}{2} \left( \alpha_i - \left(\beta_i - \frac{1}{{\hat{L}_i }} {\nabla_i g(\B{\beta} )} \right)  \right)^2 + \psi_{i} (\alpha_{i}).
\end{align}
Let $\hat{\B\alpha}$ be a vector whose $i$-th component is $\hat{\alpha}_{i}$, and $\hat{\alpha}_{j} = \beta_{j}$ for all $j \neq i$. Note that $P(\hat{\B{\alpha}}) \leq P({\B{\beta}})$---i.e., updating the $i$-th coordinate via~\eqref{eq:cdupperbd} with all other coefficients held fixed, leads to a decrease in the objective value $P({\B{\beta}})$. A solution of~\eqref{eq:cdupperbd} can be computed in closed-form; and is given by the  thresholding operator $T: \mathbb{R} \to \mathbb{R}$ defined as follows:
\begin{equation} \label{eq:thresholding}
T(c; \B{\lambda}, \hat{L}_i) = \begin{cases}
\frac{\hat{L}_i}{\hat{L}_i + 2\lambda_2}  \Big(  |c| - \frac{\lambda_1}{\hat{L}_i} \Big) \sign(c) \quad & \text{ if } \frac{\hat{L}_i}{\hat{L}_i + 2\lambda_2}   \Big(  |c| - \frac{\lambda_1}{\hat{L}_i} \Big) \geq \sqrt{\frac{2 \lambda_0}{\hat{L}_i + 2 \lambda_2}} \\
0 \quad & \text{~~otherwise}
\end{cases}
\end{equation}
where $c = \beta_i - \nabla_i g(\B{\beta} )/\hat{L}_i$ and $\B{\lambda} = (\lambda_0, \lambda_1, \lambda_2)$. 

Algorithm 1 below, summarizes our cyclic CD algorithm.
\begin{tcolorbox}[colback=white]
\centering
\textbf{Algorithm 1: Cyclic Coordinate Descent (CD)}
\begin{itemize}
\item \textbf{Input:} Initialization $\B{\beta}^{0}$ and constant $\hat{L}_i > L_i$ for every $i \in [p]$
\item \textbf{Repeat for $l = 0, 1, 2, \dots$ until convergence: } \begin{enumerate}
                        \item $i \gets 1 + (l \mod p)$ and $\B{\beta}^{l+1}_j \gets \B{\beta}^{l}_j$ for all $j \neq i$
                        \item Update ${\beta}^{l+1}_i \gets \argmin_{\beta_i} \widetilde{P}_{\hat{L}_i}({\beta}^{l}_1, \dots, \beta_i, \dots, {\beta}^{l}_p; \B{\beta}^{l})$ using \eqref{eq:thresholding} with $c~=~{\beta}^{l}_i~-~{\nabla_i g(\B{\beta}^{l} )}/{\hat{L}_i }$
\end{enumerate}
\end{itemize}
\end{tcolorbox}

\textbf{Computational Guarantees.} The convergence of cyclic CD has been extensively studied for certain classes of continuous objective functions, e.g., see \citet{Tseng2001,bertsekas2016nonlinear,BeckConvergence} and the references therein. However, these results do not apply to our objective function due to the discontinuity in the $\ell_0$-norm. In Theorem~\ref{theorem:cdconvergence}, we establish a new result which shows that cyclic CD (Algorithm 1) converges at an asymptotic linear rate, 
and we present a characterization of the corresponding solution. Theorem~\ref{theorem:cdconvergence} is established under the following assumption:
\begin{asu} \label{assumption:strongconvexity}
Problem \eqref{eq:lagrangian} satisfies at least one of the following conditions:
 \begin{enumerate}
 \item Strong convexity of the continuous regularizer, i.e., $\lambda_2 > 0$.
 \item Restricted Strong Convexity: For some $u \in [p]$, the function $\B\beta_{S} \mapsto g(\B{\beta}_S)$ is strongly convex for every $S \subseteq [p]$ such that $|S| \leq u$. Moreover, $\lambda_0$ and the initial solution $\B{\beta}^{0}$ are chosen such that $P(\B{\beta}^{0}) < u \lambda_0$.
\end{enumerate}
\end{asu}
\begin{theorem} \label{theorem:cdconvergence}
Let $\{ \B{\beta}^{l} \}$ be the sequence of iterates generated by Algorithm 1. Suppose that Assumption \ref{assumption:strongconvexity} holds, then:
\begin{enumerate}
\item The support of $\B\beta^l$ stabilizes in a finite number of iterations, i.e., there exists an integer $N$ and support $S \subset [p]$ such that $\text{Supp}(\B{\beta}^{l}) = S$ for all $l \geq N$.
\item  Let $S$ be the support as defined in Part 1.  The sequence $\{ \B{\beta}^{l} \}$ converges to a solution $\B{\beta}^{*}$ with support $S$, satisfying:

\begin{equation}
\begin{aligned} \label{eq:CWminima}
& \B{\beta}^{*}_S \in \argmin_{\B{\beta}_S} G(\B{\beta}_S) \\
& |{\beta}^{*}_i | \geq \sqrt{\frac{2 \lambda_0}{\hat{L}_i + 2 \lambda_2}} \quad \text{ for } i \in S  \\
\text{and}~~~~~~& |\nabla_i g(\B{\beta}^{*} )| - \lambda_1 \leq \sqrt{2 \lambda_0 (\hat{L}_i + 2 \lambda_2)} \quad \text{ for } i \in S^c. 
\end{aligned}
\end{equation}
\item  
Let $S$ be the support as defined in Part 1. Let us define $H(\B\beta_{S})\eqdef g(\B{\beta}_S) + \lambda_2 \| \B{\beta}_S \|_2^2$.
Let $\sigma_{S}$ be the strong convexity parameter of $\B\beta_{S} \mapsto H(\B\beta_{S})$, and let $L_S$ be the Lipschitz constant of $\B\beta_{S} \mapsto \nabla_{S} H(\B\beta_{S})$. Denote $\hat{L}_{\max} = \max_{i \in S} \hat{L}_i $ and $\hat{L}_{\min} = \min_{i \in S} \hat{L}_i $. Then, there exists an integer $N'$ such that the following holds for all $t \geq N'$:
\begin{equation} \label{eq:cdrate}
{P(\B{\beta}^{(t+1)p}) - P(\B{\beta}^{*})} \leq \left( 1 - \frac{\sigma_S}{\gamma} \right)  
\left(P(\B{\beta}^{t p}) - P(\B{\beta}^{*}) \right),
\end{equation}
where $\gamma^{-1} = 2 \hat{L}_{\max} (1 + |S| {L_S^2} \hat{L}^{-2}_{\min}).$
\end{enumerate}
\end{theorem}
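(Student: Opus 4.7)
I would proceed in three stages mirroring the three parts. The key reduction is that after finitely many iterations both the support and then the sign pattern of $\B\beta^l$ freeze, after which cyclic CD reduces to a standard cyclic proximal gradient method on a smooth strongly convex function over a fixed coordinate block. The workhorse is a sufficient-decrease inequality: since $\hat L_i > L_i$, the quadratic $\widetilde P_{\hat L_i}(\cdot;\B\beta^l)$ is a strict majorant of $P$ that agrees with $P$ at $\B\beta^l$, so the minimizing update \eqref{eq:cdupperbd} yields
\begin{equation*}
P(\B\beta^l) - P(\B\beta^{l+1}) \;\geq\; \tfrac{1}{2}(\hat L_i - L_i)\,(\beta^{l+1}_i - \beta^l_i)^2,
\end{equation*}
making $\{P(\B\beta^l)\}$ nonincreasing and bounded below by $0$. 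Iterate boundedness then follows from Assumption \ref{assumption:strongconvexity}: under $\lambda_2 > 0$ the sublevel sets of $P$ are bounded, while under restricted strong convexity the chain $\lambda_0 \|\B\beta^l\|_0 \leq P(\B\beta^l) \leq P(\B\beta^0) < u\lambda_0$ forces $\|\B\beta^l\|_0 < u$ and the strong convexity of $g$ on supports of size $<u$ bounds the iterates on their (possibly varying) supports.

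For Part 1, I would exploit the ``dead zone'' of the thresholding map \eqref{eq:thresholding}. Whenever a coordinate \emph{enters} the support, $|\beta^{l+1}_i - \beta^l_i| \geq \sqrt{2\lambda_0/(\hat L_i + 2\lambda_2)}$, so the sufficient-decrease inequality produces a uniform drop $\Delta > 0$ in $P$. Whenever a coordinate \emph{leaves} the support, the $\ell_0$ term drops by $\lambda_0$ and comparing the pre- and post-update majorants (both evaluated at the new iterate) shows the continuous part cannot grow enough to cancel this, giving again a uniform decrease. Since $P$ is bounded below, the total number of support-changing updates is finite, and the support stabilizes at some $S$ after a finite index $N$.

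For Part 2, for $l \geq N$ the coordinates in $S^c$ stay at $0$, and the restricted iterates in $S$ execute cyclic proximal CD on $G$ over $\mathbb{R}^{|S|}$, with smooth part $H(\B\beta_S) = g(\B\beta_S) + \lambda_2\|\B\beta_S\|_2^2$ strongly convex by Assumption \ref{assumption:strongconvexity} and proximal part the $\ell_1$ term. Standard convergence of cyclic proximal CD for smooth strongly convex objectives \citep{BeckConvergence} yields $\B\beta^l \to \B\beta^*$, and the fixed-point equation coincides with $\B\beta^*_S \in \argmin_{\B\beta_S} G(\B\beta_S)$. The magnitude bound in \eqref{eq:CWminima} for $i \in S$ is inherited from the dead zone (any violation would trigger a further support change, contradicting stabilization); for $i \in S^c$, $\beta^l_i \equiv 0$ after $N$ means the thresholding formula must return $0$ at $c = -\nabla_i g(\B\beta^l)/\hat L_i$, and inverting the ``otherwise'' branch of \eqref{eq:thresholding} and passing to the limit gives $|\nabla_i g(\B\beta^*)| - \lambda_1 \leq \sqrt{2\lambda_0(\hat L_i + 2\lambda_2)}$.

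For Part 3, since $|\beta^l_i|$ on $S$ stays uniformly above a positive constant while $|\beta^{l+1}_i - \beta^l_i| \to 0$ by summability, the sign pattern stabilizes after some $N' \geq N$. Beyond $N'$ the $\ell_1$ regularizer is affine on the iterates, and the dynamics reduce to cyclic gradient CD on $H$ with step sizes $1/\hat L_i$. The linear rate \eqref{eq:cdrate} then follows from a classical cyclic-CD analysis: aggregate the per-coordinate sufficient decrease across one full cycle of length $p$; bound $\|\nabla H(\B\beta^{(t+1)p})\|^2$ by $|S| L_S^2 \hat L_{\min}^{-2}$ times the squared cycle increment via coordinate Lipschitzness along the cycle; then invoke $\sigma_S$-strong convexity to convert the gradient-norm lower bound into the optimality gap $P(\B\beta^{tp}) - P(\B\beta^*)$. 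Balancing these with $\hat L_{\max}$ on the descent side gives $\gamma^{-1} = 2\hat L_{\max}(1 + |S|L_S^2 \hat L_{\min}^{-2})$. The main obstacle throughout is Part 1: getting a clean uniform lower bound on the drop when a coordinate \emph{leaves} the support, where no magnitude threshold is directly active, requires a careful comparison of pre- and post-update majorants, and one must simultaneously keep the iterates bounded during transient support changes under the restricted-strong-convexity branch. Once finite-time support and sign stabilization are established, Parts 2 and 3 reduce to well-understood cyclic CD analyses on a fixed coordinate block.
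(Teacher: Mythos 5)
Your overall architecture---the sufficient-decrease inequality $P(\B\beta^l)-P(\B\beta^{l+1})\ge\tfrac{1}{2}(\hat L_i-L_i)(\beta^{l+1}_i-\beta^l_i)^2$ from the strict majorization gap, finite-time support stabilization, reduction to cyclic CD on the strongly convex restriction $\B\beta_S\mapsto G(\B\beta_S)$, reading off \eqref{eq:CWminima} from the thresholding operator in the limit, sign stabilization, and then the Beck--Tetruashvili linear rate---is exactly the paper's proof, including the handling of iterate sparsity under the restricted-strong-convexity branch via $\lambda_0\|\B\beta^l\|_0\le P(\B\beta^l)\le P(\B\beta^0)<u\lambda_0$. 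Parts 2 and 3 of your plan match the paper's argument step for step.

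The one place you diverge---and the one place your plan is not yet a proof---is the case in Part 1 where a coordinate \emph{leaves} the support. You propose to bound the drop by comparing pre- and post-update majorants and arguing that the continuous part ``cannot grow enough to cancel'' the $\lambda_0$ saved by the $\ell_0$ term, and you yourself flag this as the main unresolved obstacle. As stated this does not yield a uniform decrease: when the candidate value sits at the boundary of the dead zone of \eqref{eq:thresholding}, zeroing the coordinate and keeping it nonzero give the same majorant value, so the only guaranteed drop is $\tfrac{1}{2}(\hat L_i-L_i)(\beta^l_i)^2$, which is not uniformly bounded below unless $|\beta^l_i|$ is. The missing observation---which is how the paper disposes of this case in one line, treating it symmetrically with the ``enters'' case---is that the departing value $\beta^l_i\neq 0$ was itself produced by the thresholding operator at coordinate $i$'s previous update (apart from the finitely many first-cycle updates, which cannot affect the finiteness argument), and hence already satisfies $|\beta^l_i|\ge\sqrt{2\lambda_0/(\hat L_i+2\lambda_2)}$. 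Thus $(\beta^{l+1}_i-\beta^l_i)^2=(\beta^l_i)^2$ obeys exactly the same lower bound as when a coordinate enters the support, and both cases give the identical constant drop $\frac{\hat L_i-L_i}{\hat L_i+2\lambda_2}\lambda_0$, after which boundedness of $P$ from below finishes Part 1. With that substitution the rest of your proposal goes through as written.
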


We provide a proof of Theorem \ref{theorem:cdconvergence} in the appendix. The proof is different from that of CD for $\ell_0$-regularized regression \citep{fastbestsubset} since we use inexact minimization for every coordinate update, whereas \citet{fastbestsubset} use exact minimization. At a high level, the proof proceeds as follows. In Part 1, we  prove a sufficient decrease condition which establishes that the support stabilizes in a finite number of iterations. In Part 2, we show that under Assumption \ref{assumption:strongconvexity}, the objective function is strongly convex when restricted to the stabilized support. After restriction to the stabilized support, we obtain convergence from standard results on cyclic CD \citep{bertsekas2016nonlinear}. In Part 3, we  show an asymptotic linear rate of convergence for  Algorithm~1. To establish this rate, we extend the linear rate of convergence of cyclic CD for smooth strongly convex functions by \citet{BeckConvergence} to our objective function (note that due to the presence of the $\ell_1$-norm, our objective is not smooth even after support stabilization).

\textbf{Stationary Points of CD versus IHT:} 
The conditions in \eqref{eq:CWminima} describe a fixed point of the cyclic CD algorithm and are necessary optimality conditions for Problem~\eqref{eq:lagrangian}.  We now show that the stationary conditions~\eqref{eq:CWminima} are strictly contained within the class of stationary points arising from the IHT algorithm \citep{blumensath2009iterative,BeckSparsityConstrained,bestsubset}. Recall that IHT can be interpreted as a proximal gradient algorithm, whose updates for 
Problem~\eqref{eq:lagrangian} are given by:
\begin{align} \label{eq:iht}
    \B{\beta}^{l+1} \in \argmin_{\B{\beta}}~ \left\{ \frac{1}{2\tau} \| \B{\beta} - (\B{\beta}^{l} - \tau \nabla g(\B{\beta}^{l}))  \|_2^2 + \psi(\B{\beta}) \right\},
\end{align}
where $\tau > 0$ is a step size. Let $L$ be the Lipschitz constant of $\B\beta \mapsto \nabla g(\B\beta)$, and let $\hat{L}$ be any constant satisfying $\hat{L} > L$. Update~\eqref{eq:iht} is guaranteed to converge to a stationary point if $\tau = 1/\hat{L}$ (e.g., see \citealt{PenalizedIHT,fastbestsubset}). Note that $\tilde{\B\beta}$ is a fixed point for~\eqref{eq:iht} if it satisfies \eqref{eq:CWminima} with $\hat{L}_i$ replaced with $\hat{L}$. The component-wise Lipschitz constant $L_i$ always satisfies $L_i \leq L$. For high-dimensional problems, we may have $L_i \ll L$ (see discussions in \citealt{BeckSparsityConstrained,fastbestsubset} for problems where $L$ grows with $p$ but $L_i$ is constant). Hence, the CD optimality conditions in~\eqref{eq:CWminima} are more restrictive than IHT---justifying a part of the hierarchy mentioned in~\eqref{eq:hierarchy}. An important practical consequence of this result is that CD may lead to solutions of higher quality than IHT.

\begin{remark}
A solution $\B\beta^*$ that satisfies the CD or IHT stationarity conditions is a local minimizer in the traditional sense used in nonlinear optimization.\footnote{That is, given a stationary point $\B\beta^*$, there is a small $\epsilon>0$ such that any $\B\beta$ lying in the set $\|\B\beta - \B\beta^*\|_2 \leq \epsilon$ will have an objective that is at least as large as the current objective value $P(\B\beta^*)$.} Thus, we use the terms stationary point and local minimizer interchangeably in our exposition.
\end{remark}

\subsection{Local Combinatorial Search} \label{section:localsearch}
We propose a local combinatorial search algorithm to improve the quality of solutions obtained by Algorithm 1. Given a solution from Algorithm~1, the idea is to perform small perturbations to its support in an attempt to improve the objective. This approach has been recently shown to be very effective (e.g, in terms of statistical performance) for $\ell_0$-regularized regression~\citep{fastbestsubset}, especially under difficult statistical settings (high feature correlations or $n$ is small compared to $p$). 
Here, we extend the approach of~\citet{fastbestsubset} to general loss 
functions, discussed in Section~\ref{section:cd}. 
As we consider a general loss function, performing exact local minimization becomes computationally expensive---we thus resort to an approximate minimization scheme. This makes our approach different from the least squares setting considered in~\citet{fastbestsubset}.

Our local search algorithm is iterative. It performs the following two steps at every iteration $t$: 
\begin{enumerate}
\item \textbf{Coordinate Descent}: We run cyclic CD (Algorithm 1) initialized from the current solution, to obtain a solution $\B{\beta}^{t}$ with support $S$.
\item \textbf{Combinatorial Search}: We attempt to improve $\B{\beta}^{t}$ by 
making a change to its current support $S$ via a \emph{swap} operation. In particular, we search for two subsets of coordinates $S_1 \subset S$ and $S_2 \subset S^c$, each of size at most $m$, such that removing coordinates $S_1$ from the support, adding $S_2$ to the support, and then optimizing over the coefficients in $S_2$, improves the current objective value. 
\end{enumerate}
To present an optimization formulation for the combinatorial search step (discussed above), we introduce some notation. Let $U^S$ denote a $p \times p$ matrix whose $i$-th row is $\B{e}_i^T$ if $i \in S$ and zero otherwise. Thus, for any $\B{\beta} \in \mathbb{R}^{p}$, $(U^S \B{\beta})_i = \beta_i$ if $i \in S$ and $(U^S \B{\beta})_i = 0$ if $i \notin S$. The combinatorial search step solves the following optimization problem:
\begin{equation} \label{eq:swaps}
 \min_{S_{1}, S_{2}, \B{\beta}} ~~ P(\B{\beta}^{t} - U^{S_1} \B{\beta}^{t} + U^{S_2} \B{\beta}) ~~~~~ \text{s.t. } ~~~~ S_1 \subset S, S_2 \subset S^c, |S_1| \leq m, |S_2| \leq m,
\end{equation}
where the optimization variables are the subsets $S_{1}$, $S_{2}$ and the coefficients of $\B\beta$ restricted to $S_{2}$.
If there is a feasible solution $\hat{\B{\beta}}$ to \eqref{eq:swaps} satisfying $P(\hat{\B{\beta}}) < P(\B{\beta}^{t})$, then we move to $\hat{\B{\beta}}$. Otherwise, the current solution $\B{\beta}^{t}$ cannot be improved by swapping subsets of coordinates, and the algorithm terminates. We summarize the algorithm below.
\begin{tcolorbox}[colback=white]
\centering
\textbf{Algorithm 2: CD with Local Combinatorial Search}
\begin{itemize}
\item \textbf{Input:} Initialization $\hat{\B{\beta}}^{0}$ and swap subset size $m$.
\item \textbf{Repeat for $t = 1, 2, \dots$: } \begin{enumerate}
                        \item ${\B{\beta}}^{t} \gets$ Output of cyclic CD initialized from $\hat{\B{\beta}}^{t-1}$. Let $S \gets \text{Supp}(\B{\beta}^{t})$.
                        \item Find a feasible solution $\hat{\B{\beta}}$ to \eqref{eq:swaps} satisfying $P(\hat{\B{\beta}}) < P(\B{\beta}^{t})$.
\item If Step 2 succeeds, then set $\hat{\B{\beta}}^{t} \gets \hat{\B{\beta}}$. Otherwise, if Step 2 fails, \textbf{terminate}.
\end{enumerate}
\end{itemize}
\end{tcolorbox}
Theorem~\ref{theorem:localsearch} shows that Algorithm 2 terminates in a finite number of iterations and provides a description of the resulting solution. 
\begin{theorem} \label{theorem:localsearch}
Let $\{ \B{\beta}^{t} \}$ be the sequence of iterates generated by Algorithm~2. Then, under Assumption~\ref{assumption:strongconvexity}, $\B{\beta}^{t}$ converges in finitely many steps to a solution $\B{\beta}^{*}$ (say). Let $S = \text{Supp}(\B{\beta}^{*})$. Then, $\B{\beta}^{*}$ satisfies the stationary conditions in~\eqref{eq:CWminima} (see Theorem~\ref{theorem:cdconvergence}). In addition, for every $S_1 \subset S$ and $S_2 \subset S^c$ with $|S_1| \leq m$, $|S_2| \leq m$, the solution $\B{\beta}^{*}$ satisfies the following condition:
\begin{equation} \label{eq:inescapable}
 P(\B{\beta}^{*}) \leq \min_{\B{\beta}} ~~ P(\B{\beta}^{*} - U^{S_1} \B{\beta}^{*} + U^{S_2} \B{\beta}).
\end{equation}
\end{theorem}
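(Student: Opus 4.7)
I need to establish three facts: (i) Algorithm~2 terminates in finitely many outer iterations, (ii) the final solution $\B{\beta}^{*}$ satisfies the CD stationarity conditions~\eqref{eq:CWminima}, and (iii) it satisfies the swap-inescapability condition~\eqref{eq:inescapable}. Facts (ii) and (iii) will fall out almost immediately: once termination is established, $\B{\beta}^{*}$ is the output of the CD run in the final Step~1, so~\eqref{eq:CWminima} holds by Theorem~\ref{theorem:cdconvergence}(2); and the termination criterion of Step~2 is exactly the negation of the existence of an improving swap, which is~\eqref{eq:inescapable}. The real work lies in (i).

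\textbf{Finite termination via support counting.} The key observation is that, under Assumption~\ref{assumption:strongconvexity}, the CD output $\B{\beta}^{t}$ in Step~1 is uniquely determined by its support $S_{t} \eqdef \text{Supp}(\B{\beta}^{t})$. Indeed, Theorem~\ref{theorem:cdconvergence}(2) tells us that the nonzero coefficients on $S_{t}$ solve $\min_{\B{\beta}_{S_{t}}} G(\B{\beta}_{S_{t}})$, while the entries off $S_{t}$ are zero. Under Assumption~1(1), strong convexity of the continuous regularizer makes this minimizer unique globally. Under Assumption~1(2), the objective is non-increasing along the algorithm, so $\lambda_{0} \| \B{\beta}^{t} \|_{0} \leq P(\B{\beta}^{t}) \leq P(\hat{\B{\beta}}^{0}) < u \lambda_{0}$, which forces $|S_{t}| < u$ and brings the restricted strong convexity into play on $S_{t}$, again pinning down the minimizer uniquely. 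Hence $P(\B{\beta}^{t})$ is a function of $S_{t}$ alone. Now, at every non-terminal outer iteration Step~2 produces $\hat{\B{\beta}}^{t}$ with $P(\hat{\B{\beta}}^{t}) < P(\B{\beta}^{t})$ by construction, and Step~1 at iteration $t+1$ only decreases the objective further, giving $P(\B{\beta}^{t+1}) < P(\B{\beta}^{t})$ strictly. Distinct iterates must therefore carry distinct supports, and since there are at most $2^{p}$ supports, Algorithm~2 terminates in finitely many outer iterations.

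\textbf{Closing and main obstacle.} Letting $\B{\beta}^{*}$ be the solution at termination, $\B{\beta}^{*}$ is the output of a CD run and so satisfies~\eqref{eq:CWminima} by Theorem~\ref{theorem:cdconvergence}(2). Termination of Step~2 means there is no feasible $\hat{\B{\beta}}$ to~\eqref{eq:swaps} with $P(\hat{\B{\beta}}) < P(\B{\beta}^{*})$, which is precisely~\eqref{eq:inescapable}. The main delicacy to watch in writing out the proof is the restricted-strong-convexity case of Assumption~\ref{assumption:strongconvexity}: I need to carefully propagate the invariant $P(\B{\beta}^{t}) < u \lambda_{0}$ across both Step~1 and Step~2 so that the uniqueness-of-minimizer argument (and thus the support-counting step) remains valid throughout the run.
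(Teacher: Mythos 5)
Your proposal is correct and follows essentially the same route as the paper: finite termination is obtained by observing that each CD output attains the minimum of $P$ over its support (so its objective value is determined by the support alone), that the objective strictly decreases across outer iterations, and hence no support can recur among the finitely many possible supports; the stationarity conditions~\eqref{eq:CWminima} and the inescapability condition~\eqref{eq:inescapable} then follow immediately from Theorem~\ref{theorem:cdconvergence} and the failure of Step~2 at termination. The only cosmetic difference is that you invoke uniqueness of the restricted minimizer to pin down $\B{\beta}^{t}$ itself, whereas the paper only needs that the \emph{value} $P(\B{\beta}^{t}) = \min\{P(\B{\beta}) \mid \text{Supp}(\B{\beta}) = S_{t}\}$ is a function of $S_{t}$, which holds without any uniqueness argument.
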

Algorithm 2 improves the solutions obtained from Algorithm~1. This observation along with the discussion in Section~\ref{section:cd}, establishes the hierarchy of local minima in \eqref{eq:hierarchy}. The choice of $m$ in Algorithm 2 controls the quality of the local minima returned---larger values of $m$ will lead to solutions with better objectives. For a sufficiently large value of $m$, Algorithm~2 will deliver a global minimizer of Problem \eqref{eq:lagrangian}. The computation time of solving Problem~\eqref{eq:swaps} increases with $m$. We have observed empirically that small choices of $m$ (e.g., $m=1$) can 
lead to a global minimizer of Problem~\eqref{eq:lagrangian} even for some challenging high-dimensional problems where the features are highly correlated (see the experiments in Section \ref{sec:experiments}). 

Problem~\eqref{eq:swaps} can be formulated using MIP---this is discussed in Section~\ref{sec:MIO}, where we also present methods to solve it for large problems. The MIP-based framework allows us to (i) obtain good feasible solutions (if they are available) or (ii) certify (via dual bounds) that the current solution cannot be improved by swaps corresponding to size $m$. Note that due to the restricted search space, solving~\eqref{eq:swaps} for small values of $m$ can be much easier than solving Problem~\eqref{eq:lagrangian} using MIP solvers.  A solution $\B\beta^*$ obtained from Algorithm~2 has an appealing interpretation: being a fixed point of~\eqref{eq:inescapable}, $\B\beta^*$ cannot be improved by locally perturbing its support. This serves as a certificate describing the quality of the current (locally optimal) solution $\B\beta^*$.

In what follows, we present a fast method to obtain a good solution to Problem~\eqref{eq:swaps} for the special case of $m=1$.

\textbf{Speeding up Combinatorial Search when $m=1$:}
For Problem~\eqref{eq:swaps}, we first check if removing variable $i$, without adding any new variables to the support, improves the objective, i.e., $P(\B{\beta}^{t} - \B{e}_i {\beta}_i^{t}) < P(\B{\beta}^{t})$. If the latter inequality holds, we declare a success in Step~2 (Algorithm~2). Otherwise, we find a feasible solution to Problem~\eqref{eq:swaps} by solving 
\begin{equation} \label{eq:contmin}
\min_{\beta_j}~~ G(\B{\beta}^{t} - \B{e}_i {\beta}_i^{t} + \B{e}_j \beta_j )
\end{equation}
for every pair $S_{1}=\{i\}$ and $S_{2}=\{j\}$. Performing the full minimization in~\eqref{eq:contmin} for every $(i,j)$ can be expensive---so we propose an approximate scheme that is found to work relatively well in our numerical experience. 
We perform a \emph{few} proximal gradient updates by applying the thresholding operator defined in \eqref{eq:thresholding} with the choice $\hat{L}_j = L_j$, $\lambda_0 = 0$, and using $\beta_j=0$ as an  initial value, to approximately minimize~\eqref{eq:contmin}---this helps us identify if the inclusion of coordinate $j$ leads to a success in Step~2.

The method outlined above requires approximately solving Problem~\eqref{eq:contmin} for $|S|(p-|S|)$ many $(i,j)$-pairs (in the worst case). This cost can be further reduced if we select $j$ from a small subset of coordinates outside the current support $S$, i.e., 
$j \in J \subset S^c$, where $|J| < p-|S|$.
We choose $J$ so that it corresponds to the $q$ largest (absolute) values of the gradient $|\nabla_{j} g(\B{\beta}^{t} - \B{e}_i {\beta}_i^{t})|$, $j \in S^c$. As explained in Section~\ref{sec:choice-J}, this choice of $J$ ensures that we search among coordinates $j \in S^c$ that lead to the maximal decrease in the current objective with one step of a proximal coordinate update initialized from $\beta_{j}=0$. 
We summarize the proposed method in Algorithm 3.
\begin{figure}[tb]
\begin{tcolorbox}[colback=white]
\begin{itemize}
\item[] \textbf{Algorithm~3: Fast Heuristic for Local Search when $m=1$}
\item \textbf{Input}: Restricted set size $q \in \mathbb{Z}_{+}$ such that $q \leq p - |S|$.
\item \textbf{For every $i \in S$}:
  \begin{enumerate}
  \item If $P(\B{\beta}^{t} - \B{e}_i {\beta}_i^{t}) < P(\B{\beta}^{t})$ then \textbf{terminate} and return $\B{\beta}^{t} - \B{e}_i {\beta}_i^{t}$.
  \item Compute $\nabla_{S^c} g(\B{\beta}^{t} - \B{e}_i {\beta}_i^{t})$ and let $J$ be the set of indices of the $q$ components with the largest values of $|\nabla_j g(\B{\beta}^{t} - \B{e}_i {\beta}_i^{t})|$ for $j \in S^c$.
  \item For every $j \in J$:
  
  Solve $\hat{\beta}_j \in \argmin_{\B{\beta}_{j} \in \mathbb{R}} G(\B{\beta}^{t} - \B{e}_i {\beta}_i^{t} + \B{e}_j \beta_j)$ by iteratively applying the thresholding operator in~\eqref{eq:thresholding} (with $\hat{L}_i = L_i$ and $\lambda_0 = 0$).
  If $P(\B{\beta}^{t} - \B{e}_i {\beta}_i^{t} + \B{e}_j \hat{\beta}_j) < P(\B{\beta}^{t})$, \textbf{terminate} and return $\B{\beta}^{t} - \B{e}_i {\beta}_i^{t} + \B{e}_j \hat{\beta}_j$. 
  \end{enumerate}
\end{itemize}
\end{tcolorbox}
\end{figure}

The cost of applying the thresholding operator in step 3 of Algorithm 3 is $\mathcal{O}(n)$.\footnote{Assuming that the derivative of $f(\cdot,v)$ w.r.t the first argument can be computed in $\mathcal{O}(1)$, which is the case for common loss functions that we consider here.} For squared error loss, the cost can be improved to $\mathcal{O}(1)$ by reusing previously computed quantities from CD (see \citealt{fastbestsubset} for details). Our numerical experience suggests that using the above heuristic with values of $q \approx 0.05 \times p$ often leads to the same solutions returned by full exhaustive search. Moreover, when some of the features are highly correlated, Algorithm 2 with the heuristic above (or solving Problem~\ref{eq:swaps} exactly) performs better in terms of variable selection and prediction performance compared to state-of-the-art sparse learning algorithms (see Section~\ref{exp:varysamples} for numerical results).

\subsection{Solutions for the Cardinality Constrained Formulation} \label{sec:algo-IHT}
Algorithms~1 and 2 deliver good solutions for the $\ell_0$-penalized problem~\eqref{eq:lagrangian}. 
We now discuss how they can be used to obtain solutions to the cardinality constrained version:
\begin{align} \label{eq:constrainedopt}
   \min_{\B{\beta} \in \mathbb{R}^p} G(\B{\beta})~~~~\text{s.t.}~~~~\| \B{\beta} \|_0 \leq k,
\end{align}
where $k$ controls the support size of $\B\beta$. While the unconstrained formulation~\eqref{eq:lagrangian} is amenable to fast CD-based algorithms, some support sizes are often skipped as $\lambda_0$ is varied. For example, if we decrease $\lambda_0$ to $\lambda'_0$, the support size of the new solution can differ by more than one, even if $\lambda'_0$ is taken to be arbitrarily close to $\lambda_0$. On the other hand, formulation~\eqref{eq:constrainedopt} can typically return a solution with any desired support size,\footnote{Exceptions can happen if for a subset $T \subset [p]$ of size $k$, the minimum of $\B\beta_{T} 
\mapsto G(\B{\beta}_{T})$ has some coordinates in $\B\beta_{T}$ exactly set to zero. \label{footnote:degenerate}} and it  may be preferable over the unconstrained formulation in some applications due to its explicit control of the support size.

Suppose we wish to obtain a solution to Problem~\eqref{eq:constrainedopt} for a support size $k$  that is not available from a sequence of solutions from~\eqref{eq:lagrangian}.
We propose to apply the IHT algorithm on Problem~\eqref{eq:constrainedopt}, initialized by a solution from Algorithm 1 or 2. This leads to the following update sequence
\begin{equation} \label{eq:ihtcard}
\beta^{l+1} \gets \argmin_{ \|\B{\beta} \|_0 \leq k  }  \left\{\frac{1}{2 \tau}  \left\| \B{\beta} - \left(\B{\beta}^{l} - \tau \nabla g(\B{\beta}^{l})  \right) \right\|_2^2 + \lambda_1 \| \B{\beta} \|_1 + \lambda_2 \| \B{\beta} \|_2^2 \right\},
\end{equation}
for $l \geq 0$, with initial solution $\B\beta^0$ (available from the $\ell_0$-penalized formulation) and $\tau >0$ is a step size (e.g., see Theorem~\ref{theorem:ihtconvergence}). We note that update \eqref{eq:ihtcard} typically returns a solution of support size $k$ but there can be degenerate cases where a support size of $k$ is not possible (see footnote \ref{footnote:degenerate}).

\citet{BeckSparsityConstrained} has shown that greedy CD-like algorithms perform better than IHT for a class of problems similar to \eqref{eq:constrainedopt} (without $\ell_1$-regularization). However, it is computationally expensive to apply greedy CD methods to~\eqref{eq:constrainedopt} for the problem sizes we study here. Note that since we initialize IHT with a solution from Problem~\eqref{eq:lagrangian}, it converges rapidly to a high-quality feasible solution for Problem~\eqref{eq:constrainedopt}.


Theorem~\ref{theorem:ihtconvergence}, which follows from~\citet{lowsnr}, establishes that IHT is guaranteed to converge, and provides a characterization of its fixed points.

\begin{theorem} \label{theorem:ihtconvergence}
Let $\{ \B{\beta}^{l} \}$ be a sequence generated by the IHT algorithm 
updates~\eqref{eq:ihtcard}. Let $L$ be the Lipschitz constant of $\nabla g(\B{\beta})$ and let $\hat{L}> L$. Then, the sequence $\{ \B{\beta}^{l} \}$ converges for a step size $\tau ={1}/{\hat{L}}$. Moreover, $\B{\beta}^{*}$ with support $S$ is a fixed point of~\eqref{eq:ihtcard} iff  $\| \B{\beta}^{*} \|_0 \leq k$,
\begin{align*}
\B{\beta}^{*}_S \in \argmin_{\B{\beta}_S}~G(\B{\beta}_S) \label{eq:ihtstationary} \quad \text{ and } 
\quad |\nabla_{i} g(\B{\beta}^{*})| \leq  \delta_{(k)} \quad \text{ for } i \in S^c,
\end{align*}
where $\delta_{j}=|\hat{L} \beta^{*}_{j} - {\nabla_{j} g(\B{\beta}^{*})}|$ for $j \in [p]$ and 
$\delta_{(k)}$ is the $k$th largest value of $\{\delta_{j}\}_{1}^{p}$.
\end{theorem}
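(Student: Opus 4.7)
The plan is to analyze the IHT update as a majorization-minimization step, use a Descent Lemma argument to obtain sufficient decrease, then exploit the separable structure of the IHT map to characterize its fixed points. Throughout, define the quadratic surrogate
\[
Q_{\hat L}(\B\beta;\B\alpha) \eqdef g(\B\alpha) + \langle \nabla g(\B\alpha), \B\beta - \B\alpha\rangle + \tfrac{\hat L}{2}\|\B\beta-\B\alpha\|_2^2 + \lambda_1\|\B\beta\|_1 + \lambda_2\|\B\beta\|_2^2.
\]
Completing the square in \eqref{eq:ihtcard} shows that the IHT update is exactly $\B\beta^{l+1}\in\argmin_{\|\B\beta\|_0\leq k}Q_{\hat L}(\B\beta;\B\beta^l)$.

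First I would prove sufficient decrease. The standard Descent Lemma applied to $g$ (using the Lipschitz constant $L$ of $\nabla g$) yields $g(\B\beta)\leq g(\B\alpha)+\langle\nabla g(\B\alpha),\B\beta-\B\alpha\rangle+\tfrac{L}{2}\|\B\beta-\B\alpha\|_2^2$, hence $G(\B\beta)\leq Q_{\hat L}(\B\beta;\B\alpha)-\tfrac{\hat L-L}{2}\|\B\beta-\B\alpha\|_2^2$. Since $\B\beta^l$ is feasible for the IHT subproblem and $Q_{\hat L}(\B\beta^l;\B\beta^l)=G(\B\beta^l)$, the minimizing property of $\B\beta^{l+1}$ gives
\[
G(\B\beta^{l+1})\leq Q_{\hat L}(\B\beta^{l+1};\B\beta^l)-\tfrac{\hat L-L}{2}\|\B\beta^{l+1}-\B\beta^l\|_2^2\leq G(\B\beta^l)-\tfrac{\hat L-L}{2}\|\B\beta^{l+1}-\B\beta^l\|_2^2.
\]
Because $G$ is bounded below (the loss $g$ is nonnegative), telescoping gives $\sum_l\|\B\beta^{l+1}-\B\beta^l\|_2^2<\infty$, and in particular $\|\B\beta^{l+1}-\B\beta^l\|_2\to 0$.

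Next I would argue convergence of the whole sequence and extract the fixed-point characterization from the separable structure. Observe that the IHT subproblem separates: setting $u_i^l=\beta_i^l-\nabla_i g(\B\beta^l)/\hat L$, the problem $\min_{\|\B\beta\|_0\leq k}Q_{\hat L}(\B\beta;\B\beta^l)$ reduces to choosing $k$ coordinates that maximize the per-coordinate gain $\Delta_i^l\eqdef q_i(0)-\min_\beta q_i(\beta)$ where $q_i(\beta)=\tfrac{\hat L}{2}(\beta-u_i^l)^2+\lambda_1|\beta|+\lambda_2\beta^2$, and then setting each selected coordinate to its unconstrained minimizer $\bar\beta_i$. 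A direct computation shows $\Delta_i^l$ is strictly monotone increasing in $|u_i^l|$, which equals $\delta_i^l/\hat L$ in the theorem's notation; hence the top-$k$ ranking by $\Delta_i^l$ coincides with the top-$k$ ranking by $\delta_i^l$. Using $\|\B\beta^{l+1}-\B\beta^l\|_2\to 0$ and continuity of $\nabla g$, any limit point $\B\beta^*$ of the bounded sequence is a fixed point of \eqref{eq:ihtcard}; Assumption~\ref{assumption:strongconvexity} provides (restricted) strict convexity on the limit support so that after stabilization the iterates reduce to proximal gradient on a strictly convex problem, yielding a unique limit. At a fixed point with $\text{Supp}(\B\beta^*)=S$: (i) for $i\in S^c$ we have $\beta_i^*=0$, so $\delta_i=|\nabla_i g(\B\beta^*)|$, and the top-$k$ property becomes $\delta_i\leq\delta_{(k)}$; (ii) for $i\in S$, coordinate-wise fixed-point-ness reads $\beta_i^*=\bar\beta_i$, which is precisely the subgradient optimality condition $0\in\partial_i G(\B\beta^*)$ on the support and is equivalent (by convexity of $\B\beta_S\mapsto G(\B\beta_S)$) to $\B\beta_S^*\in\argmin_{\B\beta_S}G(\B\beta_S)$. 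The converse is obtained by checking each condition reverses to produce a fixed point.

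The hardest part will be the support-stabilization step: unlike the $\ell_0$-penalized setting (where a hard threshold of size $\sqrt{2\lambda_0/(\hat L+2\lambda_2)}$ cleanly separates zero from nonzero coordinates), here the effective ``gap'' is dictated by the spread between the $k$th and $(k{+}1)$st largest values of $\delta_i^*$ in the limit. In principle, ties in this ordering could allow the support of $\B\beta^l$ to oscillate while $\|\B\beta^{l+1}-\B\beta^l\|_2\to 0$. Ruling this out rigorously requires combining the monotone descent of $G$ with the strict/strong convexity given by Assumption~\ref{assumption:strongconvexity}, which forces every subsequential limit to satisfy the same restricted optimality conditions and hence pins down a unique fixed point.
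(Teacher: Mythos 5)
The paper does not actually prove this theorem: it is stated as a consequence of the results in \citet{lowsnr}, and no argument appears in the appendix. Your majorization--minimization route is the natural one and much of it is sound: the identification of the update \eqref{eq:ihtcard} with $\argmin_{\|\B\beta\|_0\le k}Q_{\hat L}(\B\beta;\B\beta^l)$, the sufficient-decrease inequality $G(\B\beta^{l+1})\le G(\B\beta^l)-\tfrac{\hat L-L}{2}\|\B\beta^{l+1}-\B\beta^l\|_2^2$, the conclusion $\|\B\beta^{l+1}-\B\beta^l\|_2\to 0$, and the observation that the per-coordinate gain of the separable subproblem is monotone in $|u_i|=\delta_i/\hat L$ (so that top-$k$ selection by gain agrees with top-$k$ selection by $\delta_i$) are all correct, and they deliver the fixed-point characterization essentially as stated, modulo ties and the degenerate case $|S|<k$ that the paper itself relegates to a footnote.

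There are, however, two genuine gaps. First, you invoke ``the bounded sequence'' without justification; $G$ need not have bounded level sets when $\lambda_1=\lambda_2=0$, so boundedness must be argued (e.g., from $\lambda_2>0$, or from restricted strong convexity of $g$ on supports of size $\le k$, which makes the restriction of $g$ to each support coercive). Second, and more importantly, the step you flag as hardest is in fact not closed by your sketch. Knowing that every subsequential limit satisfies ``the same restricted optimality conditions'' does not pin down a unique limit: there can be many distinct fixed points, on different supports, all satisfying those conditions, and $\|\B\beta^{l+1}-\B\beta^l\|_2\to 0$ alone does not prevent the iterates from drifting along a continuum. The missing ingredient is an argument that the limit set is a single point --- for instance: the limit set of a bounded sequence with vanishing successive differences is connected; every point of it is a fixed point; under Assumption~\ref{assumption:strongconvexity} there is at most one fixed point per support, so the fixed-point set is finite, hence discrete; a connected subset of a discrete set is a singleton. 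Without some such closing step (or a direct support-stabilization lemma in the spirit of the paper's Lemma~\ref{lemma:cddecrease}, which is unavailable here precisely because the cardinality-constrained problem has no $\lambda_0$-induced gap), the claim that ``the sequence $\{\B\beta^l\}$ converges'' remains unproved. Note also that the theorem as stated carries no explicit assumption beyond $\hat L>L$, whereas your argument for uniqueness of the limit imports Assumption~\ref{assumption:strongconvexity}; this discrepancy should be made explicit.
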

Lemma~\ref{lemma:comparison} shows that if a solution obtained by Algorithm 1 or 2 has a support size $k$, then it is a fixed point for the IHT update~\eqref{eq:ihtcard}.
\begin{lemma} \label{lemma:comparison}
Let $\gamma>1$ be a constant and $\B{\beta}^{*}$ with support size $k$ be a solution for 
Problem~\eqref{eq:lagrangian} obtained by using Algorithm 1 or 2 with $\hat{L}_i = \gamma L_i$. Set $\hat{L} = \gamma L$ and $\tau = {1}/{\hat{L}}$ in \eqref{eq:ihtcard}. Then, $\B{\beta}^{*}$ is a fixed point of the update~\eqref{eq:ihtcard}.
\end{lemma}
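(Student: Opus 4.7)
The plan is to verify directly the three conditions characterizing an IHT fixed point from Theorem~\ref{theorem:ihtconvergence}. First, the cardinality requirement $\|\B{\beta}^*\|_0 \leq k$ is immediate from the hypothesis. Second, the condition $\B{\beta}^*_S \in \argmin_{\B{\beta}_S} G(\B{\beta}_S)$ is already one of the CD/local-search stationarity conditions in~\eqref{eq:CWminima}, so it transfers for free. All the work is concentrated in verifying the gradient-dominance condition $|\nabla_i g(\B{\beta}^*)| \leq \delta_{(k)}$ for every $i \in S^c$.

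For this last step, I would compute $\delta_j$ for $j \in S$ by exploiting the first-order optimality condition on $S$: since $\B{\beta}^*_S$ minimizes $G$ restricted to $S$ and each $\beta^*_j$ is nonzero, differentiating $G$ in the direction of $\beta_j$ yields $\nabla_j g(\B{\beta}^*) = -\lambda_1 \sign(\beta^*_j) - 2\lambda_2 \beta^*_j$. Substituting into $\delta_j = |\hat{L}\beta^*_j - \nabla_j g(\B{\beta}^*)|$ gives $\delta_j = (\hat{L} + 2\lambda_2)|\beta^*_j| + \lambda_1$, using that the two contributions share the sign of $\beta^*_j$. The CD magnitude bound $|\beta^*_j| \geq \sqrt{2\lambda_0/(\hat{L}_j + 2\lambda_2)}$ from~\eqref{eq:CWminima}, combined with $\hat{L}_j \leq \hat{L}$ (which is where the assumption $\hat{L} = \gamma L \geq \gamma L_j = \hat{L}_j$ enters since $L_j \leq L$), then yields the uniform lower bound $\delta_j \geq \lambda_1 + \sqrt{2\lambda_0(\hat{L} + 2\lambda_2)}$ for every $j \in S$.

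For $i \in S^c$, we have $\beta^*_i = 0$, so $\delta_i = |\nabla_i g(\B{\beta}^*)|$, and the third line of~\eqref{eq:CWminima} supplies the matching upper bound $\delta_i \leq \lambda_1 + \sqrt{2\lambda_0(\hat{L}_i + 2\lambda_2)} \leq \lambda_1 + \sqrt{2\lambda_0(\hat{L} + 2\lambda_2)}$, once again using $\hat{L}_i \leq \hat{L}$. Thus every $\delta_i$ for $i \in S^c$ is dominated by every $\delta_j$ for $j \in S$. Because $|S| = k$, the $k$ largest entries of $\{\delta_j\}_{j=1}^p$ are exactly those indexed by $S$, so $\delta_{(k)} = \min_{j \in S} \delta_j \geq \delta_i$ for every $i \in S^c$, finishing the verification.

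The only mildly subtle point is that the lower bound on $\delta_j$ (for $j \in S$) and the upper bound on $\delta_i$ (for $i \in S^c$) must meet at the \emph{same} pivot $\lambda_1 + \sqrt{2\lambda_0(\hat{L} + 2\lambda_2)}$; this is exactly what the inequality $\hat{L}_i \leq \hat{L}$ (coming from $\hat{L}_i = \gamma L_i$ and $\hat{L} = \gamma L$) delivers on both sides. Beyond that, the argument is purely bookkeeping: assemble the CD stationarity conditions in~\eqref{eq:CWminima} and match them one-for-one to the IHT fixed-point characterization of Theorem~\ref{theorem:ihtconvergence}.
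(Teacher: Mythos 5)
Your proof is correct and follows essentially the same route as the paper's: derive $\nabla_j g(\B{\beta}^{*}) = -\lambda_1 \sign(\beta^{*}_j) - 2\lambda_2 \beta^{*}_j$ from the zero-(sub)gradient condition on $S$, lower-bound $\delta_j = (\hat{L}+2\lambda_2)|\beta^{*}_j| + \lambda_1$ for $j \in S$ and upper-bound $\delta_i = |\nabla_i g(\B{\beta}^{*})|$ for $i \in S^c$ via~\eqref{eq:CWminima}, and match both bounds at the common pivot $\lambda_1 + \sqrt{2\lambda_0(\hat{L}+2\lambda_2)}$ using $\hat{L}_i \leq \hat{L}$. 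No gaps; the argument is the paper's argument.
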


The converse of Lemma \ref{lemma:comparison} is not true, i.e., a fixed point of update~\eqref{eq:ihtcard} may not be a fixed point for Algorithm 1 or 2---see our earlier discussion around hierarchy~\eqref{eq:hierarchy}.\footnote{Assuming that we obtain the same support sizes for both the constrained and unconstrained formulations.} Thus, we can generally expect the solutions returned by Algorithm 1 or 2 to be of higher quality than those returned by IHT.

The following summarizes our procedure to obtain a path of solutions for Problem~\eqref{eq:constrainedopt} (here, we assume that $(\lambda_1, \lambda_2)$ are fixed and $\lambda_0$ varies):
\begin{enumerate}
\item Run Algorithm 1 or 2 for a sequence of $\lambda_0$ values to obtain a regularization path.
\item To obtain a solution to Problem~\eqref{eq:constrainedopt} with a support size (say $k$) that is not available in Step~1, we run the IHT updates~\eqref{eq:ihtcard}. The IHT updates are initialized with a solution from Step~1 having a support size smaller than $k$.
\end{enumerate}
As the above procedure uses high-quality solutions obtained by Algorithm 1 or 2 as advanced initializations for IHT, we expect to obtain notable performance benefits as compared to using IHT alone for generating the regularization path. Also, note that if a support size $k$ is available from Step 1, then there is no need to run IHT (see  Lemma~\ref{lemma:comparison}).

\subsection{L0Learn: A Fast Toolkit for $\ell_0$-regularized Learning} \label{section:L0Learn}
We implemented the algorithms discussed above in $\texttt{L0Learn}$: a fast sparse learning toolkit written in C++ along with an R interface. We currently support the logistic and squared-hinge loss functions,\footnote{This builds upon our earlier functionality for least squares loss~\citep{fastbestsubset}.} but the toolkit can be expanded and we intend to incorporate additional loss functions in the future. Following~\citet{glmnet,fastbestsubset}, we used several computational tricks to speed up the algorithms and improve the solution quality---these include: warm starts, active sets, correlation screening, a (partially) greedy heuristic to cycle through the coordinates, and efficient methods for updating the gradients by exploiting sparsity. 
Note that Problem~\eqref{eq:lagrangian} can lead to the same solution if two values of $\lambda_0$ are close to one another. 
To avoid this issue, we dynamically select a sequence of $\lambda_0$-values---this is an extension of an idea appearing in~\citet{fastbestsubset} for the least squares loss function.

\section{Mixed Integer Programming Algorithms}
\label{sec:MIO}

We present MIP formulations and a new scalable algorithm: IGA, for solving Problem \eqref{eq:lagrangian} to optimality. Compared to off-the-shelf MIP solvers (e.g., the commercial solver Gurobi), IGA leads to certifiably optimal solutions in significantly reduced computation times. IGA also applies to the local search problem of Section~\ref{section:localsearch}. We remind the reader that while Algorithm~1 (and Algorithm~2 for small values of $m$) leads to good feasible solutions quite fast, it does not deliver certificates of optimality (via dual bounds). The MIP framework can be used to certify the quality of solutions obtained by Algorithms 1 or 2 and potentially improve upon them.  

\subsection{MIP Formulations}\label{sec: basic-MIO}
Problem \eqref{eq:lagrangian} admits the following MIP formulation:
\begin{equation} \label{eq:MIP}
\begin{aligned}
\min\limits_{  \B{\beta}, \M{z}} ~~~ \left \{ G(\B{\beta}) + \lambda_0 \sum\limits_{i=1}^{p} z_i \right\}~~ & ~~\sbt &~~~|\beta_i| \leq  \mathcal{M} z_i,~ z_i \in \left\{0,1\right\}, ~ i \in [p] 
\end{aligned}
\end{equation}
where $\mathcal{M}$ is a constant chosen large enough so that some optimal solution $\B{\beta}^{*}$ to \eqref{eq:mainlagrangian} satisfies $\| \B{\beta}^{*} \|_{\infty} \leq \mathcal{M}$. Algorithm~1 and Algorithm~2 (with $m=1$) can be used to obtain good estimates of $\mathcal{M}$---see \citet{bestsubset,lowsnr} for possible alternatives on choosing $\mathcal{M}$.
In~\eqref{eq:MIP}, the binary variable $z_i$ controls whether $\beta_i$ is set to zero or not.  
If $z_i = 0$ then $\beta_i =0$, while if $z_i=1$ then $\beta_{i} \in [-{\mathcal M},{\mathcal M}]$ is allowed to be `free'. For the hinge loss with $q=1$, the problem is a Mixed Integer Linear Program (MILP). 
For $q=2$ and the hinge loss function,~\eqref{eq:MIP} becomes a Mixed Integer Quadratic Program (MIQP). Similarly, the squared hinge loss leads to a MIQP (for both $q\in \{1, 2\}$).
MILPs and MIQPs can be solved by state-of-the-art MIP solvers (e.g., Gurobi and CPLEX) for small-to-moderate sized instances. For other nonlinear convex loss functions such as logistic loss, two approaches are possible: (i)~nonlinear branch and bound  \citep{belotti2013mixed} or (ii)~outer-approximation in which a sequence of MILPs is solved until convergence (for example, see  \citealt{BertsimasSparseClassification,bertsimas2017sparse,sato2016feature}).\footnote{In this method, one obtains a convex piece-wise linear lower bound to a nonlinear convex problem. In other words, this leads to a polyhedral outer approximation (aka outer approximation) to the epigraph of the nonlinear convex function.} We refer the reader to~\citet{lee2011mixed} for a review of related approaches.

The local combinatorial search problem in \eqref{eq:swaps} can be cast as a 
variant of~\eqref{eq:MIP} with additional constraints; and is given by the following MIP:
\begin{subequations} \label{eq:localsearcMIP}
\begin{align}
\min\limits_{  \B{\beta}, \M{z}, \B{\theta}} \quad  & G(\B{\theta}) + \lambda_0 \sum\limits_{i=1}^{p} z_i \\
\text{s.t.}~~& \B{\theta} = \B{\beta}^{t} - \sum_{i \in S} \B{e}_i \beta^{t}_i (1 - z_i) + \sum_{i \in S^c} \B{e}_i \beta_i \label{eq:dummyvar}\\
& |\beta_i| \leq  \mathcal{M} z_i, ~~ i \in S^c \\
& \sum_{i \in S} z_i \geq |S| - m \label{eq:cut1}\\
& \sum_{i \in S^c} z_i \leq m \label{eq:cut2} \\
& z_i \in \left\{0,1\right\}, ~~ i \in [p] 
\end{align}
\end{subequations}
where $S = \text{Supp}(\B{\beta}^{t})$. The binary variables $z_i, i \in [p]$ perform the role of \emph{selecting} the subsets $S_1 \subset S$ and $S_2 \subset S^c$ (described in \eqref{eq:swaps}). Particularly, for $i \in S$, $z_i = 0$ means that $i \in S_1$---i.e., variable $i$ should be removed from the current support.
Similarly, for $i \in S^c$, $z_i = 1$ means that $i \in S_2$---i.e., variable $j$ should be added to the support in~\eqref{eq:localsearcMIP}. 
Constraints~\eqref{eq:cut1} and~\eqref{eq:cut2} enforce $|S_1| \leq m$ and $|S_2| \leq m$, respectively. The constraint in~\eqref{eq:dummyvar} forces the variable $\B{\theta}$ to be equal to $\B{\beta}^{t} - U^{S_1} \B{\beta}^{t} + U^{S_2} \B{\beta}$. 

Note that~\eqref{eq:localsearcMIP} has a smaller search space compared to the full formulation in \eqref{eq:MIP}---there are additional constraints in \eqref{eq:cut1} and \eqref{eq:cut2}, and the number of free continuous variables is $|S^c|$, as opposed to $p$ in \eqref{eq:MIP}. This reduced search space usually leads to notable reductions in the runtime compared to solving Problem \eqref{eq:MIP}. 

\subsection{Scaling up the MIP via the Integrality Generation Algorithm (IGA)}\label{sec:integrality-gen}
While state-of-the-art MIP solvers and outer-approximation based MILP approaches~\citep{BertsimasSparseClassification,sato2016feature} lead to impressive improvements over earlier MIP-based approaches, they often have long run times when solving high-dimensional classification problems with large $p$ and small $n$ (e.g., $p=50,000$ and $n=1000$). Our proposed algorithm, IGA, can solve~\eqref{eq:MIP} to \emph{global} optimality, for high-dimensional instances that appear to be beyond the capabilities of current MIP-based approaches. 
Loosely speaking, IGA solves a sequence of MIP-based relaxations or subproblems of Problem~\eqref{eq:MIP}; and exits upon obtaining a global optimality certificate for~\eqref{eq:MIP}.
The aforementioned MIP subproblems are obtained by relaxing a subset of the binary variables $\{z_{i}\}_{1}^{p}$ to lie within $[0,1]$, while the remaining variables are retained as binary. Upon solving the relaxed problem and examining the integrality of the continuous $z_i$'s, we create another (tighter) relaxation by allowing more variables to be binary---we continue in this fashion till convergence. The algorithm is formally described below.

The first step in our algorithm is to obtain a good upper bound for Problem~\eqref{eq:MIP}---this can be obtained by Algorithm 1 or 2. Let $\mathcal{I}$ denote the corresponding support of this solution. 
We then consider a relaxation of Problem \eqref{eq:MIP} by allowing
the binary variables in $\mathcal I^c$ to be continuous: 
\begin{subequations} \label{eq:PMIO}
\begin{align}
\min\limits_{\B{\beta}, \M{z}} \quad  & G(\B{\beta}) + \lambda_0 \sum\limits_{i=1}^{p} z_i \\
\text{s.t.}~~~& |\beta_i| \leq  \mathcal{M} z_i, ~~ i \in [p] \\
& z_i \in [0,1], ~~ i \in \mathcal{I}^c \\
& z_i \in \left\{0,1\right\}, ~~ i \in \mathcal{I}. 
\end{align}
\end{subequations}
The relaxation~\eqref{eq:PMIO} is a MIP (and thus nonconvex). The optimal objective of Problem~\eqref{eq:PMIO} is a lower bound to Problem~\eqref{eq:MIP}. In formulation~\eqref{eq:PMIO}, we place integrality constraints on $z_i, i \in \mathcal{I}$---all remaining variables $z_{i}$, $i \in {\mathcal I}^c$ are continuous. Let $\B{\beta}^{u}, \M{z}^{u}$ be the solution obtained from the $u$-th iteration of the algorithm. Then, in iteration $(u+1)$, we set $\mathcal{I} \gets \mathcal{I} \cup \{ i \ | \ z^{u}_i \neq 0, i \in {\mathcal I}^c \}$ and solve Problem~\eqref{eq:PMIO} (with warm-starting enabled). If at some iteration $u$, the vector $\M{z}^{u}$ is integral, then solution $\B{\beta}^{u}, \M{z}^{u}$ must be optimal for Problem \eqref{eq:MIP} and the algorithm terminates. We note that along the iterations, we obtain tighter lower bounds on the optimal objective of Problem~\eqref{eq:MIP}. Depending on the available computational budget, we can decide to terminate the algorithm at an early stage with a corresponding lower bound. The algorithm is summarized below:
\begin{tcolorbox}[colback=white]
\centering
\textbf{Algorithm 4: Integrality Generation Algorithm (IGA)}
\begin{itemize}
\item Initialize $\mathcal{I}$ to the support of a solution obtained by Algorithm 1 or 2.
\item \textbf{For $u=1,2,\dots$} perform the following steps till convergence: \begin{enumerate}
                        \item[1.] Solve the relaxed MIP~\eqref{eq:PMIO} to obtain a solution $\B{\beta}^{u}, \M{z}^{u}$.
                        \item[2.] Update $\mathcal{I} \gets \mathcal{I} \cup \{ i \ | \ z^{u}_{i} \neq 0, i \in {\mathcal I}^c  \}$.
						\end{enumerate}
\end{itemize}
\end{tcolorbox}
As we demonstrate in Section~\ref{sec:experiments}, Algorithm 4 can lead to significant speed-ups: it reduces the time to solve several sparse classification instances from the order of \emph{hours} to \emph{seconds}. This allows us to solve instances with $p \approx 50,000$ and $n \approx 1000$ within reasonable computation times. These instances are much larger than what has been reported in the literature prior to this work (see for example,~\citealt{sato2016feature,BertsimasSparseClassification}).
A main reason behind the success of Algorithm~4 is that Problem~\eqref{eq:PMIO} leads to a solution $\mathbf{z}$ with very few nonzero coordinates. Hence, a small number of indices are added to $\mathcal{I}$ in step 2 of the algorithm. Since $\mathcal{I}$ is typically small, \eqref{eq:PMIO} can be often solved significantly faster than the full MIP in \eqref{eq:MIP}---the branch-and-bound algorithm has a smaller number of variables to branch on. 
Lemma~\ref{lemma:sparse} provides some intuition on why the solutions of~\eqref{eq:PMIO} are sparse.
\begin{lemma} \label{lemma:sparse}
Problem \eqref{eq:PMIO} can be equivalently written as:
\begin{subequations} \label{eq:lassolike}
\begin{align} 
\min\limits_{  \B{\beta}, \M{z}_{\mathcal{I}}} \quad  & G(\B{\beta}) + \frac{\lambda_0}{\mathcal{M}} \sum_{i \in \mathcal{I}^c}  |\beta_{i}| + \lambda_0 \sum\limits_{i \in \mathcal{I}} z_i \\ 
\text{s.t.}~~& |\beta_i| \leq  \mathcal{M} z_i, ~~ i \in \mathcal{I} 
\\ & |{\beta}_{i} | \leq \mathcal{M}, ~~~ i \in {\mathcal I}^c
\\ & z_i \in \left\{0,1\right\}, ~~ i \in \mathcal{I}. 
\end{align}
\end{subequations}
\end{lemma}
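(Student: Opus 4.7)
The plan is to derive \eqref{eq:lassolike} from \eqref{eq:PMIO} by partially minimizing out the continuous $z_i$ variables indexed by $\mathcal{I}^c$, leaving only the integer variables $z_i$, $i \in \mathcal{I}$, together with $\boldsymbol{\beta}$. The core observation is that for each $i \in \mathcal{I}^c$, the variable $z_i$ appears in the problem only through the objective term $\lambda_0 z_i$, the big-$\mathcal{M}$ constraint $|\beta_i| \leq \mathcal{M} z_i$, and the box constraint $z_i \in [0,1]$. So the inner minimization over $z_i$ (with $\beta_i$ held fixed) reduces to a one-dimensional linear program.

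Concretely, I would fix $\boldsymbol{\beta}$ and $\{z_i\}_{i \in \mathcal{I}}$ feasible for \eqref{eq:PMIO}, and for each $i \in \mathcal{I}^c$ solve $\min\{\lambda_0 z_i : |\beta_i|/\mathcal{M} \leq z_i \leq 1\}$. Since $\lambda_0 > 0$, the minimizer is $z_i^\star = |\beta_i|/\mathcal{M}$, which is feasible precisely when $|\beta_i| \leq \mathcal{M}$. Substituting $z_i^\star$ back into the objective contributes $\lambda_0 |\beta_i|/\mathcal{M}$, and the constraint $z_i \leq 1$ becomes the explicit bound $|\beta_i| \leq \mathcal{M}$. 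Summing over $i \in \mathcal{I}^c$ yields the $\ell_1$-like term $(\lambda_0/\mathcal{M}) \sum_{i \in \mathcal{I}^c} |\beta_i|$ and the box constraints displayed in \eqref{eq:lassolike}.

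To turn this into a formal equivalence, I would argue in both directions. Given any feasible $(\boldsymbol{\beta}, \mathbf{z})$ for \eqref{eq:PMIO}, the pair $(\boldsymbol{\beta}, \mathbf{z}_{\mathcal{I}})$ is feasible for \eqref{eq:lassolike} and its \eqref{eq:lassolike}-objective is no larger than its \eqref{eq:PMIO}-objective, because $\lambda_0 z_i \geq \lambda_0 |\beta_i|/\mathcal{M}$ for $i \in \mathcal{I}^c$. Conversely, given any feasible $(\boldsymbol{\beta}, \mathbf{z}_{\mathcal{I}})$ for \eqref{eq:lassolike}, we lift it to a feasible point of \eqref{eq:PMIO} by setting $z_i = |\beta_i|/\mathcal{M} \in [0,1]$ for each $i \in \mathcal{I}^c$, and the two objectives now coincide. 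Hence the optimal values agree and optimal solutions correspond.

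There is really no hard step here; the only thing to watch is to verify that the big-$\mathcal{M}$ constraints for $i \in \mathcal{I}$ are preserved unchanged and that the box constraint $|\beta_i| \leq \mathcal{M}$ for $i \in \mathcal{I}^c$ is exactly what the eliminated upper bound $z_i \leq 1$ becomes. After that, the proof is just a partial-minimization (projection) argument and can be stated in a few lines. The takeaway the authors want—why \eqref{eq:PMIO} tends to produce sparse $\boldsymbol{\beta}$ off $\mathcal{I}$—then follows from reading \eqref{eq:lassolike} as a penalized problem with an $\ell_1$ penalty on the coordinates in $\mathcal{I}^c$.
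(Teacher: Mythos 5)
Your proposal is correct and follows essentially the same route as the paper's proof: rewrite \eqref{eq:PMIO} as a nested minimization, observe that the inner problem over $z_i$ for $i \in \mathcal{I}^c$ is solved by $z_i = |\beta_i|/\mathcal{M}$, and substitute back to obtain the $\ell_1$-type penalty and the box constraint $|\beta_i| \leq \mathcal{M}$. Your explicit two-directional feasibility argument is a slightly more careful write-up of the same partial-minimization idea the paper uses.
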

We have observed empirically that in \eqref{eq:lassolike}, the $\ell_1$-regularization term $\sum_{i \in \mathcal{I}^c}  |\beta_{i}|$ encourages sparse solutions, i.e., many of the components ${\beta}_i, i \in {\mathcal{I}^c}$ are set to zero. Consequently, the corresponding $z_i$'s in~\eqref{eq:PMIO} are mostly zero at an optimal solution. The sparsity level is controlled by the regularization parameter ${\lambda_0}/{\mathcal{M}}$---larger values will lead to more $z_i$'s being set to zero in \eqref{eq:PMIO}. Thus, we expect Algorithm~4 to work well when $\lambda_0$ is set to a sufficiently large value (to obtain sufficiently sparse solutions). {Note that Problem \eqref{eq:lassolike} is different from the Lasso as it involves additional integrality constraints.}

\textbf{Optimality Gap and Early Termination:} 
Each iteration of Algorithm 4 provides an improved lower bound to Problem~\eqref{eq:MIP}. This lower bound, along with a good feasible solution (e.g., obtained from Algorithm~1 or 2), leads to an optimality gap: given an upper bound UB and a lower bound LB, the MIP optimality gap is defined by 
(UB - LB)/LB. This optimality gap serves as a certificate of global optimality. In particular, an early termination of Algorithm~4 leads to a solution with an associated certificate of optimality.

\textbf{Choice of $ ~\mathcal I$: } The performance of Algorithm~4 depends on the initial set $\mathcal{I}$. If the initial $\mathcal{I}$ is close to the support of an optimal solution to~\eqref{eq:MIP}, then our numerical experience suggests that Algorithm~4 can terminate within a few iterations. Moreover, our experiments (see Section \ref{exp:varysamples}) suggest that
Algorithm~2 can obtain an optimal or a near-optimal solution to Problem~\eqref{eq:MIP} quickly, leading to a high-quality initialization for $\mathcal{I}$.

In practice, if at iteration $u$ of Algorithm 4, the set $\{ i \ | \ z^{u}_{i} \neq 0  \}$ is large, then we add only a small subset of it to $\mathcal{I}$ (in our implementation, we choose the $10$ largest fractional $z_i$'s). Alternatively, while expanding $\mathcal I$, we can use a larger cutoff for the fractional $z_{i}$'s, i.e., we can take the indices $\{ i \ | \ z^{u}_{i} \geq \tau \}$
for some value of $\tau \in (0,1)$.
This usually helps in maintaining a small size in $\mathcal{I}$, which allows for solving the MIP subproblem \eqref{eq:PMIO} relatively quickly.

\textbf{IGA for Local Combinatorial Search:} While the discussion above was centered around the full MIP~\eqref{eq:MIP}---the IGA framework (and in particular, Algorithm 4)  extends, in principle, to the local combinatorial  search problem in~\eqref{eq:localsearcMIP} for $m\geq 2$.

\section{Statistical Properties: Error Bounds}
\label{sec:error-bound}
We derive non-asymptotic upper bounds on the coefficient estimation error for a family of $\ell_0$-constrained classification estimators (this includes the loss functions discussed in Section~\ref{sec:supportedloss}, among others). For our analysis, we assume that: $( \mathbf{x}_i,y_i), i \in [n]$ are i.i.d. draws from an unknown distribution $\mathbb{P}$. Using the notation of Section \ref{sec:algorithm}, we consider a loss function $f$ and define its population risk $\mathcal{L}(\B{\beta}) = \mathbb{E} \left( f \left( \langle \mathbf{x},  \B{\beta} \rangle ;  y \right)  \right)$, where the expectation is w.r.t.~the (population) distribution~$\mathbb{P}$. We let $ \B{\beta}^*$ denote a minimizer of the risk, that is:
\begin{equation} \label{def-beta0}
\B{\beta}^* \in \argmin\limits_{ \B{\beta} \in \mathbb{R}^{p}}~\mathcal{L}(\B{\beta}):= \mathbb{E} \left( f \left( \langle \mathbf{x},  \B{\beta} \rangle ;  y \right) \right).
\end{equation}
In the rest of this section, we let $k = \| \B{\beta}^*\|_0$ and $R=\| \B{\beta}^*\|_2$---i.e., the number of nonzeros and the Euclidean norm (respectively) of $\B\beta^*$. 
We assume $R\ge 1$. 
To estimate $\B\beta^*$, we consider the following estimator:\footnote{We drop the dependence of $\hat{\B\beta}$ on $R,k$ for notational convenience.}
\begin{equation} \label{learning-l0}
\hat{\B\beta} ~~ \in~~ \argmin \limits_{ \substack{ \B{\beta}  \in \mathbb{R}^p \\   \| \B{\beta}  \|_0 \le k, \ \| \B{\beta}  \|_2 \le 2R } } \ \  \frac{1}{n}  \sum_{i=1}^n f \left( \langle \mathbf{x}_i,  \B{\beta} \rangle ;  y_i \right),
\end{equation}
which minimizes the empirical loss with a constraint on the number of nonzeros in $\B\beta$ and a bound on the $\ell_2$-norm of $\B\beta$. The $\ell_2$-norm constraint in~\eqref{learning-l0} makes $\B\beta^*$ feasible for Problem~\eqref{learning-l0}; and ensures that $\hat{\B\beta}$ lies in a bounded set (which is useful for the technical analysis).

Section \ref{sec: framework} presents the assumptions we need for our analysis---see the works of~\citet{L1-SVM}, \citet{Wainwright-logreg} and \citet{quantile-reg} for related assumptions in the context of $\ell_1$-based classification and quantile regression procedures. In Section~\ref{sec:theory-main-results}, we establish a high probability upper bound on $\| \hat{\B{\beta}} - \B{\beta}^*\|_2^2$.

\subsection{Assumptions}\label{sec: framework}
We first present some assumptions for establishing the error bounds.

\textbf{Loss Function.}
We list our basic assumptions on the loss function. 
\begin{asu} \label{asu1}
	The function $t \mapsto f( t; y)$ is non-negative, convex, and Lipschitz continuous with constant $L$, that is $| f(t_1; y) - f(t_2; y) | \le L | t_1 -t_2 |, \ \forall t_1, t_2$. 
\end{asu}
We let $\partial f(t;y)$ denote a subgradient of $t \mapsto f(t;y)$---i.e., 
$f(t_2; y) - f(t_1; y) \ge \partial f(t_1;y) (t_2 -t_1), \ \forall t_1, t_2$. Note that the hinge loss satisfies Assumption 2 with $L=1$; and has a subgradient given by $\partial f(t;y) = \mathbf{1} \left(  1 -  y t \ge 0\right) y$. The logistic loss function satisfies Assumption 2 with $L=1$; and its subgradient coincides with the gradient.   

\textbf{Differentiability of the Population Risk.}
The following assumption is on the uniqueness of $\B{\beta}^*$ and differentiability of the population risk $\mathcal{L}$. 

\begin{asu} \label{asu2}
	Problem~\eqref{def-beta0} has a unique minimizer. The population risk $\B\beta \mapsto {\mathcal L}(\B\beta)$ is twice continuously differentiable, with gradient $\nabla \mathcal{L}(\B\beta)$ and Hessian $\nabla^2 \mathcal{L}(\B\beta)$.  In particular, the following holds:
	\begin{equation}\label{gradient-relation}
	\nabla \mathcal{L}(\B\beta) = \mathbb{E}\left( \partial f \left( \langle \mathbf{x}, \B\beta \rangle ;  y \right) \mathbf{x} \right).
	\end{equation}
\end{asu}
When $f$ is the hinge loss, \citet{lemma2} discuss conditions under which  Assumption \ref{asu2} holds. In particular, Assumption (A1) in~\citet{lemma2}  requires that the conditional density functions of $\mathbf{x}$ for the two classes are continuous and have finite second moments. Under this assumption, the Hessian $\nabla^2 {\mathcal L}(\B\beta)$ is well defined and continuous in $\B\beta$. Under Assumption (A4)~\citep{lemma2}, the Hessian is positive definite at an optimal solution, therefore~\eqref{def-beta0} has a unique solution. We refer the reader to~\citet{Wainwright-logreg,vdg_linear_models} for discussions pertaining to logistic regression.


\indent {\bf Restricted Eigenvalue Conditions.}
Assumption \ref{asu4} is a \textit{restricted eigenvalue condition} similar to that used in regression problems \citep{lasso-dantzig,stats-HDD}. For an integer $\ell>0$, we assume that the quadratic forms associated with the Hessian matrix $\nabla^2 {\mathcal L}(\B{\beta}^*)$ and the covariance matrix $n^{-1}\mathbf{X}^T\mathbf{X}$ are respectively lower-bounded and upper-bounded on the set of $2\ell$-sparse vectors.

\begin{asu} \label{asu4} Let $\ell>0$ be an integer. Assumption \ref{asu4}$(\ell)$ is said to hold if there exists constants $\kappa(\ell), \lambda(\ell) >0 $ such that almost surely the following holds:
	$$ \kappa(\ell) \le \inf \limits_{ \M{z} \neq 0, \| \mathbf{z}  \|_0 \le 2 \ell  } \left\{ \frac{ \mathbf{z}^T \nabla^2\mathcal{L}(\B{\beta}^*)  \mathbf{z}  }{ \|\mathbf{z}\|_2^2  } \right\}~~~~~~~\text{and}~~~~~\lambda(\ell) \ge \sup \limits_{  \M{z} \neq 0, \| \mathbf{z}  \|_0 \le 2\ell  } \left\{ \frac{ \| \mathbf{X}  \mathbf{z}  \|_2^2  }{ n \|\mathbf{z}\|_2^2  } \right\}.
	$$
\end{asu}
In the rest of this section, we consider Assumption \ref{asu4} with $\ell=k$.
Assumption (A4) in \citet{L1-SVM} for linear SVM is similar to our 
Assumption~\ref{asu4}.  For logistic regression, related assumptions appear in the literature, e.g., Assumptions A1 and A2 in~\citet{Wainwright-logreg} (in the form of a dependency and an incoherence condition on the population Fisher information matrix).

\textbf{Growth condition.}
As $\B{\beta}^*$ minimizes the population risk, we have $\nabla \mathcal{L}(\B{\beta}^*) = 0$. Under the above regularity assumptions and when Assumption \ref{asu4}$(k)$ is satisfied,  the population risk is lower-bounded by a quadratic function in a neighborhood of $\B{\beta}^*$. By continuity, we let $r(k)$ denote the maximal radius for which the following lower bound holds:
\begin{equation}\label{growth-cond-defn}
r(k) = \max \left\{ r>0 \ \Bigg\rvert \ \mathcal{L}(\B{\beta}^* + \mathbf{z} )  \ge \mathcal{L}(\B{\beta}^*) + \frac{1}{4} \kappa(k) \| \mathbf{z} \|_2^2~~~\forall \mathbf{z}~~~\text{s.t.}~~~\| \mathbf{z}  \|_0 \le 2 k , \| \mathbf{z} \|_2 \le r \right\}.
\end{equation}

Below we make an assumption on the growth condition. 
\begin{asu} \label{asu5}
	Let $\delta \in (0,1/2)$. We say that Assumption \ref{asu5}($\delta$) holds if the parameters $n,p,k,R$ satisfy:  $$\frac{24L}{\kappa(k)} \sqrt{\frac{\lambda(k)}{n} \left( k \log\left( Rp/k \right) + \log\left( 1/\delta \right) \right) } < r(k),$$
	and the following holds: $({k}/{n}) \log(p/k) \le 1$ and $7n e \le 3L \sqrt{\lambda(k) } p \log \left( p/k \right)$.
\end{asu}
Assumption~\ref{asu5} is similar to the scaling conditions in \citet{Wainwright-logreg}[Theorem~1] 
for logistic regression. \citet{quantile-reg}~also 
makes use of a growth condition for
their analysis in $\ell_1$-sparse quantile regression problems.  
Note that although Assumption~\ref{asu5} is not required to prove  Theorem~\ref{restricted-strong-convexity}, we will need it to derive the error bound of  Theorem~\ref{main-results}. We now proceed to derive an upper bound on coefficient estimation error.

\subsection{Main Result}\label{sec:theory-main-results}
We first show (in Theorem~\ref{restricted-strong-convexity}) that the loss function $f$ satisfies a form of restricted strong convexity~\citep{M-estimators} around $\B\beta^*$, a minimizer of~\eqref{def-beta0}.

\begin{theorem} \label{restricted-strong-convexity}
Let $\textbf{h} = \hat{\B{\beta}} - \B{\beta}^*$, $\delta \in (0,1/2)$, and $\tau = 6L \sqrt{\frac{\lambda(k)}{n} \left(  k\log\left( Rp/k \right) + \log\left( 1/\delta \right) \right) }$. If Assumptions \ref{asu1}, \ref{asu2}, \ref{asu4}($k$) and \ref{asu5}($\delta$) are satisfied, then with probability  at least $1 - \delta$, the following holds:
\begin{equation}\label{conclude-theorem-restricted-strong-convexity}
\begin{aligned}
\frac{1}{n} \sum_{i=1}^n f \left( \langle \mathbf{x}_i,  \B{\beta}^* + \mathbf{h}  \rangle ;  y_i \right)  - \frac{1}{n} \sum_{i=1}^n  f \left( \langle \mathbf{x}_i,  \B{\beta}^* \rangle ;  y_i \right) \\
~~~~ \ge \frac{1}{4}  \kappa(k) \left\{  \|\mathbf{h}\|_2^2 \wedge  r(k) \|\mathbf{h}\|_2  \right\}  - \tau \| \mathbf{h} \|_2\vee \tau^2.
\end{aligned}
\end{equation}
\end{theorem}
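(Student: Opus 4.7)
The plan is to decompose the empirical gap as
\begin{equation*}
\frac{1}{n}\sum_i \bigl[f(\langle \mathbf{x}_i, \B{\beta}^*+\mathbf{h}\rangle;y_i) - f(\langle \mathbf{x}_i, \B{\beta}^*\rangle;y_i)\bigr] = \bigl[\mathcal{L}(\B{\beta}^* + \mathbf{h}) - \mathcal{L}(\B{\beta}^*)\bigr] - Z_n(\mathbf{h}),
\end{equation*}
where
\begin{equation*}
Z_n(\mathbf{h}) := \bigl[\mathcal{L}(\B{\beta}^*+\mathbf{h}) - \mathcal{L}(\B{\beta}^*)\bigr] - \frac{1}{n}\sum_i\bigl[f(\langle \mathbf{x}_i, \B{\beta}^*+\mathbf{h}\rangle;y_i) - f(\langle \mathbf{x}_i,\B{\beta}^*\rangle;y_i)\bigr]
\end{equation*}
is the centered empirical process. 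Because $\hat{\B{\beta}}$ and $\B{\beta}^*$ are each $k$-sparse and $\|\hat{\B{\beta}}\|_2 \le 2R$, the direction $\mathbf{h}$ lies in $\mathcal{H} := \{\mathbf{h} \in \mathbb{R}^p : \|\mathbf{h}\|_0 \le 2k,\,\|\mathbf{h}\|_2 \le 3R\}$, so it suffices to establish (a) a deterministic lower bound of the form $\tfrac{1}{4}\kappa(k)\bigl(\|\mathbf{h}\|_2^2 \wedge r(k)\|\mathbf{h}\|_2\bigr)$ on the population gap uniformly on $\mathcal{H}$, and (b) a high-probability upper bound $Z_n(\mathbf{h}) \le \tau\|\mathbf{h}\|_2 \vee \tau^2$ uniformly on $\mathcal{H}$.

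For part (a), the case $\|\mathbf{h}\|_2 \le r(k)$ follows directly from the definition~\eqref{growth-cond-defn} since $\mathbf{h}$ is $2k$-sparse. For $\|\mathbf{h}\|_2 > r(k)$, I would set $s = r(k)/\|\mathbf{h}\|_2 \in (0,1)$ and $\tilde{\mathbf{h}} = s\mathbf{h}$; then $\tilde{\mathbf{h}}$ shares the support of $\mathbf{h}$ and has $\|\tilde{\mathbf{h}}\|_2 = r(k)$, so the growth condition yields $\mathcal{L}(\B{\beta}^*+\tilde{\mathbf{h}}) - \mathcal{L}(\B{\beta}^*) \ge \tfrac{1}{4}\kappa(k) r(k)^2$. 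Convexity of $\mathcal{L}$ (inherited from the convexity of $t \mapsto f(t;y)$ in Assumption~\ref{asu1}) gives $\mathcal{L}(\B{\beta}^*+\tilde{\mathbf{h}}) - \mathcal{L}(\B{\beta}^*) \le s\bigl[\mathcal{L}(\B{\beta}^*+\mathbf{h}) - \mathcal{L}(\B{\beta}^*)\bigr]$, and dividing by $s$ produces the linear lower bound $\tfrac{1}{4}\kappa(k) r(k)\|\mathbf{h}\|_2$. Taking the minimum of the two cases yields the $\wedge$-term in the claim.

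For part (b), I would first fix a support $S \subset [p]$ with $|S| \le 2k$ and control $\sup_{\mathrm{supp}(\mathbf{h})\subset S,\|\mathbf{h}\|_2 \le t} Z_n(\mathbf{h})$. Assumption~\ref{asu1} supplies $|f(\langle\mathbf{x},\B{\beta}^*+\mathbf{h}\rangle;y) - f(\langle\mathbf{x},\B{\beta}^*\rangle;y)| \le L|\langle \mathbf{x},\mathbf{h}\rangle|$. Symmetrization followed by the Ledoux--Talagrand contraction inequality, combined with Assumption~\ref{asu4} (which upper bounds $\sum_{j \in S}\|\mathbf{X}_j\|_2^2/n$ by $2k\lambda(k)$), gives $\mathbb{E}\sup_{\mathrm{supp}(\mathbf{h})\subset S,\|\mathbf{h}\|_2\le t}|Z_n(\mathbf{h})| \le 2Lt\sqrt{2k\lambda(k)/n}$. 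Talagrand's concentration inequality then boosts this to a high-probability bound of order $Lt\sqrt{\lambda(k)(k + \log(1/\delta'))/n}$. A union bound over the $\binom{p}{2k} \le (ep/(2k))^{2k}$ sparsity patterns with $\delta' = \delta/\binom{p}{2k}$ absorbs the combinatorial factor into a $k\log(p/k)$ term inside the square root, producing for each fixed $t$ the per-radius bound $\tau t$ with $\tau$ as in the statement. A dyadic peeling over shells $\{\mathbf{h}: 2^{j}\tau \le \|\mathbf{h}\|_2 < 2^{j+1}\tau\}$ up to $\|\mathbf{h}\|_2 \le 3R$ then converts this into the uniform bound $Z_n(\mathbf{h}) \le \tau\|\mathbf{h}\|_2 \vee \tau^2$, where the $\vee\tau^2$ term handles the innermost shell $\|\mathbf{h}\|_2 < \tau$ on which the linear bound $\tau\|\mathbf{h}\|_2$ is no longer informative.

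The main obstacle is the peeling/localization step: the dyadic covering introduces on the order of $\log_2(3R/\tau)$ extra shells, and one must verify that the associated union bound does not inflate the $\log(Rp/k)$ factor already present in $\tau$. Assumption~\ref{asu5} is calibrated precisely so that $\tau < r(k)$ (which allows the two regimes of the population bound to mesh with the two regimes of the empirical fluctuation) and so that $\log R$ is absorbed into $\log(Rp/k)$. A secondary technical wrinkle is controlling the envelope of the empirical process on each shell (required by Talagrand's inequality), which uses the feasibility constraint $\|\mathbf{h}\|_2 \le 3R$ together with Assumption~\ref{asu4} to bound $L|\langle\mathbf{x},\mathbf{h}\rangle|$ uniformly. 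Subtracting the bound in (b) from the lower bound in (a) on the event of probability at least $1-\delta$ yields~\eqref{conclude-theorem-restricted-strong-convexity}.
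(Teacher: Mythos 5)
Your decomposition of the empirical gap into the population gap minus a centered empirical process, the restriction to the set $\{\mathbf{h}:\|\mathbf{h}\|_0\le 2k,\ \|\mathbf{h}\|_2\le 3R\}$, and your part (a) are exactly the paper's argument: the paper also splits into the cases $\|\mathbf{h}\|_2\le r(k)$ (definition of $r(k)$) and $\|\mathbf{h}\|_2>r(k)$ (rescale to the sphere of radius $r(k)$ and use convexity of $\mathcal L$, via the monotonicity of $x\mapsto (g(a+x)-g(a))/x$). Where you diverge is part (b): the paper proves the uniform bound $|\Delta(\mathbf{z})-\mathbb E\Delta(\mathbf{z})|\le \tau\|\mathbf{z}\|_2\vee\tau^2$ by applying Hoeffding's inequality pointwise (conditionally on $\M{X}$) and then passing to uniformity with an $\epsilon$-net over each of the $\binom{p}{2k}$ supports, whereas you propose symmetrization, Ledoux--Talagrand contraction, Talagrand/Bousquet concentration, a union bound over supports, and dyadic peeling.

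The gap is in the concentration step of your part (b). Talagrand's inequality needs a uniform envelope $U$ on the summands, and the only envelope Assumption~\ref{asu4} supplies is $|f(\langle\mathbf{x}_i,\B{\beta}^*+\mathbf{h}\rangle;y_i)-f(\langle\mathbf{x}_i,\B{\beta}^*\rangle;y_i)|\le L|\langle\mathbf{x}_i,\mathbf{h}\rangle|\le L\sqrt{n\lambda(k)}\,\|\mathbf{h}\|_2$, since the restricted eigenvalue condition only controls the \emph{sum} $\sum_i\langle\mathbf{x}_i,\mathbf{h}\rangle^2\le n\lambda(k)\|\mathbf{h}\|_2^2$ and not the individual terms ($\mathbf{x}$ is not assumed bounded). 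With $U\asymp L\sqrt{n\lambda(k)}\,t$ on a shell of radius $t$, the linear term $Ux/n$ in Bousquet's bound, evaluated at the confidence level $x=\log(1/\delta')\gtrsim k\log(p/k)+\log(1/\delta)$ forced by your union bound over $\binom{p}{2k}$ supports, is of order $Lt\sqrt{\lambda(k)}\,\bigl(k\log(p/k)+\log(1/\delta)\bigr)/\sqrt{n}$, which exceeds the target $\tau t\asymp Lt\sqrt{\lambda(k)\bigl(k\log(Rp/k)+\log(1/\delta)\bigr)/n}$ by a factor $\sqrt{k\log(p/k)+\log(1/\delta)}$; carried through, this would degrade the final estimation rate from $k\log(p/k)/n$ to $(k\log(p/k))^2/n$. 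You flag the envelope as a ``secondary technical wrinkle,'' but it is the crux: the paper's route works precisely because Hoeffding's inequality is driven by the sum of squared ranges $\sum_i L^2\langle\mathbf{x}_i,\mathbf{z}\rangle^2$ --- exactly the quantity Assumption~\ref{asu4} bounds --- yielding a purely subgaussian tail with no Bernstein-type linear correction, after which an $\epsilon$-net (rather than peeling) delivers uniformity and the ``$\vee\,\tau^2$'' slack absorbs the small-radius regime. To repair your route you would either need to import a boundedness or moment assumption on $\mathbf{x}$ not present in the paper, or replace Talagrand by a discretization-plus-Hoeffding argument, at which point you have reproduced the paper's proof.
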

Theorem~\ref{restricted-strong-convexity} is used to derive the error bound presented in Theorem~\ref{main-results}, which is the main result in this section.
\begin{theorem} \label{main-results}
	Let $\delta \in \left(0, 1/2\right)$ and assume that Assumptions \ref{asu1}, \ref{asu2}, \ref{asu4}($k$)  and \ref{asu5}($\delta$) hold. Then the estimator $\hat{\B{\beta}}$ defined as a solution of Problem \eqref{learning-l0} satisfies with probability at least $1-\delta$:
	\begin{equation}\label{error-bound-thm-eqn}
	\| \hat{\B{\beta}} - \B{\beta}^*\|_2^2 \lesssim L^2 \lambda(k)\widetilde{\kappa}^2 \left( \frac{k \log(Rp/k)}{n} + \frac{\log(1/ \delta)}{n} \right)	    
	\end{equation}
	where, $\widetilde{\kappa} = \max \{1/\kappa(k), 1\}$.
\end{theorem}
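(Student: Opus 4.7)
\textbf{Proof Proposal for Theorem~\ref{main-results}.} The plan is to combine the restricted strong convexity inequality of Theorem~\ref{restricted-strong-convexity} with the basic optimality of $\hat{\B\beta}$ as a minimizer of the empirical risk. First I would observe that $\B\beta^*$ is feasible for Problem~\eqref{learning-l0}: indeed, $\|\B\beta^*\|_0 = k$ and $\|\B\beta^*\|_2 = R \le 2R$. Since $\hat{\B\beta}$ is optimal, its empirical risk does not exceed that of $\B\beta^*$, so the left-hand side of~\eqref{conclude-theorem-restricted-strong-convexity} is non-positive. Furthermore, $\mathbf{h} = \hat{\B\beta} - \B\beta^*$ satisfies $\|\mathbf{h}\|_0 \le 2k$ (sum of two $k$-sparse vectors) and $\|\mathbf{h}\|_2 \le 3R$, placing it in the regime for which Theorem~\ref{restricted-strong-convexity} applies.

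Combining these observations with~\eqref{conclude-theorem-restricted-strong-convexity}, with probability at least $1-\delta$ I would have
\begin{equation*}
\tfrac{1}{4}\kappa(k)\,\min\!\bigl(\|\mathbf{h}\|_2^2,\; r(k)\|\mathbf{h}\|_2\bigr) \;\le\; \max\!\bigl(\tau\|\mathbf{h}\|_2,\; \tau^2\bigr),
\end{equation*}
with $\tau = 6L\sqrt{(\lambda(k)/n)\bigl(k\log(Rp/k) + \log(1/\delta)\bigr)}$. From here, the plan is a short case analysis. If $\|\mathbf{h}\|_2 \le r(k)$, the left side is $\tfrac{1}{4}\kappa(k)\|\mathbf{h}\|_2^2$. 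Matching this against either term of the max yields $\|\mathbf{h}\|_2 \le 4\tau/\kappa(k)$ or $\|\mathbf{h}\|_2 \le 2\tau/\sqrt{\kappa(k)}$, and in either case $\|\mathbf{h}\|_2^2 \lesssim \tau^2\widetilde{\kappa}^2$ since $\widetilde{\kappa} = \max\{1/\kappa(k), 1\}$ dominates both $1/\kappa(k)$ and $1/\sqrt{\kappa(k)}$.

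The key step is to rule out the regime $\|\mathbf{h}\|_2 > r(k)$, and this is exactly what Assumption~\ref{asu5}($\delta$) is designed for. In that regime the left side becomes $\tfrac{1}{4}\kappa(k) r(k)\|\mathbf{h}\|_2$. If the max on the right equals $\tau\|\mathbf{h}\|_2$, canceling $\|\mathbf{h}\|_2$ gives $r(k) \le 4\tau/\kappa(k)$, which directly contradicts Assumption~\ref{asu5}($\delta$) (that assumption says precisely $4\tau/\kappa(k) < r(k)$). If instead the max equals $\tau^2$, then $\|\mathbf{h}\|_2 < \tau$, and combined with $\|\mathbf{h}\|_2 > r(k) > 4\tau/\kappa(k)$ this forces $\kappa(k) > 4$, so $\widetilde{\kappa}=1$ and $\|\mathbf{h}\|_2^2 < \tau^2 = \tau^2\widetilde{\kappa}^2$. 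Thus in every case $\|\mathbf{h}\|_2^2 \le 16\,\tau^2\widetilde{\kappa}^2$.

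Finally, substituting $\tau^2 = 36 L^2 (\lambda(k)/n)\bigl(k\log(Rp/k)+\log(1/\delta)\bigr)$ yields the bound~\eqref{error-bound-thm-eqn} (up to absolute constants), inheriting the probability $1-\delta$ from Theorem~\ref{restricted-strong-convexity}. The main technical obstacle is cleanly handling the piecewise structure on both sides of the RSC inequality, but the growth-radius condition in Assumption~\ref{asu5} is precisely calibrated to eliminate the non-local case, so the analysis reduces to bookkeeping of constants.
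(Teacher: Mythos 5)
Your proposal is correct and follows essentially the same route as the paper: the basic inequality $\Delta(\mathbf{h})\le 0$ from optimality of $\hat{\B\beta}$, combined with the restricted strong convexity bound of Theorem~\ref{restricted-strong-convexity}, followed by a case analysis in which Assumption~\ref{asu5}($\delta$) (precisely the condition $4\tau/\kappa(k)<r(k)$) rules out the $r(k)\|\mathbf{h}\|_2$ branch of the minimum. The only difference is cosmetic: you split cases on $\|\mathbf{h}\|_2$ versus $r(k)$ and then on which term attains the max, whereas the paper splits on $\|\mathbf{h}\|_2$ versus $\tau$ first; both reduce to the same bound $\|\mathbf{h}\|_2\lesssim\tau\widetilde{\kappa}$.
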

In~\eqref{error-bound-thm-eqn}, the symbol ``$\lesssim$'' stands for ``$\leq$" up to a universal constant. The proof of Theorem~\ref{main-results} is presented in Appendix \ref{sec: appendix_main-results}. The rate appearing in Theorem~\ref{main-results}, of the order of $k/n \log (p/k)$, is the best known rate for a (sparse) classifier; and this coincides with the optimal scaling in the case of regression. In comparison, \citet{L1-SVM} and \citet{Wainwright-logreg} derived a bound scaling as $k/n \log(p)$ for 
$\ell_1$-regularized SVM (with hinge loss) and logistic regression, respectively. Note that the bound in Theorem~\ref{main-results} holds for any sufficiently small $\delta>0$. Consequently, by integration, we obtain the following result in expectation. 
\begin{cor} \label{main-corollary}
Suppose Assumptions \ref{asu1}, \ref{asu2}, \ref{asu4}($k$) hold true and Assumption \ref{asu5}($\delta$) is true for $\delta$ small enough, then:
	$$ \mathbb{E}  \| \hat{\B{\beta}} - \B{\beta}^* \|_2^2 \lesssim L^2 \lambda(k) \widetilde{\kappa}^{2}  \frac{k \log(Rp/k)}{n}, $$
	where $\widetilde{\kappa}$ is defined in Theorem~\ref{main-results}.
\end{cor}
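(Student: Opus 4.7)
The plan is to lift the high-probability bound of Theorem~\ref{main-results} to an expectation bound via the layer-cake identity $\mathbb{E}[X] = \int_0^\infty P(X > t)\,dt$, while handling the regime where Assumption~\ref{asu5}($\delta$) no longer applies by exploiting the hard $\ell_2$-constraint built into the estimator~\eqref{learning-l0}.

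First, observe that the constraint $\|\B\beta\|_2 \le 2R$ in~\eqref{learning-l0} together with $\|\B\beta^*\|_2 = R$ yields $X := \|\hat{\B\beta}-\B\beta^*\|_2^2 \le 9R^2$ almost surely. Let $C_0$ be the absolute constant implicit in Theorem~\ref{main-results}, set $A = C_0 L^2 \lambda(k) \widetilde{\kappa}^2$, and define $T(u) = (A/n)[k\log(Rp/k) + u]$. Applied with $\delta = e^{-u}$, Theorem~\ref{main-results} then reads $P(X > T(u)) \le e^{-u}$, valid for every $u \in [\log 2, u^*]$, where $u^* = \log(1/\delta^*)$ is the largest $u$ for which Assumption~\ref{asu5}($e^{-u}$) remains true. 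The scaling condition in Assumption~\ref{asu5} is monotone in $u$, so $u^*$ is well defined; the corollary's hypothesis ``$\delta$ small enough'' guarantees that $u^*$ is large. After the change of variable $t = T(u)$, the layer-cake identity gives
\begin{equation*}
\mathbb{E}[X] \;\le\; T(0) \;+\; \frac{A}{n}\int_0^\infty P\bigl(X > T(u)\bigr)\,du,
\end{equation*}
and $T(0) = (A/n)\,k\log(Rp/k)$ is exactly the target leading term.

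I would then split the remaining integral at $u^*$. On $[0,u^*]$, combining the trivial bound $P\le 1$ on $[0,\log 2]$ with $P(X>T(u)) \le e^{-u}$ on $[\log 2, u^*]$ produces a contribution bounded by a universal constant. On $(u^*, \infty)$, invoke almost-sure boundedness: the integrand vanishes once $T(u) > 9R^2$, so the tail contributes at most $9R^2\, \delta^*$, which by the ``$\delta$ small enough'' hypothesis is absorbed into the leading term. Substituting back yields $\mathbb{E}[X] \lesssim L^2 \lambda(k) \widetilde{\kappa}^2 \,(k/n)\log(Rp/k)$, as claimed.

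The main obstacle is the tail region $u > u^*$, where Theorem~\ref{main-results} breaks down because Assumption~\ref{asu5} fails for too-small $\delta$. This is resolved cleanly by the $\|\B\beta\|_2 \le 2R$ constraint in~\eqref{learning-l0}, which converts the otherwise uncontrolled tail into a deterministic bound of $9R^2$; the scaling built into Assumption~\ref{asu5} then makes $\delta^*$ small enough (specifically, of order $k\log(Rp/k)/(nR^2)$ or smaller) so that this contribution is dominated by $T(0)$. Everything else is routine tail-integration bookkeeping.
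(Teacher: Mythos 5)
Your proposal is correct and follows essentially the same route as the paper's proof: both lift the high-probability bound of Theorem~\ref{main-results} to an expectation bound by tail integration, peeling off the leading term $\tfrac{A}{n}k\log(Rp/k)$ and showing the integrated remainder is of lower order (the paper works with $W=\|\hat{\B\beta}-\B\beta^*\|_2^2/(L^2\widetilde{\kappa}^2\lambda(k))$ and the substitution $\delta=e^{-nt}$, you with $\delta=e^{-u}$; this is cosmetic). The one place you are more careful than the paper is the regime where Assumption~\ref{asu5}($\delta$) fails for very small $\delta$: the paper merely remarks that $W$ is bounded and then asserts its tail inequality for all $\delta\in(0,1)$, whereas you explicitly truncate the integral at $u^*$ and use the almost-sure bound $\|\hat{\B\beta}-\B\beta^*\|_2^2\le 9R^2$ (a consequence of the $\|\B\beta\|_2\le 2R$ constraint in~\eqref{learning-l0}) to control the uncovered tail, which is the honest way to close that step.
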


\section{Experiments} \label{sec:experiments}
In this section, we compare the statistical and computational performance of our proposed algorithms versus the state of the art, on both synthetic and real data sets.
\subsection{Experimental Setup} \label{section:expsetup}
 {\bf{Data Generation}.} For the synthetic data sets, we generate a multivariate Gaussian data matrix $\M{X}_{n \times p} \sim \text{MVN}(\M{0}, \B\Sigma)$ and a sparse vector of coefficients $\B{\beta}^{\dagger}$ with $k^{\dagger}$ nonzero entries, such that $\beta^{\dagger}_i=1$ for $k^{\dagger}$ equi-spaced indices $i \in [p]$. Every coordinate $y_i$ of the outcome vector $\M{y} \in \{ -1, 1 \}^n$ is then sampled independently from a Bernoulli distribution with success probability:
$P(y_{i} = 1 | \M{x}_{i} ) =(1 + \exp ( {-s \langle \B{\beta}^{\dagger} , \M{x}_{i} } \rangle ))^{-1},$
where $\M{x}_i$ denotes the $i$-th row of $\M{X}$, and $s$ is a parameter that controls the signal-to-noise ratio. Specifically, smaller values of $s$ increase the variance in the response ${y}$, and when $s \to \infty$ the generated data becomes linearly separable.

\textbf{Algorithms and Tuning.} 
We compare our proposal, as implemented in  our package \texttt{L0Learn}, with: (i) $\ell_1$-regularized logistic regression (\texttt{glmnet} package \citealp{glmnet}), (ii) MCP-regularized logistic regression (\texttt{ncvreg} package \citealp{ncvreg}), and (iii) two packages for sparsity constrained minimization (based on hard thresholding): \texttt{GraSP} \citep{bahmani2013greedy} and \texttt{NHTP} \citep{zhou2019global}. For \texttt{GraSP} and \texttt{NHTP}, we use  cardinality constrained logistic regression with ridge regularization---that is, we optimize Problem \eqref{eq:constrainedopt} with $\lambda_1 = 0$. Tuning is done on a separate validation set under the fixed design setting, i.e., we use the same features used for training but a new outcome vector $\M{y}'$ (independent of $\M{y}$). 
The tuning parameters are selected so as to minimize the loss function on the validation set (e.g., for regularized logistic regression, we minimize the unregularized negative log-likelihood). 

We use $\ell_0$-$\ell_q$ as a shorthand to denote the penalty $\lambda_0 \| \B\beta \|_0 + \lambda_q \| \B\beta \|_q^q$ for $q \in \{1, 2\}$. 
For all penalties that involve 2 tuning parameters---i.e., $\ell_0$-$\ell_q$ (for $q \in \{1, 2\}$), MCP, \texttt{GraSP}, and \texttt{NHTP}, we sweep the parameters over a two-dimensional grid. For our penalties, we choose $100$ $\lambda_0$ values as described in Section~\ref{section:L0Learn}. For \texttt{GraSP} and \texttt{NHTP}, we sweep the number of nonzeros between $1$ and $100$. For $\ell_0$-$\ell_1$, and we choose a sequence of $10$ $\lambda_1$-values in $[a,b]$, where $a$ corresponds to a zero solution and $b=10^{-4}a$. Similarly, for $\ell_0$-$\ell_2$, \texttt{GraSP}, and \texttt{NHTP}, we choose $10$ $\lambda_2$ values between $10^{-4}$ and $100$ for the experiment in Section~\ref{exp:varysamples}; and between $10^{-8}$ and $10^{-4}$ for that in Section~\ref{exp:largescale}. For MCP, the sequence of $100$ $\lambda$ values is set to the default values selected by $\texttt{ncvreg}$, and we vary the second parameter $\gamma$ over $10$ values between $1.5$ and $25$. For the $\ell_1$-penalty, the grid of 100 $\lambda$ values is set to the default sequence chosen by \texttt{glmnet}. 

\textbf{Performance Measures.} We use the following measures to evaluate the performance of an estimator $\hat{\B{\beta}}$:
\begin{itemize}
\item \textbf{AUC}: The area under the curve of the ROC plot.
\item \textbf{Recovery F1 Score}: This is the F1 score for support recovery, i.e., it is the harmonic mean of precision and recall: F1 Score = $2 P R/(P + R)$, where $P$ is the precision given by $ |\text{Supp}(\hat{\B{\beta}}) \cap \text{Supp}(\B{\beta}^{\dagger})|/ | \text{Supp}(\hat{\B{\beta}})|$, and $R$ is the recall given by $ |\text{Supp}(\hat{\B{\beta}}) \cap \text{Supp}(\B{\beta}^{\dagger})|/ | \text{Supp}(\B{\beta}^{\dagger})|$ . An F1 Score of $1$ implies full support recovery; and a value of zero implies that the supports of the true and estimated coefficients have no overlap. 
\item \textbf{Support Size}: The number of nonzeros in $\hat{\B{\beta}}$.
\item \textbf{False Positives}: This is equal to $| \text{Supp}(\hat{\B{\beta}}) \setminus \text{Supp}(\B{\beta}^{\dagger})|$.
\end{itemize}
\begin{figure}[tb]
\centering
{\small High Correlation Setting ($\Sigma_{ij} = 0.9^{|i-j|}$)}
\includegraphics[scale=0.36]{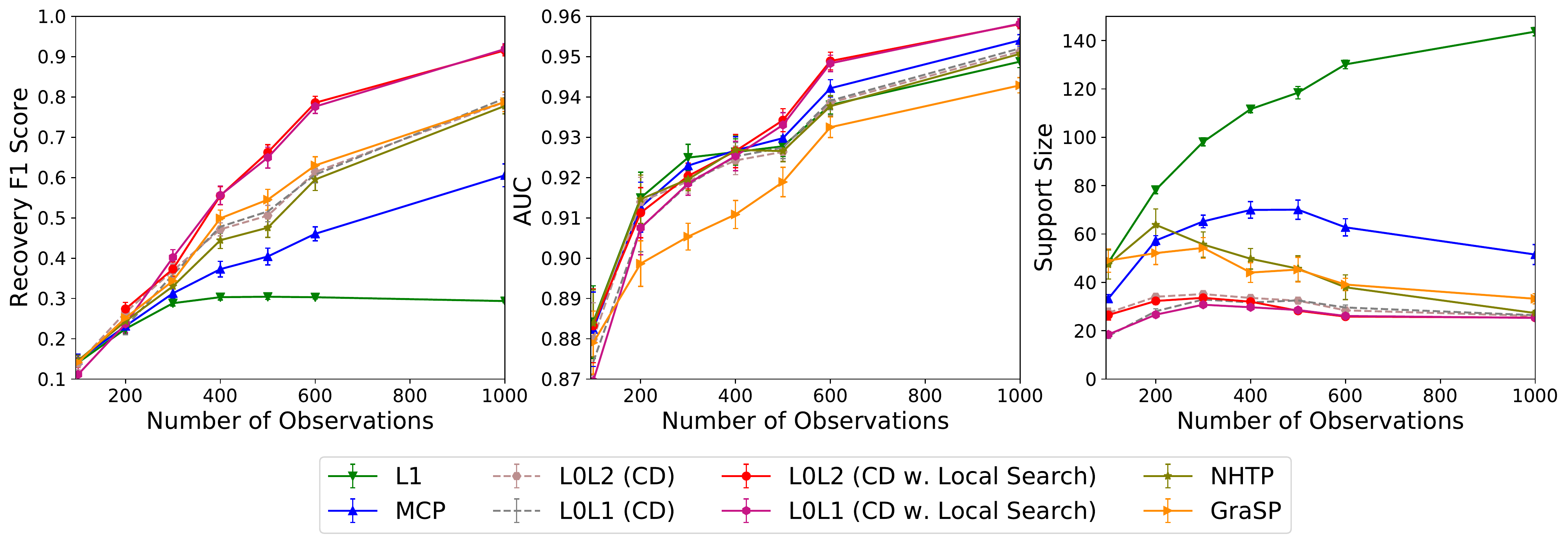}
\caption{{\small{Performance for varying $n\in [100, 10^3]$ and $\Sigma_{ij} = 0.9^{|i-j|}, p=1000, k^{\dagger} = 25, s=1$. In this high-correlation setting, our proposed algorithm (using local search) for the $\ell_0$-$\ell_q$ (for both $q \in \{1, 2\}$) penalized estimators---denoted as L0L2 (CD w. Local Search) and L0L1 (CD w. Local Search) in the figure---seems to outperform state-of-the-art methods (MCP, $\ell_1$, \texttt{GraSP} and \texttt{NHTP}) in terms of both variable selection and prediction. The variable selection performance improvement (higher F1 score and smaller support size) is more notable than AUC.
Local search with CD shows benefits compared to its variants  that do not employ local search, the latter denoted by L0L1 (CD) and L0L2 (CD) in the figure.}}}
\label{fig:highcorr}
\end{figure}

\subsection{Performance for Varying Sample Sizes} \label{exp:varysamples}
In this experiment, we fix $p$ and vary the number of observations $n$ to study its effect on the performance of the different algorithms and penalties. We hypothesize that when the statistical setting is difficult (e.g., features are highly correlated and/or $n$ is small), good optimization algorithms for $\ell_0$-regularized problems lead to estimators that can  significantly outperform estimators obtained from convex regularizers and common (heuristic) algorithms for nonconvex regularizers. To demonstrate our hypothesis, we perform experiments on the following data sets:
\begin{itemize}
\item \textbf{High Correlation}: $\Sigma_{ij} = 0.9^{|i-j|}, p=1000, k^{\dagger} = 25, s=1$
\item \textbf{Medium Correlation}: $\Sigma_{ij} = 0.5^{|i-j|}, p=1000, k^{\dagger} = 25, s=1$
\end{itemize}

\begin{figure}[tb]
\centering
{\small Medium Correlation Setting ($\Sigma_{ij} = 0.9^{|i-j|}$)}
\includegraphics[scale=0.36]{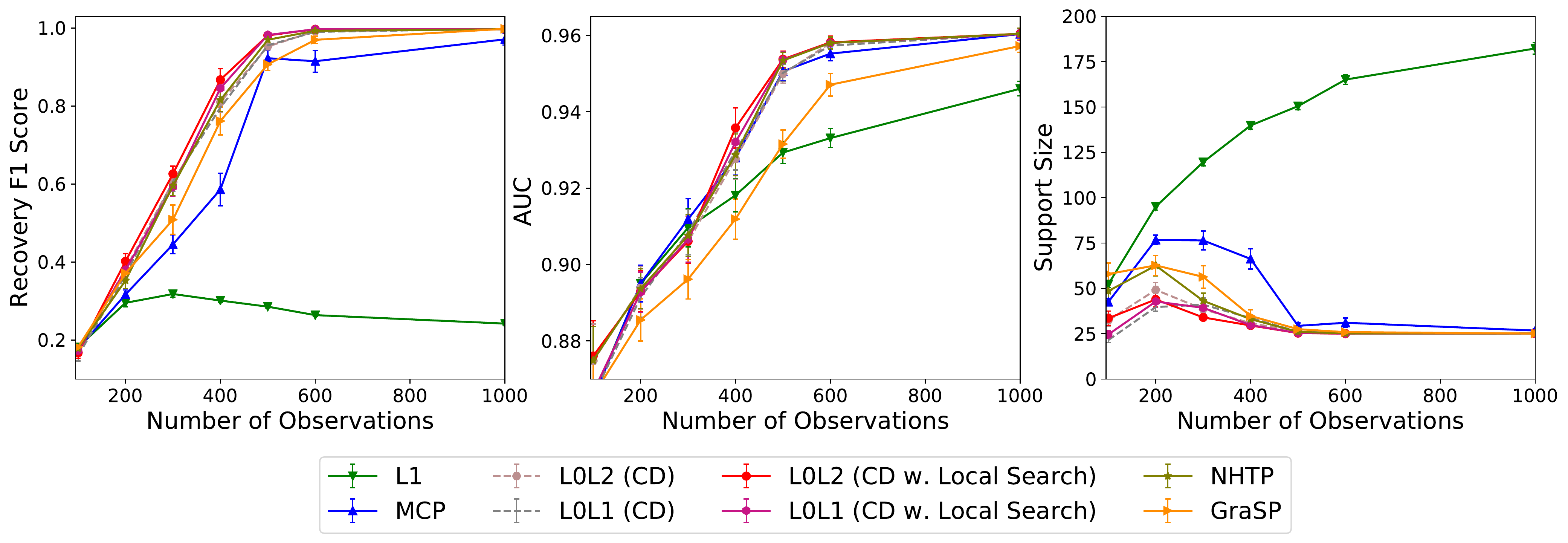}
\caption{Performance for varying $n$ and $\Sigma_{ij} = 0.5^{|i-j|}, p=1000, k^{\dagger} = 25, s=1$. In contrast to Figure \ref{fig:highcorr}, this is a medium-correlation setting. Once again Algorithm~2 (CD with local search) for the $\ell_0$-$\ell_1$ and $\ell_0$-$\ell_2$ penalties seems  to perform quite well in terms of variable selection and prediction performance, though its improvement over the other methods appears to be less prominent compared to the high-correlation setting in Figure~\ref{fig:highcorr}. In this example, local search does not seem to offer much improvement over pure CD (i.e., Algorithm~1).}
\label{fig:mediumcorr}
\end{figure}

For each of the above settings, we use the logistic loss function and 
consider $20$ random repetitions. We report the averaged results for the high correlation setting in Figure \ref{fig:highcorr} and for the medium correlation setting in Figure \ref{fig:mediumcorr}.

Figure \ref{fig:highcorr} shows that in the high correlation setting, Algorithm~2 (with $m=1$) for the $\ell_0$-$\ell_2$ penalty and the $\ell_{0}$-$\ell_{1}$ penalty---denoted by L0L2 (CD w. Local Search) and L0L1 (CD w. Local Search) (respectively) in the figure---achieves the best support recovery and AUC across different sample sizes $n$. 
We note that in this example, the best F1 score falls below $0.8$ for $n$ smaller than $600$---suggesting that none of the algorithms can do full support recovery. 
For larger values of $n$, the difference between Algorithm~2 and others become more pronounced in terms of F1 score, suggesting an important edge in terms of variable selection performance.  Moreover, our algorithms select the smallest support size (i.e., the most parsimonious model) for all $n$. In contrast, the $\ell_1$ penalty (i.e., L1) selects significantly larger support sizes (exceeding 100 in some cases) and suffers in terms of support recovery. It is also worth mentioning that the other $\ell_0$-based algorithms, i.e.,  Algorithm 1, {\texttt{NHTP}} and {\texttt{GraSP}}, outperform MCP and L1 in terms of support recovery (F1 score) and support sizes. The performance of {\texttt{NHTP}} and {\texttt{GraSP}} appears to be similar in terms of support sizes and F1 score, though {\texttt{NHTP}} appears to be performing better compared to {\texttt{GraSP}} in terms of AUC.

In Figure~\ref{fig:mediumcorr}, for the medium correlation setting, we see that Algorithms 1 and 2 perform similarly: local search (with CD) performs similar to CD (without local search)---Algorithm~1 can recover the correct support with around $500$ observations. In this case, the performance of MCP becomes similar to the $\ell_0$-$\ell_{q}$ penalized estimators in terms of AUC, though there are notable differences in terms of variable selection properties. Compared to Figure~\ref{fig:highcorr}, we observe that $\ell_1$ performs better in this example, but still appears to be generally outperformed by other algorithms in terms of all measures. We also observe that \texttt{NHTP}'s performance is comparable to the best methods in terms of F1 score and AUC. However, it results in models that are more dense compared to L0L1 and L0L2 (both Algorithms~1 and 2). That being said, we will see in Section~\ref{sec:timings} that in terms of running time, Algorithm~1 has a significant edge over \texttt{NHTP}. Furthermore, for larger problem instances (Section~\ref{exp:largescale}) the performance of \texttt{NHTP} suffers in terms of variable selection properties compared to the methods we propose in this paper. \texttt{NHTP} appears to perform better than \texttt{GraSP} across all metrics in this setting.

Figures~\ref{fig:highcorr} and~\ref{fig:mediumcorr} suggest that when the correlations are high, local search with $\ell_0$ can notably outperform competing methods (especially, in terms of variable selection).  When the correlations are medium to low, the differences across the different nonconvex methods become less pronounced. The performance of nonconvex penalized estimators (especially, the $\ell_0$-based estimators) is generally better than $\ell_1$-based estimators in these examples.

\subsection{Performance on Larger Instances} \label{exp:largescale}
We now study the performance of the different algorithms 
for some large values of $p$ under the following settings:
\begin{itemize}
\item \textbf{Setting 1}: $\B\Sigma = \M{I}, n=1000, p=50,000, s=1000$, and $k^{\dagger} = 30$
\item \textbf{Setting 2}: $\Sigma_{ij} = 0.3$ for $i \neq j$, $\Sigma_{ii} = 1$, $n=1000, p=10^5, s=1000$, and $k^{\dagger} = 20$.
\end{itemize}
Table~\ref{table:largeexp} reports the results available from Algorithms~1 and 2 ($m=1$) versus the other methods, for different loss functions. 
\begin{table}[tb]
\centering
\hspace{3.5cm} Setting 1 \hspace{5.5cm} Setting 2 \\
\begin{tabular}{ccc}
\begin{tabular}{lcc|}
\toprule
{Penalty/Loss} &     FP &  $\| \hat{\B{\beta}} \|_0$ \\
\midrule
$\ell_0$-$\ell_2$/Logistic (Algorithm 1)    &    $0.0 \pm 0.0$ &      $ 30.0 \pm 0.0 $ \\
$\ell_0$-$\ell_1$/Logistic (Algorithm 1)    &     $0.0 \pm 0.0$ &      $ 30.0 \pm 0.0 $ \\ 
$\ell_0$-$\ell_2$/Logistic (Algorithm 2)    &    $0.0 \pm 0.0$ &      $ 30.0 \pm 0.0 $ \\ 
$\ell_0$-$\ell_1$/Logistic (Algorithm 2)    &    $0.0 \pm 0.0$ &      $ 30.0 \pm 0.0 $ \\ 
$\ell_0$-$\ell_2$/Sq. Hinge (Algorithm 1) &    $ 2.8 \pm 0.3 $ &      $ 32.8 \pm 0.3 $ \\
$\ell_1$/Logistic        &  $ 617.2 \pm 8.3 $ &     $ 647.2 \pm 8.3 $ \\
MCP/Logistic       &    $0.0 \pm 0.0$ &      $ 30.0 \pm 0.0 $ \\
NHTP/Logistic           &   $ 44.7 \pm 5.6 $ &      $ 74.7 \pm 5.6 $ \\
GraSP/Logistic             &   $ 50.5 \pm 5.6 $ &      $ 80.5 \pm 5.6 $ \\
\bottomrule
\end{tabular}
\begin{tabular}{cc}
\toprule
     FP &  $\| \hat{\B{\beta}} \|_0$ \\
\midrule
   $ 21.6 \pm 0.9 $ &      $ 26.2 \pm 0.5 $ \\
   $ 11.2 \pm 0.7 $  & $ 14.8 \pm 0.3 $ \\
   $11.5 \pm 0.4$  & $ 14.6 \pm 0.3 $ \\
   $ 11.2 \pm 0.4$  & $ 14.5 \pm 0.3 $ \\
   $ 23.9 \pm 0.7 $ &      $ 27.0 \pm 0.4 $ \\
  $ 242.2 \pm 4.8 $ &     $ 256.9 \pm 5.3 $ \\
   $ 80.1 \pm 10.9 $ &      $ 91.5 \pm 12.1 $ \\
   $ 92.5 \pm 0.6$ &      $ 100.0 \pm 0.0 $ \\
    $ 45.3 \pm 3.0 $ &     $ 54.4 \pm 2.8 $ \\
\bottomrule
\end{tabular}
\end{tabular}
\caption{Variable selection performance for different penalty and loss combinations, under high-dimensional settings. FP refers to the number of false positives. We consider ten repetitions and report the averages (and standard errors) across the repetitions.}
\label{table:largeexp}
\end{table}
In Settings 1 and 2 above, all methods achieve an AUC of 1 (approximately), and the main differences across the methods lie in variable selection. For Setting 1, both Algorithms~1 and 2 applied to the logistic loss; and \texttt{ncvreg} for the MCP penalty (with logistic loss) correctly recover the support. In contrast, $\ell_1$ captures a large number of false positives, leading to large support sizes. Both \texttt{NHTP} and \texttt{GraSP} have a considerable number of false positives and result in large support sizes.

In Setting 2, none of the algorithms correctly recovered the support. This setting is more difficult than Setting 1 as it has higher correlation and a larger $p$. In Setting 2, we observe that CD with local search can have an edge over plain CD (see, logistic loss with $\ell_0$-$\ell_2$ penalty). In terms of small FP and compactness of model size, CD with local search appears to be the winner, with Algorithm~1 being quite close. Both Algorithms~1 and 2 appear to  work better compared to earlier methods in this setting. 
We note that in Setting 2, our proposed algorithms select supports with roughly 3 times fewer nonzeros than the MCP penalized problem, and 10 times fewer nonzeros than those delivered by the $\ell_1$-penalized problem. For both settings, our proposed methods offer important improvements over {\texttt{NHTP}} and {\texttt{GraSP}} as well.  

\subsection{Performance on Real Data Sets}\label{sec:real-datasets}
We compare the performance of $\ell_1$ with $\ell_0$-$\ell_q$  ($q \in \{1,2\}$) regularization, using the logistic loss function.\footnote{We tried MCP regularization using \texttt{ncvreg}, however it ran into convergence issues and did not terminate in over an hour. We also tried NHTP, but it ran into convergence issues and took more than $2$ hours to solve for a single solution in the regularization path. Also, note that based on our experiment in Section \ref{sec:timings}, GraSP is slower than NHTP and cannot handle such high-dimensional problems. Therefore, we do not include MCP, NHTP and GraSP in this   experiment.} We consider the following three binary classification data sets taken from the NIPS 2003 Feature Selection Challenge \citep{nipschallenge}:
\begin{itemize}
\item \textbf{Arcene}: This data set is used to identify cancer vs non-cancerous patterns in mass-spectrometric data. The data matrix is dense with $p = 10,000$ features. We used 140 observations for training and 40 observations for testing. 
\item \textbf{Dorothea}: This data set is used to distinguish active chemical compounds in a drug. The data matrix is sparse with $p=100,000$ features. We used 805 observations for training and 230 observations for testing. 
\item \textbf{Dexter:} The task here is to identify text documents discussing corporate acquisitions. The data matrix is sparse with $p = 20,000$ features. We used 420 observations for training and 120 observations for testing. 
\end{itemize}
We obtained regularization paths for $\ell_1$ using \texttt{glmnet} and for $\ell_0$-$\ell_2$ and $\ell_0$-$\ell_1$ using both Algorithm 1 and Algorithm 2 (with $m=1$). In Figure~\ref{fig:aucvssupp}, we plot the support size versus the test AUC for $\ell_1$ and Algorithm 2 (with $m=1$) using $\ell_0$-$\ell_2$. 
To avoid overcrowded plots, the results for our other algorithms and penalties are presented in Appendix~\ref{appendix-sec:expts}.
\begin{figure}[tb] 
\centering
\includegraphics[scale=0.5]{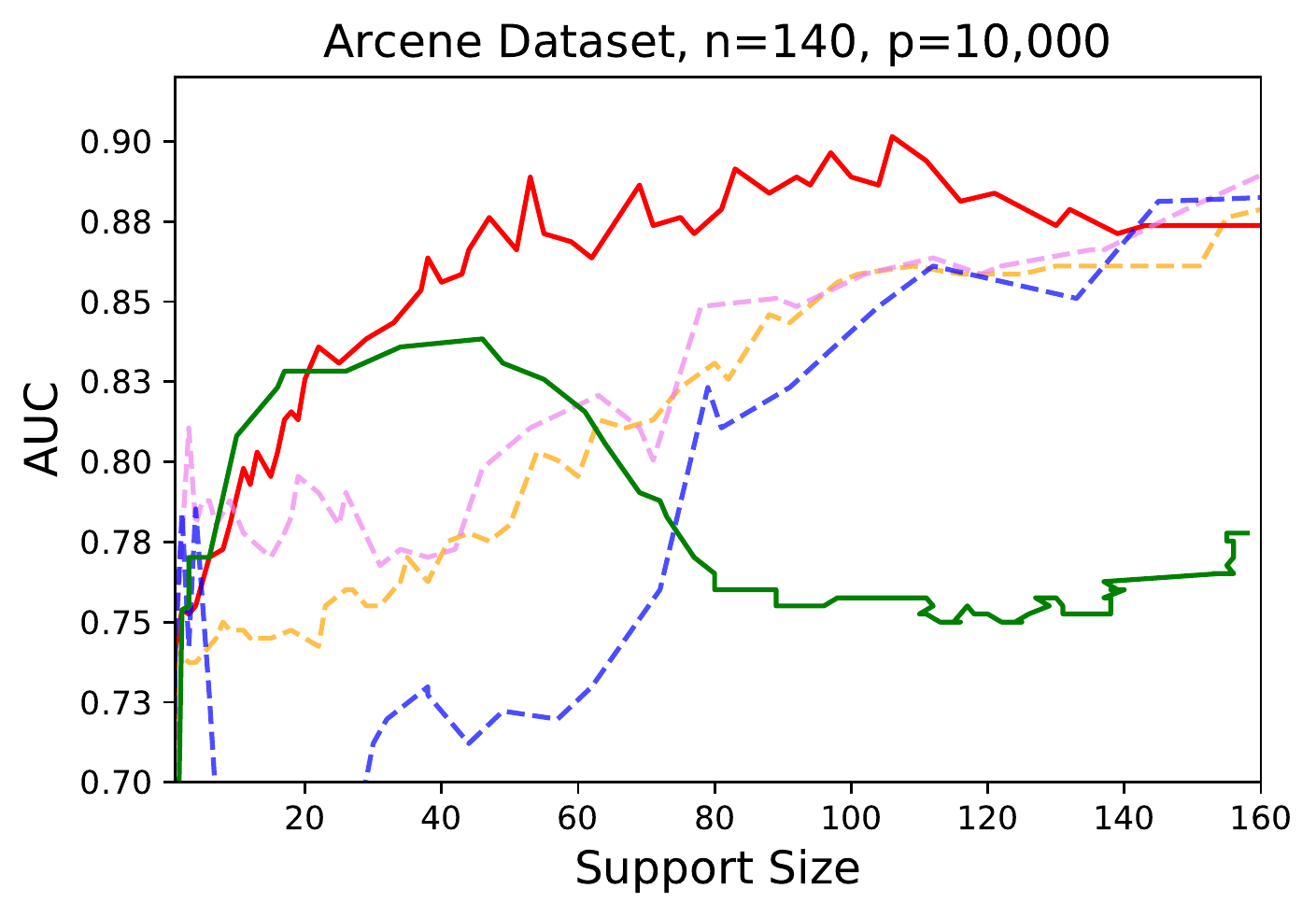}
\includegraphics[scale=0.5]{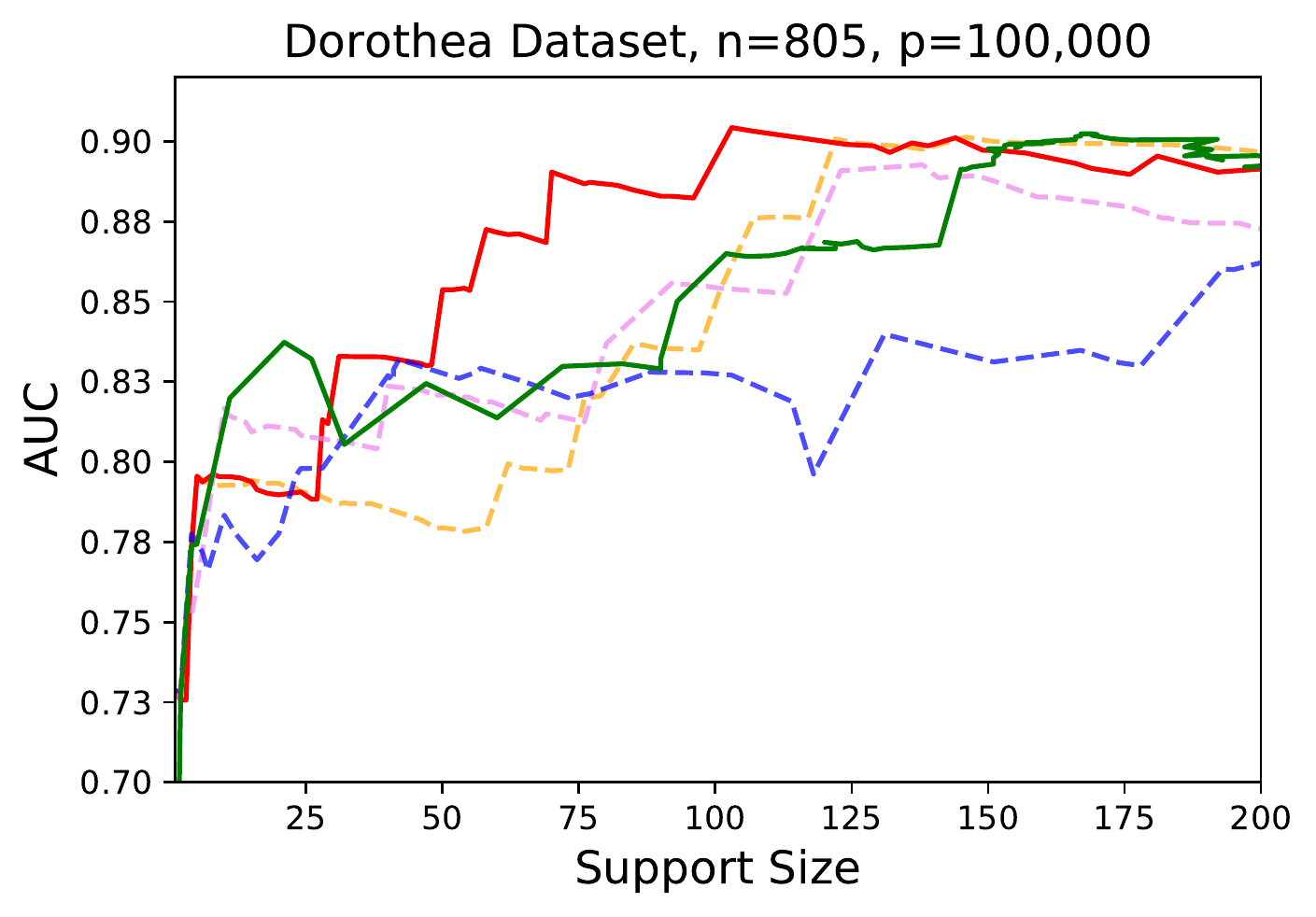}
\includegraphics[scale=0.5]{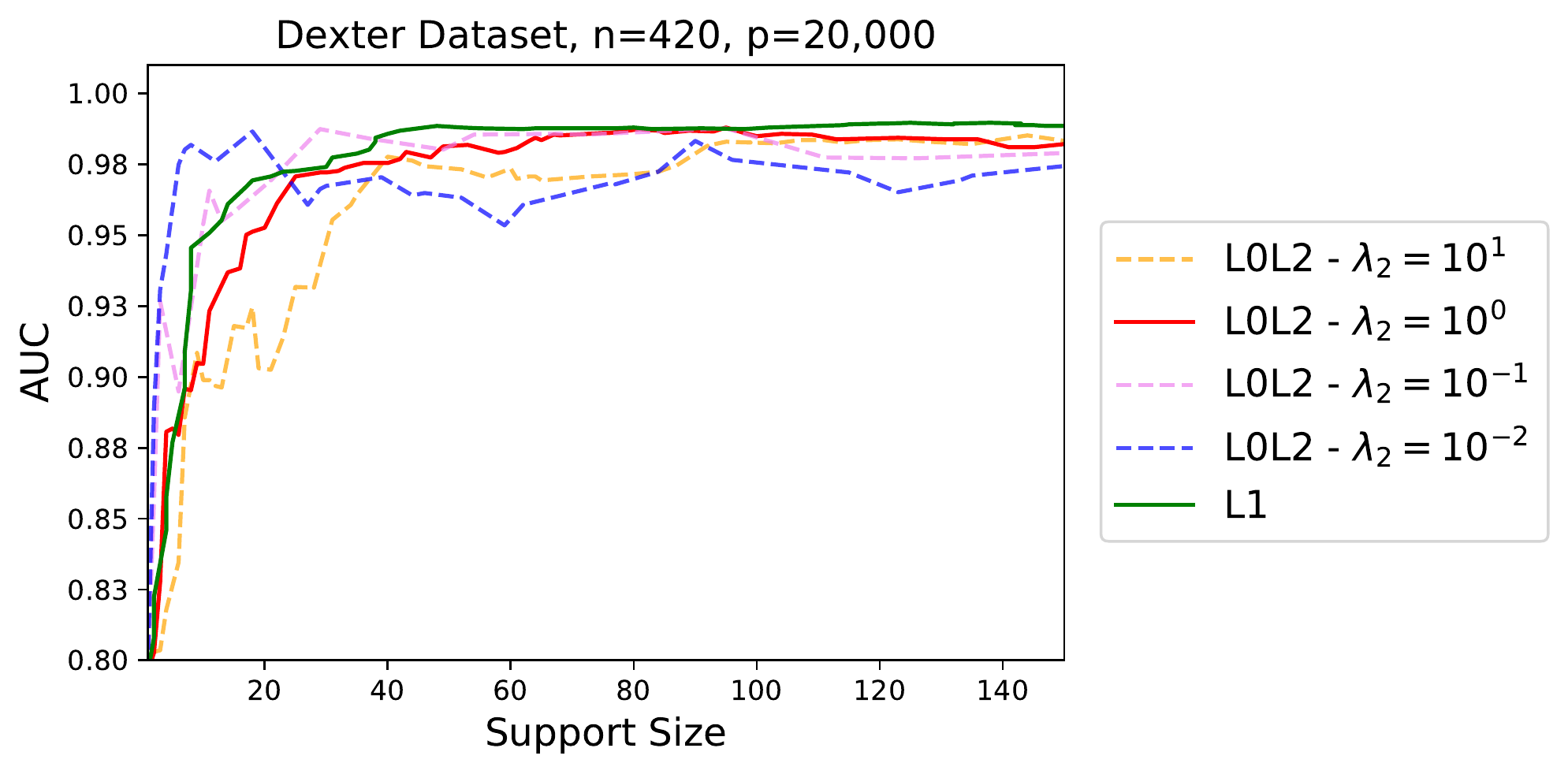}
\caption{Plots of the AUC versus the support size for the Arcene, Dorothea, and Dexter data sets. The green curves correspond to logistic regression with $\ell_1$ regularization. The other curves correspond to logistic regression with $\ell_0$-$\ell_2$ regularization (using Algorithm~2 with $m=1$) for different values of $\lambda_2$ (see legend). }
\label{fig:aucvssupp}
\end{figure}
For the Arcene data set, $\ell_0$-$\ell_2$ with $\lambda_2 = 1$ (i.e., the red curve) outperforms $\ell_1$ (the green curve) for most of the support sizes, and it reaches a peak AUC of 0.9 whereas $\ell_1$ does not exceed 0.84. The other choices of $\lambda_2$ also achieve a higher peak AUC than $\ell_1$ but do not uniformly outperform it. A similar pattern occurs for the Dorothea data set, where $\ell_0$-$\ell_2$ with $\lambda_2 = 1$ achieves a peak AUC of 0.9 at around 100 features, whereas $\ell_1$ needs around 160 features to achieve the same peak AUC. For the Dexter data set, $\ell_0$-$\ell_2$ with $\lambda_2 = 10^{-2}$ (the blue curve) achieves a peak AUC of around 0.98 using less than 10 features, whereas $\ell_1$ requires around 40 features to achieve a similar AUC. In this case, larger choices of $\lambda_2$ do not lead to any significant gains in AUC. This phenomenon is probably due to a higher signal in this data set compared to the previous two. Overall, we conclude that for all the three data sets, Algorithm~2 for $\ell_0$-$\ell_2$ can achieve higher AUC-values with (much) smaller support sizes.

\subsection{Timings}\label{sec:timings}
In this section, we compare the running time of our algorithms versus several state-of-the-art algorithms for sparse classification. 
\subsubsection{Obtaining good solutions: Upper Bounds}\label{sec:good-upperbounds}

We study the running time of fitting sparse logistic regression models, using the following packages: our package \texttt{L0Learn}, \texttt{glmnet}, \texttt{ncvreg}, and \texttt{NHTP}.\footnote{We also considered \texttt{GraSP}, but we found it to be several orders of magnitude slower than the other toolkits, e.g., it takes 3701 seconds at $p=10^5$. All the other toolkits require less than $70$ seconds at $p=10^5$. Therefore, we did not include \texttt{GraSP} in our timing experiments.} In \texttt{L0Learn}, we consider both Algorithm~1 (denoted by L0Learn 1) and Algorithm~2 with $m=1$ (denoted by L0Learn 2). We generate synthetic data as described in Section \ref{section:expsetup}, with $n=1000, s=1, \B\Sigma=I$, $k^\dagger=5$, and we vary $p$. For all toolkits except \texttt{NHTP},\footnote{\texttt{NHTP} does not have a parameter to control the tolerance for convergence.} we set the convergence threshold to $10^{-6}$ and solve for a regularization path with 100 solutions. For \texttt{L0Learn}, we use a grid of $\lambda_0$-values varying between $\lambda_0^{\text{max}}$ (the value which sets all coefficients to zero) and $0.001 \lambda_0^{\text{max}}$. For \texttt{L0Learn} and \texttt{NHTP}, we set $\lambda_2 = 10^{-7}$. For \texttt{glmnet} and \texttt{ncvreg}, we compute a path using the default choices of tuning parameters. The experiments were carried out on a machine with a 6-core Intel Core i7-8750H processor and 16GB of RAM, running macOS 10.15.7, R 4.0.3 (with vecLib's BLAS implementation), and MATLAB R2020b. The running times are reported in Figure~\ref{fig:Rruntimes}.

Figure~\ref{fig:Rruntimes} indicates that \texttt{L0Learn}~1 (Algorithm 1) and \texttt{glmnet} achieve the fastest runtimes across the $p$ range. For $p > 40,000$, \texttt{L0Learn}~1 (Algorithm 1) runs slightly faster \texttt{glmnet}. \texttt{L0Learn}~2 (Algorithm 2) is slower due to the local search, but it can obtain solutions in reasonable times, e.g., less than a minute for 100 solutions at $p = 10^5$. It is important to note that the toolkits are optimizing for different objective functions, and the speed-ups in \texttt{L0Learn} are partly due to the nature of $\ell_0$-regularization which selects fewer nonzeros compared to those available from $\ell_1$ or MCP regularization.
\begin{figure}[tb] 
    \centering
    \includegraphics[scale=0.6]{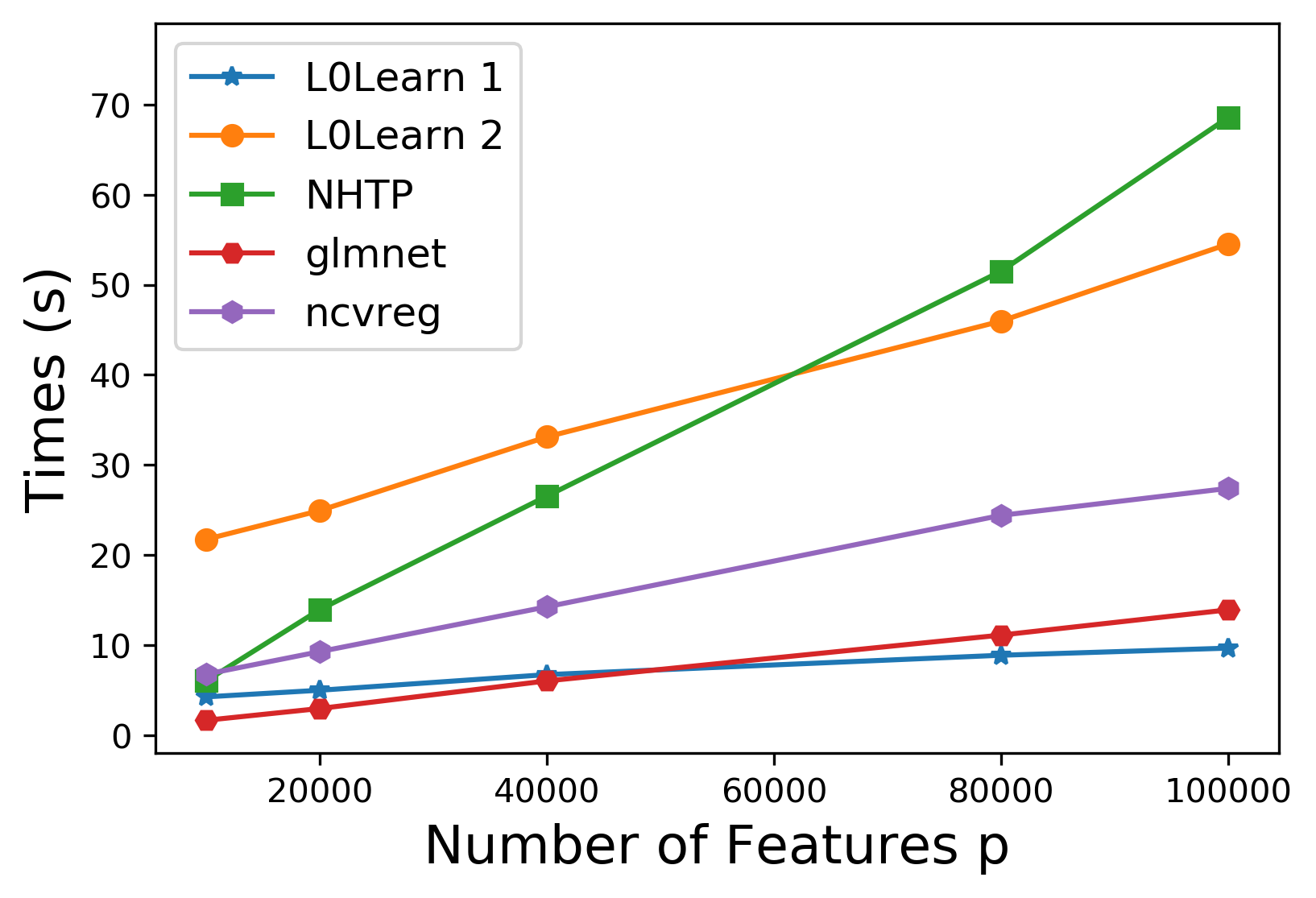}
    \caption{Runtimes to obtain good feasible solutions: Time (s) for obtaining a regularization path (with 100 solutions) for different values of $p$ (as discussed in Section~\ref{sec:good-upperbounds}). L0Learn 1 runs Algorithm 1 (CD), while L0Learn 2 runs Algorithm 2 (CD with Local Search).}
    \label{fig:Rruntimes}
\end{figure}
\subsubsection{Timings for MIP Algorithms: Global Optimality Certificates}
We compare the running time to solve Problem~\eqref{eq:MIP} by Algorithm 4 (IGA) versus solving~\eqref{eq:MIP} directly i.e., without using IGA. We consider the hinge loss and take $q=2$. We use Gurobi's MIP solver for our experiments. We set $\B\Sigma=\M{I}$ and $k^{\dagger}=5$ as described in Section \ref{section:expsetup}. We then generate $\epsilon_{i} \stackrel{\text{iid}}{\sim} N(0, \sigma^2)$ and set $y_i = \sign(\M{x}_{i}'\B\beta^{\dagger} + \epsilon_i)$, where the signal-to-noise ratio SNR=$\text{Var}(\M{X} \B\beta^{\dagger})/\sigma^2~=~10$. We set $n=1000$ and vary $p\in \{10^4, 2\times10^4, 3\times10^4, 4\times10^4, 5\times10^4\}$. 
For a fixed $\lambda_2 = 10$, $\lambda_0$ is chosen such that the final (optimal) solution has 5 nonzeros. The parameter $\mathcal{M}$ is set to $1.2 \|\tilde{\B\beta}\|_{\infty}$, where $\tilde{\B\beta}$ is the warm start obtained from Algorithm~2. We note that in all cases, the optimal solution recovers the support of the true solution $\B{\beta}^{\dagger}$. 

For this experiment, we use a machine with a 12-core Intel Xeon E5 @ 2.7 GHz and 64GB of RAM, running OSX 10.13.6 and Gurobi v8.1. The timings are reported in Table~\ref{table:miptimings}. 
\begin{table}[tb]
\centering
\begin{tabular}{@{}cccccc@{}}
\toprule
Tookit/p & 10000 & 20000 & 30000 & 40000 & 50000 \\ \midrule
IGA (this paper)    & 35    & 80   & 117   & 169   & 297*   \\
Gurobi   & 4446  & 21817 &  -     &  -     &  -     \\ \bottomrule
\end{tabular}
\caption{Runtimes to certify global optimality: Time(s) for solving an $\ell_0$-regularized problem with a hinge loss function, to optimality. ``-'' denotes that the algorithm does not terminate in a day. ``*'' indicates that the algorithm is terminated with a 0.05\% optimality gap.}
\label{table:miptimings}
\end{table}
The results indicate significant speed-ups. For example, for $p=20,000$ our algorithm terminates in 80 seconds whereas Gurobi takes 6 hours to solve~\eqref{eq:MIP} (to global optimality). For larger values of $p$, Gurobi cannot terminate in a day, while our algorithm can terminate to optimality in few minutes. It is worth noting that, empirically we observe IGA can attain low running times when sparse solutions are desired ($< 30$ nonzeros in our experience). This observation also applies to the state-of-the-art MIP solvers for sparse regression, e.g., see \citet{bestsubset, bertsimas2017sparse, hazimeh2020sparse}. For denser solutions, IGA can still be useful for obtaining optimality gaps, but certifying optimality with a very small optimality gap is expected to take longer.

\section{Conclusion}
We considered the problem of linear classification regularized with a combination of the $\ell_0$ and $\ell_q$ (for $q \in \{1,2\}$) penalties. We developed both approximate and exact algorithms for this problem. Our approximate algorithms are based on coordinate descent and local combinatorial search. We established convergence guarantees for these algorithms and demonstrated empirically that they can run in times comparable to the fast $\ell_1$-based solvers. Our exact algorithm can solve to optimality high-dimensional instances with $p \approx 50,000$. This scalability is achieved through the novel idea of integrality generation, which solves a sequence of mixed integer programs with a small number of binary variables, until converging to a globally optimal solution. We also established new estimation error bounds for a class of $\ell_0$-regularized classification problems and showed that these bounds compare favorably with the best known bounds for $\ell_1$ regularization. We carried out experiments on both synthetic and real datasets with $p$ up to $10^5$. The results demonstrate that our $\ell_0$-based combinatorial algorithms can have a significant statistical edge (in terms of variable selection and prediction) compared to state-of-the-art methods for sparse classification, such as those based on $\ell_1$ regularization or simple greedy procedures for $\ell_0$ regularization.

There are multiple promising directions for future work. From a modeling perspective, our work can be generalized to structured sparsity problems based on $\ell_0$ regularization \citep{hazimeh2021grouped}. Recent work shows that specialized BnB solvers can be highly scalable for $\ell_0$-regularized regression \citep{hazimeh2020sparse}. One promising direction is to scale our proposed integrality generation algorithm further by developing specialized BnB solvers for the corresponding MIP subproblems.

\acks{\sloppy We would like to thank the anonymous reviewers for their comments that helped improve the paper. 
The authors acknowledge research funding from the Office of Naval Research [Grants ONR-N000141512342 and ONR-N000141812298 (Young Investigator Award)], and the National Science Foundation [Grant NSF-IIS-1718258].}


\appendix
\section{Proofs and Technical Details}
\subsection{Proof of Theorem \ref{theorem:cdconvergence}}
We first present Lemma~\ref{lemma:cddecrease}, which establishes that after updating a single coordinate, there is a sufficient decrease in the objective function. The result of this Lemma will be used in the proof of the theorem.
\begin{lemma} \label{lemma:cddecrease}
Let $\{ \beta^{l} \}$ be the sequence of iterates generated by Algorithm 1. Then, the following holds for any $l$ and $i = 1 + (l\mod p)$:
\begin{equation}
P(\B\beta^{l}) - P(\B{\beta}^{l+1}  ) \geq \frac{\hat{L}_i - L_i}{2} ({\beta}^{l+1}_i - {\beta}^{l}_i)^2.
\end{equation}
\end{lemma}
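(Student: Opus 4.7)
The plan is to combine two ingredients: the block Descent Lemma \eqref{eq:blockdescent} (which controls how much the smooth part $g$ can decrease after a one-coordinate move with Lipschitz constant $L_i$) and the fact that $\beta_i^{l+1}$ is, by construction, the minimizer of the surrogate $\widetilde{P}_{\hat{L}_i}(\,\cdot\,;\B\beta^l)$ in the $i$-th coordinate (with the inflated constant $\hat L_i > L_i$). The gap $\hat L_i - L_i$ between the true and the surrogate curvature is exactly what will provide the claimed sufficient-decrease of order $(\hat L_i - L_i)(\beta_i^{l+1}-\beta_i^l)^2/2$.

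Concretely, set $i = 1+(l \bmod p)$ and $s = \beta_i^{l+1}-\beta_i^l$. Since Algorithm~1 only touches coordinate $i$, we have $\B\beta^{l+1} = \B\beta^l + \B e_i s$ and $\psi_j(\beta_j^{l+1})=\psi_j(\beta_j^l)$ for all $j\neq i$. The first step will be to upper bound $P(\B\beta^{l+1})$ using \eqref{eq:blockdescent}:
\[
P(\B\beta^{l+1}) \;\leq\; g(\B\beta^l) + s\,\nabla_i g(\B\beta^l) + \tfrac{L_i}{2}s^2 + \psi(\B\beta^{l+1}).
\]
The second step will be to lower bound $P(\B\beta^l)$ using the defining property of the CD update, namely that $\beta_i^{l+1} \in \argmin_{\alpha_i}\widetilde{P}_{\hat L_i}(\B\alpha;\B\beta^l)$ (where $\alpha_j=\beta_j^l$ for $j\neq i$). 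Evaluating the surrogate at $\alpha_i=\beta_i^l$ yields $\widetilde{P}_{\hat L_i}(\B\beta^l;\B\beta^l) = P(\B\beta^l)$, so the minimality of $\beta_i^{l+1}$ gives
\[
P(\B\beta^l) \;\geq\; \widetilde{P}_{\hat L_i}(\B\beta^{l+1};\B\beta^l) \;=\; g(\B\beta^l) + s\,\nabla_i g(\B\beta^l) + \tfrac{\hat L_i}{2}s^2 + \psi(\B\beta^{l+1}).
\]
Subtracting the first inequality from the second cancels the linear term, the value $g(\B\beta^l)$, and the regularizer $\psi(\B\beta^{l+1})$, leaving
\[
P(\B\beta^l) - P(\B\beta^{l+1}) \;\geq\; \tfrac{\hat L_i - L_i}{2}\,s^2 \;=\; \tfrac{\hat L_i - L_i}{2}(\beta_i^{l+1}-\beta_i^l)^2,
\]
which is the desired conclusion.

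There is no real obstacle here — the argument is a standard majorization-minimization accounting trick. The only subtlety worth stating carefully is that the $\psi$-terms on coordinates $j\neq i$ are identical on both sides (since those coordinates are unchanged), so only $\psi_i(\beta_i^{l+1})$ appears, which is why adding $\psi(\B\beta^{l+1})$ to both the descent lemma inequality and the surrogate definition is legitimate. The strict positivity of $\hat L_i - L_i$ (a hypothesis of Algorithm~1) is what makes the bound genuinely useful as a sufficient-decrease condition for the subsequent support-stabilization argument in Theorem~\ref{theorem:cdconvergence}.
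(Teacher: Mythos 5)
Your proof is correct and uses exactly the same ingredients as the paper's: the block Descent Lemma \eqref{eq:blockdescent} to bound $P(\B\beta^{l+1})$, the identity $\widetilde{P}_{\hat L_i}(\B\beta^l;\B\beta^l)=P(\B\beta^l)$, and the minimality of $\beta_i^{l+1}$ for the surrogate, with only a cosmetic difference in bookkeeping (you subtract two inequalities where the paper substitutes one into the other). No gaps.
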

\begin{proof}
Consider some $l \geq 0$ and fix $i = 1 + (l \mod p)$ (this corresponds to the coordinate being updated via CD). 
Applying~\eqref{eq:blockdescent} to $\B{\beta}^{l}$ and $\B{\beta}^{l+1}$ and adding $\psi(\B{\beta}^{l+1})$ to both sides, we have:
\begin{align*}
P(\B{\beta}^{l+1}) & \leq g(\B{\beta}^{l}) + ({\beta}^{l+1}_i - {\beta}^{l}_i) \nabla_i g(\B{\beta}^{l}) + \frac{L_i}{2} ({\beta}^{l+1}_i - {\beta}^{l}_i)^2 + \psi(\B{\beta}^{l+1}).
\end{align*}
By writing $\frac{L_i}{2} ({\beta}^{l+1}_i - {\beta}^{l}_i)^2$ as the sum of two terms: $\frac{L_i - \hat{L}_i}{2} ({\beta}^{l+1}_i - {\beta}^{l}_i)^2  + \frac{\hat{L}_i}{2} ({\beta}^{l+1}_i - {\beta}^{l}_i)^2$ and regrouping the terms we get:
\begin{align} \label{eq:Gupperbd}
P(\B{\beta}^{l+1}) \leq \widetilde{P}_{\hat{L}_i}(\B{\beta}^{l+1}; \B{\beta}^{l}) + \frac{L_i - \hat{L}_i}{2} ({\beta}^{l+1}_i -{\beta}^{l}_i)^2,
\end{align}
where, in the above equation, we used the definition of $\widetilde{P}$ from~\eqref{eq:Ptilde}. Recall from the definition of Algorithm 1 that ${\beta}^{l+1}_i \in \argmin_{\beta_i} \widetilde{P}_{\hat{L}_i}({\beta}^{l}_1, \dots, \beta_i, , \dots, {\beta}^{l}_p; \B{\beta}^{l})$ which implies that $\widetilde{P}_{\hat{L}_i}(\B{\beta}^{l}; \B{\beta}^{l}) \geq \widetilde{P}_{\hat{L}_i}(\B{\beta}^{l+1}; \B{\beta}^{l})$. But $ \widetilde{P}_{\hat{L}_i}(\B{\beta}^{l}; \B{\beta}^{l}) = P(\B{\beta}^{l})$ (this follows directly from \eqref{eq:Ptilde}). Therefore, we have $P(\B{\beta}^{l}) \geq \widetilde{P}_{\hat{L}_i}(\B{\beta}^{l+1}; \B{\beta}^{l})$. Plugging the latter inequality into \eqref{eq:Gupperbd} and rearranging the terms, we arrive to the result of the lemma.  
\end{proof} 

\smallskip

\noindent Next, we present the proof of Theorem \ref{theorem:cdconvergence}.

\begin{proof} 
\begin{itemize}
\item \textbf{Part 1}: We will show that $\text{Supp}(\B{\beta}^{l}) \neq \text{Supp}(\B{\beta}^{l+1})$ cannot happen infinitely often. Suppose for some $l$ we have $\text{Supp}(\B{\beta}^{l}) \neq \text{Supp}(\B{\beta}^{l+1})$. Then, for $i = 1 + (l \mod p)$, one of the following two cases must hold: (I) ${\beta}^{l}_i = 0 \neq {\beta}^{l+1}_i$ or (II) ${\beta}^{l}_i \neq 0 = {\beta}^{l+1}_i$. Let us consider case (I). 
Recall that the minimization step in Algorithm 1 is done using the thresholding operator defined in \eqref{eq:thresholding}. Since ${\beta}^{l+1}_i \neq 0$, \eqref{eq:thresholding} implies that $|{\beta}^{l+1}_i| \geq \sqrt{\frac{2 \lambda_0}{\hat{L}_i + 2 \lambda_2}}$. Plugging the latter bound into the result of Lemma \ref{lemma:cddecrease}, we arrive to 
\begin{equation} \label{eq:constdec}
P(\B{\beta}^{l}) - P(\B{\beta}^{l+1}) \geq \frac{\hat{L}_i - L_i}{\hat{L}_i + 2 \lambda_2} \lambda_0,
\end{equation}
where the r.h.s. of the above is positive, due to the choice  $\hat{L}_i > L_i$. The same argument can be used for case (II) to arrive to \eqref{eq:constdec}. Thus, whenever the support changes, \eqref{eq:constdec} applies, and consequently the objective function decreases by a constant value. Therefore, the support cannot change infinitely often (as $P(\B{\beta})$ is bounded below).

\item \textbf{Part 2}: 
First, we will show that under Assumption~\ref{assumption:strongconvexity}, the function $\B\beta_{S} \mapsto G(\B{\beta}_S)$ is strongly convex. 
This holds for the first case of Assumption~\ref{assumption:strongconvexity}, i.e., when $\lambda_2 > 0$. Next, we will consider the second case of  Assumption~\ref{assumption:strongconvexity} which states that $ P(\B{\beta}^{0}) < \lambda_0 u$ and $g(\B{\beta})$ is strongly convex when restricted to a support of size at most $u$. Since Algorithm 1 is a descent algorithm, we get $P(\B{\beta}^{l}) < \lambda_0 u$ for any $l \geq 0$. This implies $\| \B{\beta}^{l} \|_0 < u$ for any $l \geq 0$, and thus $|S| < u$. Consequently,  the function $\B\beta_{S} \mapsto g(\B{\beta}_S)$ is strongly convex (and so is $\B\beta_{S} \mapsto G(\B{\beta}_S)$). 

After support stabilization (from Part 1), the iterates generated by Algorithm 1 are the same as those generated by CD for minimizing the strongly convex function $G(\B{\beta}_S)$, which is guaranteed to converge (e.g., see \citealt{bertsekas2016nonlinear}). Therefore, we conclude that $\{ \B{\beta}^{l} \}$ converges to a stationary solution $\B{\beta}^{*}$ satisfying $\B{\beta}^{*}_S \in \argmin_{\B{\beta}_S} G(\B{\beta}_S)$ and $\B{\beta}^{*}_{S^c} = \B{0}$.

Finally, we will prove the two inequalities in \eqref{eq:CWminima}. Fix some $i \in S$. Then, from the definition of the thresholding operator in \eqref{eq:thresholding}, the following holds for every $l > N$ (i.e., after support stabilization) at which coordinate $i$ is updated:
\begin{equation}
|{\beta}^{l}_i | \geq \sqrt{\frac{2 \lambda_0}{\hat{L}_i + 2 \lambda_2}}.
\end{equation}
Taking the limit as $l \to \infty$ we arrive to the first inequality in  \eqref{eq:CWminima}, which also implies that $\text{Supp}(\B{\beta}^{*}) = S$. Now let us fix some $i \in S^c$. For every $l > N$ at which coordinate $i$ is updated, we have ${\beta}^{l}_i = 0$ and the following condition holds by the definition of the thresholding operator in \eqref{eq:thresholding}:
\begin{equation}
\frac{\hat{L}_i}{\hat{L}_i + 2\lambda_2}  \Big(  \Big|\frac{\nabla_i g(\B{\beta}^l )}{\hat{L}_i } \Big| - \frac{\lambda_1}{\hat{L}_i} \Big) < \sqrt{\frac{2 \lambda_0}{\hat{L}_i + 2 \lambda_2}}. 
\end{equation}
Taking the limit as $l \to \infty$ in the above and simplifying, we arrive to the second inequality in \eqref{eq:CWminima}.

\item \textbf{Part 3}: The support stabilization on $S$ (from Part 1) and the fact that $\B{\beta}^l \to \B{\beta}^{*}$ where $\text{Supp}(\B{\beta}^{*}) = S$ (from Part 2), directly imply that $\sign(\B{\beta}^l_S)$ stabilizes in a finite number of iterations. That is, there exists an integer $N'$ and a vector $\B{t} \in \{-1,1\}^{|S|}$ such that $\sign(\B{\beta}^l_S) = \B{t}$ for all $l \geq N'$. Therefore, for $l \geq N'$, the iterates of Algorithm 1 are the same as those generated by running coordinate descent to minimize the continuously differentiable function:
\begin{equation} \label{eq:aftersign}
g(\B{\beta})  + \lambda_0 | S | + \lambda_1 (  \sum_{i:t_i > 0 } \beta_i -  \sum_{i:t_i < 0 } \beta_i ) + \lambda_2 \| \B{\beta} \|_2^2.
\end{equation}
\citet{BeckConvergence} established a linear rate of convergence 
for the case of CD applied to a class of strongly convex and continuously differentiable functions, which includes~\ref{eq:aftersign}. Applying \citet{BeckConvergence}'s result to \eqref{eq:aftersign} leads to \eqref{eq:cdrate}.
\end{itemize}
\end{proof}

\subsection{Proof of Theorem \ref{theorem:localsearch}}
\begin{proof}
First, we will show that the algorithm terminates in a finite number of iterations. Suppose the algorithm does not terminate after $T$ iterations. Then, we have a sequence $\{ \B{\beta}^t \}_{0}^T$ of stationary solutions (since these are the outputs of Algorithm 1---see Theorem \ref{theorem:cdconvergence}). Let $S_{t} = \text{Supp}(\B{\beta}^t) $. Each stationary solution $\B{\beta}^t$ is a minimizer of the convex function $\B{\beta}_{S_t} \mapsto G(\B{\beta}_{S_t})$, and consequently $P(\B{\beta}^t) = \min \{ P(\B{\beta}) ~|~
\B{\beta} \in \mathbb{R}^{p}, \text{Supp}(\B\beta)=S_t \}$. Moreover, the definition of Step 2 and the descent property of cyclic CD imply $P(\B{\beta}^{T}) < P(\B{\beta}^{T-1}) < \dots < P(\B{\beta}^{0})$. Therefore, the same support cannot appear more than once in the sequence $\{ \B{\beta}^t \}_{0}^T$, and we conclude that the algorithm terminates in a finite number of iterations with a solution $\B{\beta}^{*}$. Finally, we note that $\B{\beta}^{*}$ is the output of Algorithm 1 so it must satisfy the characterization given in Theorem \ref{theorem:cdconvergence}. Moreover, the search in Step 2 must fail at $\B{\beta}^{*}$ (otherwise, the algorithm does not terminate), and thus \eqref{eq:inescapable} holds.
\end{proof}

\subsection{Proof of Lemma \ref{lemma:comparison}}
\begin{proof}
First, we recall that $\delta_{j}=|\hat{L} \beta^{*}_{j} - {\nabla_{j} g(\B{\beta}^{*})}|$ for any $j \in [p]$. Let $S = \text{Supp}(\B\beta^{*})$ and fix some $j \in S$. By Theorem \ref{theorem:cdconvergence}, we have $\B{\beta}^{*}_S \in \argmin_{\B{\beta}_S}~G(\B{\beta}_S)$, which is equivalent to $0 \in \partial G(\B\beta^{*}_{S})$. The zero subgradient condition directly implies that $\nabla_{j} g(\B\beta^{*}_{S}) = - \lambda_1 \sign(\beta_{j}^{*}) - 2 \lambda_2 |\beta_{j}^{*}|$. Substituting this expression into $\delta_j$, we get:
\begin{align}
    \delta_{j} & = | (\hat{L} + 2\lambda_2) \beta^{*}_{j} + \lambda_1 \sign(\beta_{j}^{*})| \nonumber \\
    & = (\hat{L} + 2\lambda_2) |\beta^{*}_{j}| + \lambda_1 \nonumber \\
    & \geq \sqrt{2 \lambda_0 (\hat{L} + 2 \lambda_2)} + \lambda_1,
    \label{eq:deltajcoef}
\end{align}
where \eqref{eq:deltajcoef} follows from inequality $|\beta^{*}_{j}| \geq \sqrt{\frac{2 \lambda_0}{\hat{L}_j + 2 \lambda_2}}$ (due to Theorem \ref{theorem:cdconvergence}) and the fact that $\hat{L} \geq \hat{L}_j$. Now fix some $i \notin S$. Using Theorem \ref{theorem:cdconvergence} and $\hat{L} \geq \hat{L}_i$:
\begin{align} \label{eq:deltaiout}
  \delta_i =  |\nabla_i g(\B{\beta}^{*} )|  \leq \sqrt{2 \lambda_0 (\hat{L}_i + 2 \lambda_2)} + \lambda_1 \leq \sqrt{2 \lambda_0 (\hat{L} + 2 \lambda_2)} + \lambda_1.
\end{align}
Inequalities \eqref{eq:deltajcoef} and \eqref{eq:deltaiout} imply that $\delta_j \geq \delta_i$ for any $j \in S$ and $i \notin S$. Since $\| \B{\beta}^{*} \|_0= k$, we have $\delta_{(k)} \geq \delta_i $ for any $i \notin S$, which combined with the fact that $\B{\beta}^{*}_S \in \argmin_{\B{\beta}_S}~G(\B{\beta}_S)$ (from Theorem \ref{theorem:cdconvergence}) implies that $\B{\beta}^{*}$ satisfies the fixed point conditions for IHT stated in Theorem \ref{theorem:ihtconvergence}.
\end{proof}

\subsection{Choice of $J$: sorted gradients}\label{sec:choice-J}
Let $\beta_j^1$ denote the solution obtained after the first application of the thresholding operator to minimize the function in \eqref{eq:contmin}. Note that we are interested in coordinates with nonzero $\beta_j^1$ (because in step 1 of Algorithm 3, we check whether $\beta_j$ should be zero). Coordinates with nonzero $\beta_j^1$ must satisfy $|\nabla_{j} g(\B{\beta}^{t} - \B{e}_i {\beta}_i^{t})| - \lambda_1 > 0$ (this follows from \eqref{eq:thresholding} with $\lambda_0 = 0$). Using \eqref{eq:blockdescent}, it can be readily seen that we have the following lower bound on the improvement in the objective if $|\nabla_{j} g(\B{\beta}^{t} - \B{e}_i {\beta}_i^{t})| - \lambda_1 > 0$:
\begin{equation} \label{eq:lowerbdimp}
P(\B{\beta}^{t} - \B{e}_i {\beta}_i^{t}) - P(\B{\beta}^{t} - \B{e}_i {\beta}_i^{t} + \B{e}_j \beta_j^1 ) \geq \frac{1}{2 (L_j + 2 \lambda_2)} (|\nabla_{j} g(\B{\beta}^{t} - \B{e}_i {\beta}_i^{t})| - \lambda_1)^2 - \lambda_0.
\end{equation}
The choices of $j \in S^c$ with larger $|\nabla_{j} g(\B{\beta}^{t} - \B{e}_i {\beta}_i^{t})|$ have a larger r.h.s. in inequality \eqref{eq:lowerbdimp} and thus are expected to have lower objectives. Therefore, instead of searching across all values of $j \in S^c$ in Step 2 of Algorithm 2, we restrict $j \in J$.

\subsection{Proof of Lemma \ref{lemma:sparse}}
\begin{proof}
Problem \eqref{eq:PMIO} can be rewritten as 
\begin{align*}
\min\limits_{\B{\beta}, \B{z}_{I}} \Big[ \min_{\B{z}_{\mathcal{I}^c}} \quad  & G(\B{\beta}) + \lambda_0 \sum\limits_{i=1}^{p} z_i  \Big] \\
& |\beta_i| \leq  \mathcal{M} z_i, ~~ i \in [p] \\
& z_i \in [0,1], ~~ i \notin \mathcal{I} \\
& z_i \in \left\{0,1\right\}, ~~ i \in \mathcal{I}. 
\end{align*}
Solving the inner minimization problem leads to $z_i = {|\beta_i|}/{\mathcal{M}}$ for every $i \in \mathcal{I}^c$. Plugging the latter solution into the objective function, we arrive to the result of the lemma.
\end{proof}

\subsection{Proof of Theorem~\ref{restricted-strong-convexity}} 

\subsubsection{A useful proposition}  \label{sec: hoeffding-sup}
Proposition \ref{hoeffding-sup} (stated below) is an essential step in the proof of Theorem~\ref{restricted-strong-convexity}. This allows us to control the supremum of a random variable (of interest) over a bounded set of $2k$ sparse vectors. \begin{prop} \label{hoeffding-sup}Let $\delta \in (0,1/2)$, $\tau =  6L \sqrt{\frac{\lambda(k)}{n} \left(  k \log\left( Rp/k \right) + \log\left( 1/\delta \right) \right)}$ and define
\begin{equation}\label{defn-deltah}
\Delta(\mathbf{z}) =
	\frac{1}{n} \sum_{i=1}^n f \left( \langle \mathbf{x}_i,  \B{\beta}^* + \mathbf{z}  \rangle ;  y_i \right)  - \frac{1}{n} \sum_{i=1}^n  f \left( \langle \mathbf{x}_i,  \B{\beta}^* \rangle ;  y_i \right),~~~\forall \mathbf{z}.
	    \end{equation}

If Assumptions \ref{asu1}, \ref{asu4}($k$) and  \ref{asu5}($\delta$) hold then:
	$$\mathbb{P} \left(  \sup \limits_{ \substack{ \mathbf{z} \in \mathbb{R}^{p} \\ \| \mathbf{z}  \|_0 \le 2k, \ \| \mathbf{z}  \|_2 \le 3R } } \left\{ \left|  \Delta(\mathbf{z})  - \mathbb{E}\left( \Delta(\mathbf{z}) \right) \right| -   \tau  \| \mathbf{z} \|_2\vee \tau^2 \right\} \ge 0  \right)  \le \delta.$$
\end{prop}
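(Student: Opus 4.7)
The plan is to control the supremum via the classical empirical-process recipe: single-point concentration at each fixed $\mathbf{z}$, an $\epsilon$-net plus union bound over sparsity patterns on each fixed-radius ball, and a peeling argument over $\|\mathbf{z}\|_2$ to convert the uniform shell-bounds into the multiplicative $\tau\|\mathbf{z}\|_2 \vee \tau^2$ structure.

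First, for fixed $\mathbf{z}$, write $A_i(\mathbf{z}) = f(\langle \mathbf{x}_i, \B{\beta}^* + \mathbf{z}\rangle; y_i) - f(\langle \mathbf{x}_i, \B{\beta}^*\rangle; y_i)$ so that $\Delta(\mathbf{z}) - \mathbb{E}\Delta(\mathbf{z}) = n^{-1}\sum_i (A_i - \mathbb{E}A_i)$. Assumption~\ref{asu1} gives the pointwise bound $|A_i(\mathbf{z})| \le L|\langle \mathbf{x}_i, \mathbf{z}\rangle|$, while Assumption~\ref{asu4}$(k)$ supplies the a.s.\ bound $\sum_i \langle \mathbf{x}_i, \mathbf{z}\rangle^2 \le n\lambda(k)\|\mathbf{z}\|_2^2$ for $2k$-sparse $\mathbf{z}$. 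Decomposing the centered summands into $A_i - \mathbb{E}[A_i \mid \mathbf{x}_i]$ plus $\mathbb{E}[A_i\mid \mathbf{x}_i] - \mathbb{E}A_i$, Hoeffding conditional on $\mathbf{X}$ handles the first (the bound $2L|\langle \mathbf{x}_i, \mathbf{z}\rangle|$ combined with the $\ell_2$ sum bound produces a variance proxy $L^2\lambda(k)\|\mathbf{z}\|_2^2/n$), and Bernstein on the i.i.d.\ functions $\mathbf{x}_i \mapsto \mathbb{E}[A_i\mid \mathbf{x}_i]$ handles the second with the same variance proxy. This yields the sub-Gaussian tail
$$\mathbb{P}\!\bigl(|\Delta(\mathbf{z}) - \mathbb{E}\Delta(\mathbf{z})| \ge s\bigr) \;\le\; 2\exp\!\left(-\,\frac{c\,n\,s^2}{L^2\,\lambda(k)\,\|\mathbf{z}\|_2^2}\right).$$

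Second, pass to a uniform bound on the radius-$r$ ball $\mathcal{B}_r = \{\mathbf{z}: \|\mathbf{z}\|_0 \le 2k,\ \|\mathbf{z}\|_2 \le r\}$ via a standard $\epsilon r$-net of cardinality at most $\binom{p}{2k}(3/\epsilon)^{2k} \le (3ep/(2k\epsilon))^{2k}$, so its log-size is of order $2k\log(ep/(k\epsilon))$. By Cauchy--Schwarz and Assumption~\ref{asu4}$(k)$, $|\Delta(\mathbf{z})-\Delta(\mathbf{z}')| \le 2L\sqrt{\lambda(k)}\|\mathbf{z}-\mathbf{z}'\|_2$ almost surely, which controls the discretization error. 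Applying the single-point tail at $s = \alpha \tau r$ and union-bounding over the net, the exponent $-cn\alpha^2\tau^2/(L^2\lambda(k))$ dominates the covering entropy precisely because $\tau = 6L\sqrt{\lambda(k)(k\log(Rp/k)+\log(1/\delta))/n}$; the technical conditions $(k/n)\log(p/k)\le 1$ and $7ne \le 3L\sqrt{\lambda(k)}\,p\log(p/k)$ in Assumption~\ref{asu5}$(\delta)$ are the inequalities one needs to close the books on constants, to choose $\epsilon$ small enough to render the Lipschitz discretization error negligible against $\tau r$, and to leave a net failure probability of at most $\delta 2^{-j}$ per shell. Finally, peel the range of $\|\mathbf{z}\|_2$: write the domain as the inner ball $\{\|\mathbf{z}\|_2 \le \tau\}$ together with the shells $\mathcal{S}_j = \{2^{j-1}\tau < \|\mathbf{z}\|_2 \le 2^j\tau\}$ for $j = 1,\ldots,J$ with $J \le \lceil\log_2(3R/\tau)\rceil$. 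The step above applied at $r = 2^j\tau$ gives on $\mathcal{S}_j$ a uniform bound of order $\tau\cdot 2^j\tau \le 2\tau\|\mathbf{z}\|_2$; applied at $r=\tau$ it gives the uniform bound $\tau^2$ on the inner ball. A geometric allocation of the failure budget across the $J+1$ regions yields total probability at most $\delta$.

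The main obstacle I anticipate is twofold. First, the calibration of constants: tracking the specific factor $6$ in $\tau$ through the covering-entropy, peeling, and Lipschitz-discretization steps requires careful bookkeeping, and it is exactly this bookkeeping that determines the particular form of Assumption~\ref{asu5}$(\delta)$. Second, the single-point concentration must be arranged so that its variance proxy scales with $\|\mathbf{z}\|_2^2$ rather than with some uniform $R^2$; otherwise the peeling step cannot upgrade a uniform shell-bound to a multiplicative bound in $\|\mathbf{z}\|_2$, and the factor $\tau\|\mathbf{z}\|_2$ in the conclusion is unavailable. A subtle point in this direction is ensuring that Bernstein on the $\mathbf{x}_i$-conditional means lies in its sub-Gaussian regime at the relevant scale, which again is where the scaling condition $7ne \le 3L\sqrt{\lambda(k)}p\log(p/k)$ is invoked.
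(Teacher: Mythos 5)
Your overall architecture is a legitimate alternative to the paper's: where you peel over dyadic shells $2^{j-1}\tau<\|\mathbf{z}\|_2\le 2^j\tau$ and run a separate net-plus-union-bound on each shell, the paper avoids peeling entirely by building the multiplicative structure into the single-point deviation level from the start --- it bounds $\mathbb{P}\bigl(|\Delta(\mathbf{z})-\mathbb{E}\Delta(\mathbf{z})|\ge t(\|\mathbf{z}\|_2\vee\eta)\bigr)$ with variance proxy $L^2\lambda(k)\|\mathbf{z}\|_2^2/n$, so the ratio $(\|\mathbf{z}\|_2\vee\eta)^2/\|\mathbf{z}\|_2^2\ge 1$ makes the tail uniform in $\|\mathbf{z}\|_2$, and then a \emph{single} $\epsilon$-net of the whole radius-$3R$ ball suffices (with $\eta\asymp L\sqrt{\lambda(k)k\log(p/k)/n}$ and $\epsilon=\eta^2/(4L\sqrt{\lambda(k)})$ chosen so that the discretization error costs only a constant factor in $t$). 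Both routes deliver the same conclusion, and your bookkeeping of the covering entropy against the exponent is the same calculation the paper does. One correction on attribution: the condition $7ne\le 3L\sqrt{\lambda(k)}\,p\log(p/k)$ is used in the paper only to absorb the net-resolution factor $7e/(2\epsilon)$ into $(Rp/k)^{2k}$ in the covering-number bound; it plays no role in any concentration regime.

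The genuine gap is in your first step. You correctly notice that Hoeffding conditional on $\mathbf{X}$ centers $\Delta(\mathbf{z})$ at its \emph{conditional} mean, and you propose to bridge to the unconditional mean by applying Bernstein to the i.i.d.\ terms $\mathbb{E}[A_i\mid\mathbf{x}_i]$. But the only almost-sure bound on an individual term available from Assumption~\ref{asu4}($k$) is $|\mathbb{E}[A_i\mid\mathbf{x}_i]|\le L|\langle\mathbf{x}_i,\mathbf{z}\rangle|\le L\sqrt{n\lambda(k)}\,\|\mathbf{z}\|_2$, so Bernstein at the deviation level $s=\tau\|\mathbf{z}\|_2$ has sub-exponential term $ns^2/(bs)=\sqrt{n}\,\tau/(L\sqrt{\lambda(k)})\asymp\sqrt{k\log(Rp/k)+\log(1/\delta)}$, which is far too small to beat the covering number $(Rp/k)^{ck}$; the sub-Gaussian regime you need ends at $s\asymp L\sqrt{\lambda(k)/n}\,\|\mathbf{z}\|_2$, a factor $\sqrt{k\log(Rp/k)}$ below where you must operate, and the scaling condition you invoke does not repair this. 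As written, this step fails. (To be fair, the paper does not resolve this issue either --- it applies Hoeffding conditionally on $\mathbf{X}$ and silently identifies the conditional and unconditional means, later reading $\mathbb{E}\Delta(\mathbf{h})$ as $\mathcal{L}(\B{\beta}^*+\mathbf{h})-\mathcal{L}(\B{\beta}^*)$; a rigorous treatment needs either an additional moment or boundedness assumption on $\langle\mathbf{x},\mathbf{z}\rangle$, or a symmetrization argument that works with the unconditional law directly.) The rest of your plan --- the $\|\mathbf{z}\|_2^2$-scaled variance proxy, the per-support nets, the Lipschitz control $|\Delta(\mathbf{z})-\Delta(\mathbf{z}')|\lesssim L\sqrt{\lambda(k)}\|\mathbf{z}-\mathbf{z}'\|_2$, and the peeling --- is sound.
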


\begin{proof}
We divide the proof into 3 steps. First, we upper-bound the quantity $\left| \Delta(\mathbf{z}) - \mathbb{E}\left( \Delta(\mathbf{z})\right) \right|$ for any $2k$ sparse vector with Hoeffding inequality. Second, we extend the result to the maximum over an $\epsilon$-net. We finally control the maximum over the compact set and derive our proposition.

\textbf{Step 1: } We fix $\mathbf{z} \in \mathbb{R}^p$ such that $\| \mathbf{z} \|_0 \le 2k$ and introduce the random variables $Z_i, \forall i$ as follows
$$ Z_i = f \left( \langle \mathbf{x}_i,  \B{\beta^*} + \mathbf{z}  \rangle ;  y_i \right) -  f \left( \langle \mathbf{x}_i,  \B{\beta^*}   \rangle ;  y_i \right).$$ 
Assumption \ref{asu1} guarantees that $f(.;y)$ is Lipschitz with constant $L$, which leads to:
$$|Z_i | \le L \left| \langle \mathbf{x}_i, \mathbf{z}  \rangle \right|.$$
Note that $\Delta(  \mathbf{z}) = \frac{1}{n} \sum \limits_{i=1}^n Z_i$. We introduce a small quantity $\eta>0$ later explicited in the proof. Using Hoeffding's inequality and Assumption \ref{asu4}$(k)$ it holds: $\forall t>0$,
\begin{align} \label{upper-bound-2k-sparse}
\begin{split}
\mathbb{P}\left( \left| \Delta( \mathbf{z}) - \mathbb{E}\left( \Delta( \mathbf{z}) \right) \right| \ge t  \left(\| \mathbf{z} \|_2\vee \eta\right) \biggr\rvert \mathbf{X} \right) 
&\le 2\exp \left(- \frac{2 n^2 t^2  \left(\| \mathbf{z} \|_2\vee \eta\right)^2  }{ \sum_{i=1}^n L^2  \langle \mathbf{x}_i, \mathbf{z}  \rangle^2}  \right)\\
&= 2\exp \left(- \frac{2 n^2 t^2 \left(\| \mathbf{z} \|_2^2\vee \eta^2\right) }{ L^2 \| \mathbf{X} \mathbf{z} \|_2^2 }  \right)\\
&\le 2\exp \left(- \frac{2 n t^2  \left(\| \mathbf{z} \|_2^2\vee \eta^2\right) }{ L^2 \lambda(k) \| \mathbf{z} \|_2^2 }  \right)\\
&\le 2\exp \left(- \frac{2 n t^2}{ L^2 \lambda(k) }  \right).
\end{split}
\end{align}
Note that in the above display, the r.h.s. bound does not depend upon $\M{X}$, the conditioning event in the l.h.s. of display~\eqref{upper-bound-2k-sparse}. 

\textbf{Step 2: } We consider an $\epsilon$-net argument to extend the result to any $2k$ sparse vector satisfying $\| \mathbf{z} \|_2 \le 3R$. We recall that an $\epsilon$-net of a set $\mathcal{I}$ is a subset $\mathcal{N}$ of $\mathcal{I}$ such that each element of ${\mathcal I}$ is at a distance at most  $\epsilon$ of $\mathcal{N}$.

\sloppy Lemma 1.18 in \citet{lecture-notes} proves that for any value $\epsilon \in (0,1)$, the ball  $\left\{ \mathbf{z} \in \mathbb{R}^d: \ \|  \mathbf{z} \|_2 \le 3R  \right\}$ has an $\epsilon$-net of cardinality $| \mathcal{N} | \le \left(\frac{6R+1}{\epsilon} \right)^d.$ Consequently, we can fix an $\epsilon$-net $\mathcal{N}_{k,R}$ of the set $ \mathcal{I}_{k,R} = \left\{ \mathbf{z} \in \mathbb{R}^p: \ \|  \mathbf{z} \|_0 = 2k \ ; \ \|  \mathbf{z} \|_2 \le 3R  \right\}$ with cardinality $\binom{p}{2k} \left(\frac{6R+1}{\epsilon} \right)^{2k}$. This along with 
 equation~\eqref{upper-bound-2k-sparse} leads to: $\forall t>0$,
\begin{equation}\label{union-bound-11} 
\begin{aligned}
\mathbb{P}\left( \sup \limits_{ \mathbf{z} \in \mathcal{N}_{k,R} } \big(\left| \Delta( \mathbf{z}) - \mathbb{E}\left( \Delta( \mathbf{z}) \right) \right| - t  \left(\| \mathbf{z} \|_2\vee \eta\right) \big) \ge 0 \right)  \\
\le \binom{p}{2k} \left(\frac{6R+1}{\epsilon} \right)^{2k} 2 \exp \left(- \frac{2 n t^2 }{ L^2 \lambda(k) } \right).
\end{aligned}
\end{equation}

\textbf{Step 3: } We finally extend the result to any vector in $\mathcal{I}_{k,R}$. This is done by expressing the supremum of the random variable $\left| \Delta( \mathbf{z}) - \mathbb{E}\left( \Delta( \mathbf{z}) \right) \right|$  over the entire set $\mathcal{I}_{k,R}$ with respect to its supremum over the $\epsilon$-net $\mathcal{N}_{k,R}$.


For $\mathbf{z} \in \mathcal{I}_{k,R}$, there exists $\mathbf{z}_0 \in \mathcal{N}_{k,R}$ such that $\| \mathbf{z} - \mathbf{z}_0 \|_2 \le \epsilon.$ With Assumption \ref{asu1} and Cauchy-Schwartz inequality, we obtain:
\begin{equation}\label{delta-mz-11}
\left| \Delta(\mathbf{z})- \Delta(\mathbf{z}_0) \right| \le \frac{1}{n} \sum_{i=1}^n L \left| \langle \mathbf{x}_i, \mathbf{z} - \mathbf{z}_0 \rangle  \right| \le \frac{1}{\sqrt{n} } L \| \mathbf{X}(\mathbf{z} - \mathbf{z}_0 ) \|_2 \le  L \sqrt{\lambda(k)} \epsilon.
\end{equation}
For a fixed value of $t$, we define the operator 
$$f_t(\mathbf{z}) = \left| \Delta(\mathbf{z}) - \mathbb{E}\left( \Delta(\mathbf{z}) \right) \right| - t   \left(\| \mathbf{z} \|_2\vee \eta\right), \ \forall \mathbf{z}.$$ 

Using the (reverse) triangle inequality, it holds that:
\begin{equation}\label{triangle-ineq-111}
\begin{aligned}
\left|  \Delta(\mathbf{z})  -  \Delta(\mathbf{z}_0)   \right| \geq \left| \Delta(\mathbf{z}) - \mathbb{E}\left( \Delta(\mathbf{z}) \right) \right| - 
\left| \Delta(\mathbf{z}_0) - \mathbb{E}\left( \Delta(\mathbf{z}_0) \right) \right| -
\left| \mathbb{E}\left( \Delta(\mathbf{z}) \right) - \mathbb{E}\left( \Delta(\mathbf{z}_0) \right) \right|.
\end{aligned}
\end{equation}

In addition, note that:
\begin{equation}\label{triangle-ineq-112}
\left( \| \mathbf{z} - \mathbf{z}_0 \|_2 \vee \eta\right) ~~+~~~ (\| \M{z} \|_2 \vee \eta)~~~ \geq~~~\| \M{z}_0\|_2 \vee \eta.
\end{equation}
Using~\eqref{triangle-ineq-111} and~\eqref{triangle-ineq-112} in $f_{t}(\M{z})$ we have:
\begin{align}\label{supremum0} 
\begin{split}
f_t(\mathbf{z_0}) 
&\ge f_t(\mathbf{z}) -  \left| \Delta(\mathbf{z})- \Delta(\mathbf{z}_0) \right|  - \left| \mathbb{E}\left(\Delta(\mathbf{z})- \Delta(\mathbf{z}_0) \right)\right| - t \left( \| \mathbf{z} - \mathbf{z}_0 \|_2 \vee \eta\right).
\end{split}
\end{align}
Now using~\eqref{delta-mz-11} and Jensen's inequality, we have:
\begin{equation}\label{triangle-jensen1}
\begin{aligned}
\left| \Delta(\mathbf{z})- \Delta(\mathbf{z}_0) \right| \le  L \sqrt{\lambda(k)} \epsilon \\
\left|\mathbb{E}\left( \Delta(\mathbf{z})- \Delta(\mathbf{z}_0) \right)\right| \le  L \sqrt{\lambda(k)} \epsilon.
\end{aligned}
\end{equation}
Applying~\eqref{triangle-jensen1} and $\| \M{z} - \M{z}_0 \|_2 \leq \epsilon$ to the right hand side of~\eqref{supremum0}, we have:
\begin{align}\label{supremum}
f_t(\mathbf{z}_0) \ge  f_t(\mathbf{z}) - 2 L \sqrt{\lambda(k)} \epsilon - t (\epsilon \vee \eta).
\end{align}
Suppose we choose
$$\eta= 4 L \sqrt{ \frac{\lambda(k)}{n}k\log\left( p/k \right)}$$
$$ \epsilon= \frac{\eta^2}{4 L \sqrt{\lambda(k)}} ~~~=  \sqrt{ \lambda(k) } \frac{4 Lk \log \left( p/k \right)}{n}  $$
$$ t \geq \eta/4.$$
This implies that:
$$\epsilon \leq \eta \text{~~~~~~~~(using $k/n\log(p/k) \leq 1$ by Assumption 5)},$$ 
$$L \sqrt{\lambda(k)} \epsilon \leq t \eta\text{~~~~~~~~(using $t \geq \eta/4 \implies t\eta \geq \frac{\eta^2}{4}=L\sqrt{\lambda(k)}\epsilon$)}.$$ 
Using the above, we obtain a lower bound to the r.h.s. of~\eqref{supremum}. This leads to the following chain of inequalities:
\begin{align}\label{supremum2} 
\begin{split}
f_t(\mathbf{z}) - 2 L \sqrt{\lambda(k)} \epsilon - t (\epsilon \vee \eta)
&\ge f_t(\mathbf{z}) - 3t \eta \\
&=  \left| \Delta(\mathbf{z}) - \mathbb{E}\left( \Delta(\mathbf{z}) \right) \right| - t   \left(\| \mathbf{z} \|_2\vee \eta\right) - 3t \eta \\
&\ge \left| \Delta(\mathbf{z}) - \mathbb{E}\left( \Delta(\mathbf{z}) \right) \right| -4 t   \left(\| \mathbf{z} \|_2\vee \eta\right) \\
&=f_{4t}(\mathbf{z}).
\end{split}
\end{align}
Consequently, Equations \eqref{supremum} and \eqref{supremum2} lead to: 
{
$$\forall t \ge \eta/4, \ \forall \mathbf{z} \in \mathcal{I}_{k,R}, \ \exists \mathbf{z}_0 \in \mathcal{N}_{k,R}: \ f_{4t}(\mathbf{z}) \le f_t(\mathbf{z}_0),$$ }
which can be equivalently written as:
{
\begin{equation}\label{supremum-domination}
\sup \limits_{ \mathbf{z} \in \mathcal{I}_{k,R} } f_t(\mathbf{z}) \le \sup \limits_{ \mathbf{y} \in \mathcal{N}_{k,R} } f_{t/4}(\mathbf{y}), \ \forall t \ge \eta.
\end{equation}
}

Lemma 2.7 in \citet{lecture-notes} gives the relation $\binom{p}{2k} \le \left( \frac{pe}{2k}\right)^{2k}$. Using equation \eqref{supremum-domination} along with the 
union-bound~\eqref{union-bound-11}, it holds that: $\forall t \ge \eta$,
\begin{equation}\label{7Rpe-bound}
\begin{aligned} 
\mathbb{P}\left( \sup \limits_{ \mathbf{z} \in \mathcal{I}_{k,R} } f_t(\mathbf{z}) \ge 0 \right)
&\le \mathbb{P}\left( \sup \limits_{ \mathbf{z} \in \mathcal{N}_{k,R} } f_{t/4}(\mathbf{z}) \ge 0 \right)   \\
& \leq \binom{p}{2k} \left(\frac{6R+1}{\epsilon} \right)^{2k} 2 \exp \left(- \frac{2 n t^2 }{ 16L^2 \lambda(k) } \right)   \\
& \leq \left(\frac{pe}{2k}\right)^{2k} \left(\frac{6R+1}{\epsilon} \right)^{2k} 2 \exp \left(- \frac{2 n t^2 }{ 16L^2 \lambda(k) } \right)   \\
& \le  2\left( \frac{pe}{2k} \ \frac{7R}{\epsilon} \right)^{2k} \exp \left(- \frac{2 n t^2 }{16 L^2 \lambda(k) } \right). 
\end{aligned}
\end{equation}
By Assumption~\ref{asu5} we have $7e n \le 3L \sqrt{\lambda(k) } p \log \left( p/k \right)$; and using the definition of $\epsilon$ (above), it follows that:
$$ \frac{7e}{2\epsilon} \le \frac{p}{k} \le \frac{Rp}{k},$$
where in the last inequality we used the assumption $R \geq 1$.
Using this in~\eqref{7Rpe-bound} we have:
\begin{equation}\label{final-supremum}
\mathbb{P}\left( \sup \limits_{ \mathbf{z} \in \mathcal{I}_{k,R} } f_t(\mathbf{z}) \ge 0 \right)\le  2\left( \frac{Rp}{k} \right)^{4k}  \exp \left(- \frac{2 n t^2 }{16 L^2 \lambda(k) } \right), \ \forall t \ge \eta.
\end{equation}
We want the right-hand side of Equation~\eqref{final-supremum} to be smaller than $\delta$. To this end, we need to select $t^2 \ge \frac{16 L^2 \lambda(k)}{2n}\left[ 4k \log\left( \frac{ R p}{k }\right) + \log(2) + \log\left( \frac{1}{\delta}\right) \right]$ and $t \ge \eta$. 
A possible choice for $t$ that we use is:
$$\tau = 6L \sqrt{\frac{\lambda(k)}{n} \left(  k \log\left( Rp/k \right) + \log\left( 1/\delta \right) \right) }.$$

We conclude that with probability at least $1-\delta$: 
$$ \sup \limits_{ \mathbf{z} \in \mathcal{I}_{k,R} } f_{\tau}(\mathbf{z}) \le 0.$$
Note that 
$$f_{\tau}(\mathbf{z}) = \left| \Delta(\mathbf{z}) - \mathbb{E}\left( \Delta(\mathbf{z}) \right) \right| - \tau   \left(\| \mathbf{z} \|_2\vee \eta\right) \ge  \left| \Delta(\mathbf{z}) - \mathbb{E}\left( \Delta(\mathbf{z}) \right) \right| - \tau \| \mathbf{z} \|_2\vee \tau^2.$$ 
It then holds with probability at least $1-\delta$ that:  
$$\sup \limits_{ \substack{ \mathbf{z} \in \mathbb{R}^{p} \\ \| \mathbf{z}  \|_0 \le 2k, \ \| \mathbf{z}  \|_2 \le 3R } } \left\{ \left|  \Delta(\mathbf{z})  - \mathbb{E}\left( \Delta(\mathbf{z}) \right) \right| -   \tau  \| \mathbf{z} \|_2\vee \tau^2 \right\} \le 0,$$
which concludes the proof.
\end{proof}

\subsubsection {Proof of Theorem \ref{restricted-strong-convexity} }  \label{sec: appendix_restricted-strong-convexity}
\begin{proof}
The $2k$ sparse vector  $\mathbf{h} = \hat{\B{\beta}} - \B{\beta}^*$ satisfies: $\| \mathbf{h} \|_2 \le \| \hat{\B{\beta}} \|_2 + \| \B{\beta}^*\|_2 \le 3R.$
Thus applying 
Proposition \ref{hoeffding-sup} to $\Delta(\M{h})$ (see the definition in Equation~\ref{defn-deltah})
it holds with probability at least $1 - \delta$ that:
\begin{align} \label{lower-bound-expectation}
\begin{split}
\Delta(\mathbf{h})
& \ge  \mathbb{E}\left( \Delta(\mathbf{h}) \right) -   \tau  \| \mathbf{h}\|_2\vee \tau^2\\
&=  \frac{1}{n} \mathbb{E} \left( \sum_{i=1}^n f \left( \langle \mathbf{x}_i,  \B{\beta}^* + \mathbf{h}  \rangle ;  y_i \right)  - \sum_{i=1}^n  f \left( \langle \mathbf{x}_i,  \B{\beta}^* \rangle ;  y_i \right)  \right) -  \tau  \| \mathbf{h}\|_2\vee \tau^2\\
&= \mathbb{E} \left[ f \left( \langle \mathbf{x}_i,  \B{\beta}^* + \mathbf{h}  \rangle ;  y_i \right)  -  f \left( \langle \mathbf{x}_i,  \B{\beta}^*  \rangle ;  y_i \right)  \right] -   \tau \| \mathbf{h}\|_2\vee \tau^2\\
&=\mathcal{L}(\B{\beta}^* + \mathbf{h}) - \mathcal{L}(\B{\beta}^*) -   \tau  \| \mathbf{h}\|_2\vee \tau^2.
\end{split}
\end{align}
To control the difference $\mathcal{L}(\B{\beta}^* + \mathbf{h}) - \mathcal{L}(\B{\beta}^*)$ we consider the following two cases.


\noindent {\bf{Case 1}} ($\| \mathbf{h} \|_2 \le r(k)$): If $\| \mathbf{h} \|_2 \le r(k)$ then by definition of $r(k)$ in~\eqref{growth-cond-defn}  it holds that
\begin{equation}\label{case1-LB}
\mathcal{L}(\B{\beta}^* + \mathbf{h}) - \mathcal{L}(\B{\beta}^*) \ge \frac{1}{4}  \kappa(k) \|\mathbf{h} \|_2^2.
\end{equation}


\noindent {\bf{Case 2}:} 
We consider the case when $\| \mathbf{h} \|_2 > r(k)$. Since $\B\beta \mapsto \mathcal{L}(\B\beta)$ is convex, the one-dimensional function 
$t \to \mathcal{L}\left( \B{\beta}^* + t \mathbf{z} \right)$ is convex and it holds:
\begin{align} \label{trick}
\begin{split}
\mathcal{L}(\B{\beta}^* + \mathbf{h}) - \mathcal{L}(\B{\beta}^*) 
& \ge  \frac{\|\mathbf{h} \|_2}{r(k) } \left\{ \mathcal{L} \left(\B{\beta}^* + \frac{r(k) }{\|\mathbf{h} \|_2} \mathbf{h} \right)   - \mathcal{L}	(\B{\beta}^* )  \right\} \\
& \ge  \frac{\|\mathbf{h} \|_2}{r(k) } \inf \limits_{ \substack{\mathbf{z}: \  \| \mathbf{z} \|_0 \le 2k \\ \ \ \ \ \ \| \mathbf{z} \|_2 = r(k)}  }  \left\{ \mathcal{L}(\B{\beta}^* + \mathbf{z} )   - \mathcal{L}(\B{\beta}^*)  \right\} \\
&\ge    \frac{\|\mathbf{h} \|_2}{r(k)} \ \frac{1}{4} \kappa(k)  r(k)^2 \\
&= \frac{1}{4} \kappa(k) r(k)  \|\mathbf{h} \|_2.
\end{split}
\end{align}
In~\eqref{trick}, the first inequality follows by observing that the one-dimensional function
$x \mapsto (g(a + x) - g(a))/x$ defined on $x >0$ is increasing for any one-dimensional convex function $x\mapsto g(x)$.
The last inequality in display~\eqref{trick} follows from the definition of $r(k)$ as in~\eqref{growth-cond-defn}.

Combining Equations \eqref{lower-bound-expectation}, \eqref{case1-LB} and \eqref{trick}, we obtain the following restricted strong convexity property (which holds with probability at least $1-\delta$):
$$\Delta(\mathbf{h}) \ge \frac{1}{4}  \kappa(k) \left\{ \|\mathbf{h}\|_2^2 \wedge  r(k)  \|\mathbf{h}\|_2 \right\}  -  \tau  \| \mathbf{h} \|_2\vee \tau^2.$$
\end{proof}

\subsection {Proof of Theorem \ref{main-results} } \label{sec: appendix_main-results}
\begin{proof}
Using the observation that $\hat{\B\beta}$ is a minimizer of Problem~\eqref{learning-l0};
and the representation $\hat{\B{\beta}} = \B{\beta}^* + \mathbf{h}$, it holds that:
$$ \frac{1}{n} \sum_{i=1}^n f \left( \langle \mathbf{x}_i,  \B{\beta}^* + \mathbf{h}  \rangle ;  y_i \right)  \le \frac{1}{n} \sum_{i=1}^n  f \left( \langle \mathbf{x}_i,  \B{\beta}^* \rangle ;  y_i \right).$$
This relation is equivalent to saying that $\Delta(\mathbf{h}) \le 0$. Consequently, by combining this relation with~\eqref{conclude-theorem-restricted-strong-convexity} (see Theorem~\ref{restricted-strong-convexity}), it holds with probability at least $1-\delta$:
\begin{equation}\label{thm5-ineq1}
\frac{1}{4}  \kappa(k) \left\{ \|\mathbf{h}\|_2^2 \wedge r(k) \|\mathbf{h}\|_2 \right\} \le \tau  \|\mathbf{h}\|_2\vee \tau^2.
\end{equation}
We now consider two cases.


\noindent{Case 1:} Let $\|\mathbf{h}\|_2 \le \tau$. Then we have that
$$\|\mathbf{h}\|_2 \le \tau = 6L \sqrt{\frac{\lambda(k)}{n} \left(  k\log\left( Rp/k \right) + \log\left( 1/\delta \right) \right) }.$$ 


\noindent{Case 2:} We consider the case where 
$\|\mathbf{h}\|_2 > \tau$.  With probability $1-\delta$ it holds (from Inequality~\ref{thm5-ineq1}) that:
$$\frac{1}{4}  \kappa(k) \left\{ \|\mathbf{h}\|_2^2 \wedge r(k) \|\mathbf{h}\|_2 \right\} \le \tau  \|\mathbf{h}\|_2,$$
which can be simplified to:
\begin{equation} \label{main-theorem-constant}
\|\mathbf{h}\|_2 \wedge r(k) \le \frac{4\tau}{\kappa(k)}
=  \frac{24L}{\kappa(k)} \sqrt{\frac{\lambda(k)}{n} \left( k \log\left( R p/k \right) + \log\left( 1/\delta \right) \right) }.
\end{equation}
Note that by Assumption~\ref{asu5}($\delta$), the r.h.s. of the above is smaller than $r(k)$. The l.h.s. of~\eqref{main-theorem-constant} is the minimum of two terms $\|\mathbf{h}\|_2$ and $r(k)$; and since this is lower than $r(k)$, we conclude that 
$$\|\mathbf{h}\|_2 \leq \frac{24L}{\kappa(k)} \sqrt{\frac{\lambda(k)}{n} \left( k \log\left( R p/k \right) + \log\left( 1/\delta \right) \right) }. $$ 

Combining the results from Cases 1 and 2, we conclude that with probability at least $1 - \delta$, the following holds:
$$ \|\mathbf{h}\|_2 \leq C L \max \left\{1 , \frac{1}{\kappa(k)} \right\} \sqrt{\frac{\lambda(k)}{n} \left( k \log\left( R p/k \right) + \log\left( 1/\delta \right) \right) },$$
where $C$ is an universal constant. This concludes the proof of the theorem.

\end{proof}

\subsection {Proof of Corollary \ref{main-corollary}: } \label{sec: appendix_main-corollary}
\begin{proof}
Let us define the random variable: 
$$ W =  \frac{1}{L^2 \widetilde{\kappa}^2 \lambda(k)} \| \hat{\B{\beta}}  - \B{\beta}^*\|_2^2.$$
Since the difference $\hat{\B{\beta}}  - \B{\beta}^*$ is bounded, then $W$ is upper-bounded by a constant. In addition, because Assumption \ref{asu5} is satisfied for $\delta>0$ small enough, Theorem \ref{main-results} leads to the existence of a constant $C$ such that
$$\mathbb{P} \left( W \le \frac{C}{n}   \left(k \log(Rp/k) + \log(1/ \delta)  \right) \right) \ge 1 - \delta, \ \forall \delta \in (0,1).$$

This relation can be equivalently expressed as $$\mathbb{P}\left( W/C \ge \frac{k \log(p/k)}{n} + t \right) \le e^{-nt}, \  \forall t \ge 0.$$ 

Let us define $H = (k/n) \log(p/k)$. By integration it holds:
\begin{align} 
\begin{split}
\mathbb{E}(W) &= \displaystyle \int_0^{+ \infty}  C\mathbb{P}\left( |W| /C \ge t \right)dt\\
&\le \displaystyle \int_0^{+ \infty}  C \mathbb{P}\left( |W| /C \ge t + H \right) dt + CH\\
&\le \displaystyle \int_0^{+ \infty}  C e^{-nt}dt + CH =  \frac{C}{n} + CH \le 2CH
\end{split}
\end{align}

We finally conclude:
$$\mathbb{E} \| \hat{\B{\beta}} - \B{\beta}^*  \|^2_2  \lesssim  L^2 \lambda(k)\widetilde{\kappa}^2 \frac{k \log(p/k)}{n}.$$
\end{proof}

\section{Additional Experiments}\label{appendix-sec:expts}
Here we present additional experimental results complementing the real data set experiments presented in Section~\ref{sec:real-datasets}.
We compare the performance of our proposed algorithm with $\ell_{1}$-regularization. In particular, Figure~\ref{fig:aucvssupp-appendix-L0L2CD} considers the $\ell_0$-$\ell_2$ penalty with CD (Algorithm~1);
Figure~\ref{fig:aucvssupp-appendix-L0L1CDwLoc} considers the $\ell_0$-$\ell_1$ penalty with local search (Algorithm~2 with $m=1$); and 
Figure~\ref{fig:aucvssupp-appendix-L0L1CD} considers the  $\ell_0$-$\ell_1$ penalty with CD (Algorithm~1).

\begin{figure}[tb]
\centering
\textbf{L0L2 (CD) vs. L1}\par\medskip
\includegraphics[scale=0.5]{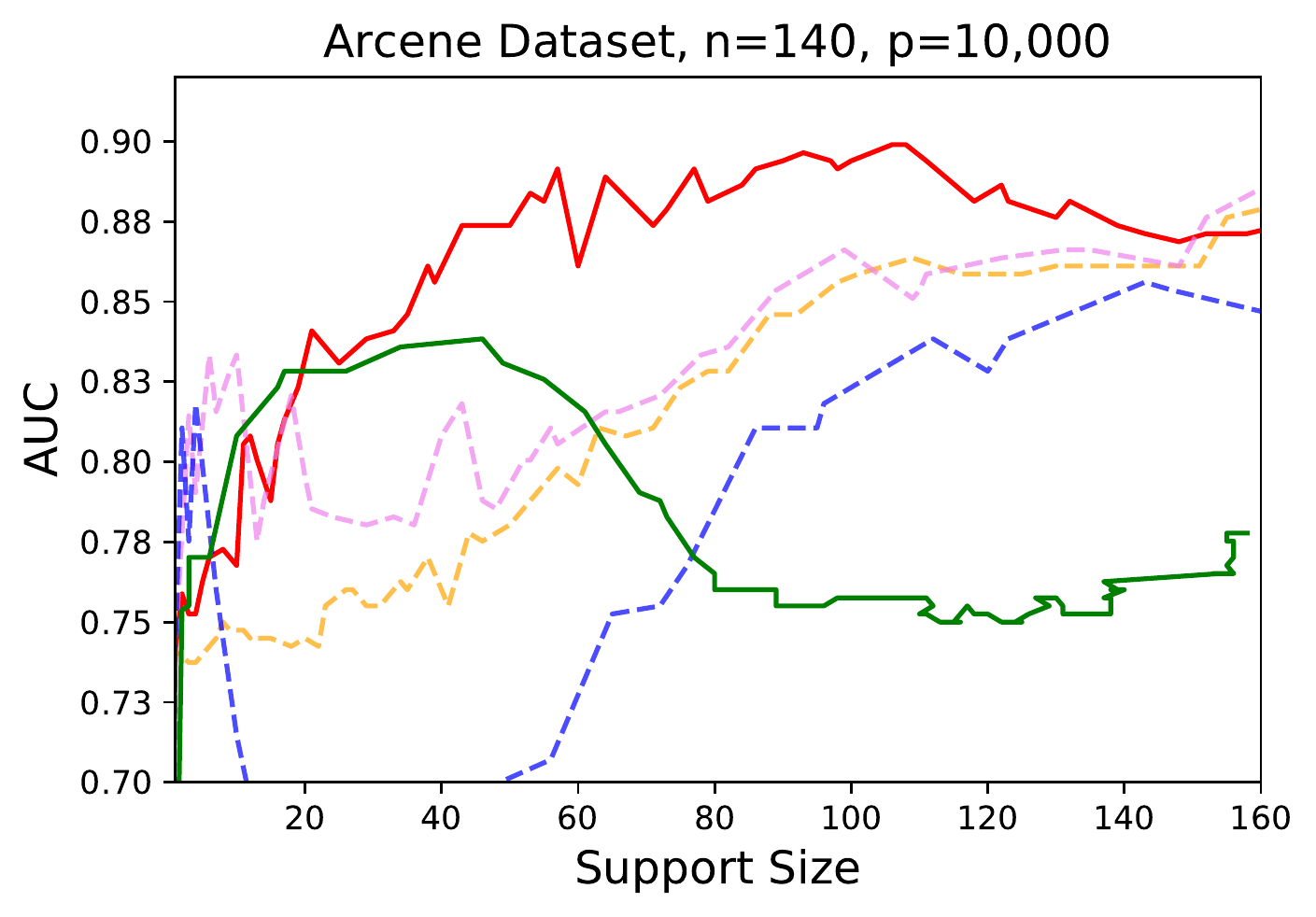}
\includegraphics[scale=0.5]{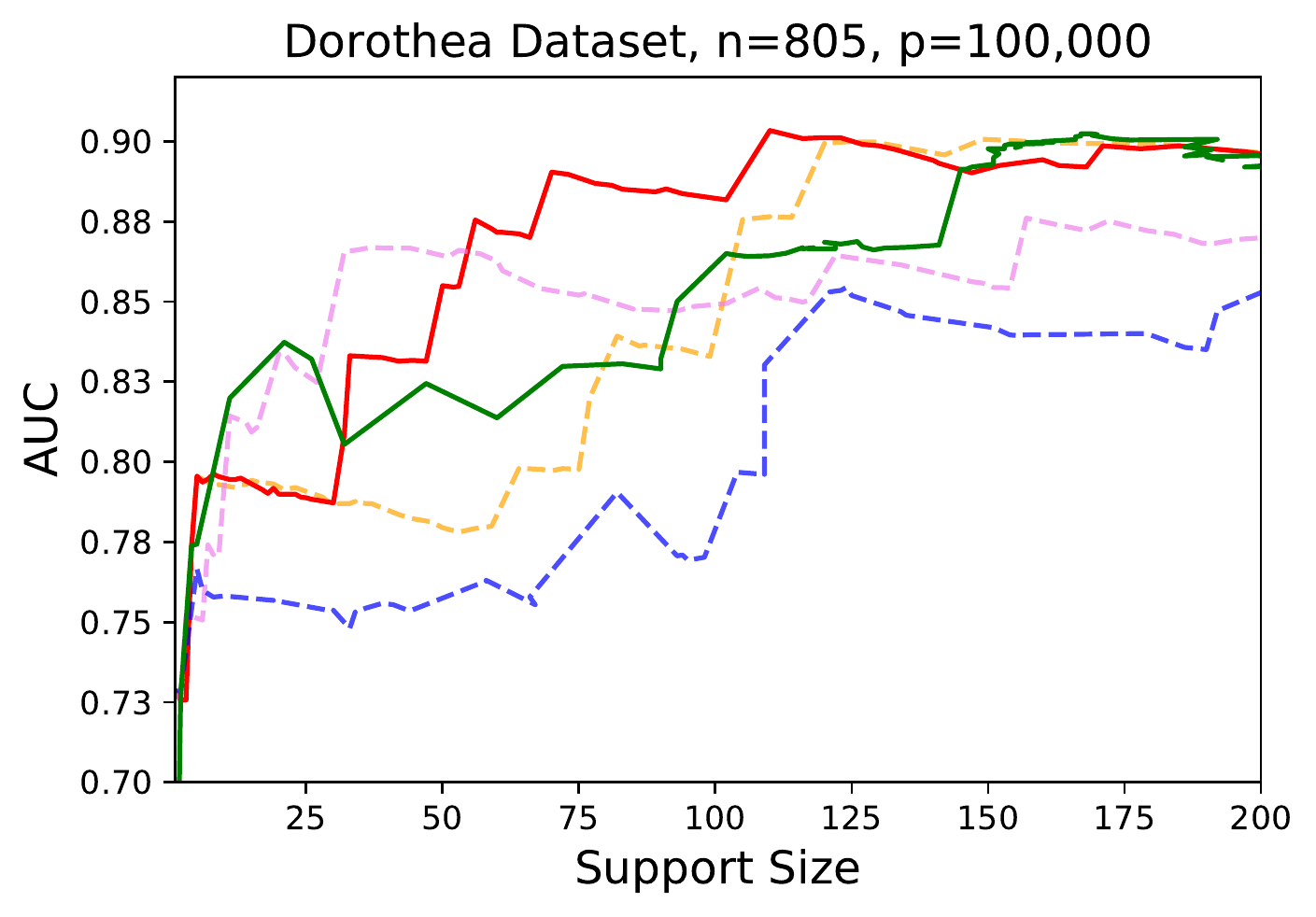}
\includegraphics[scale=0.5]{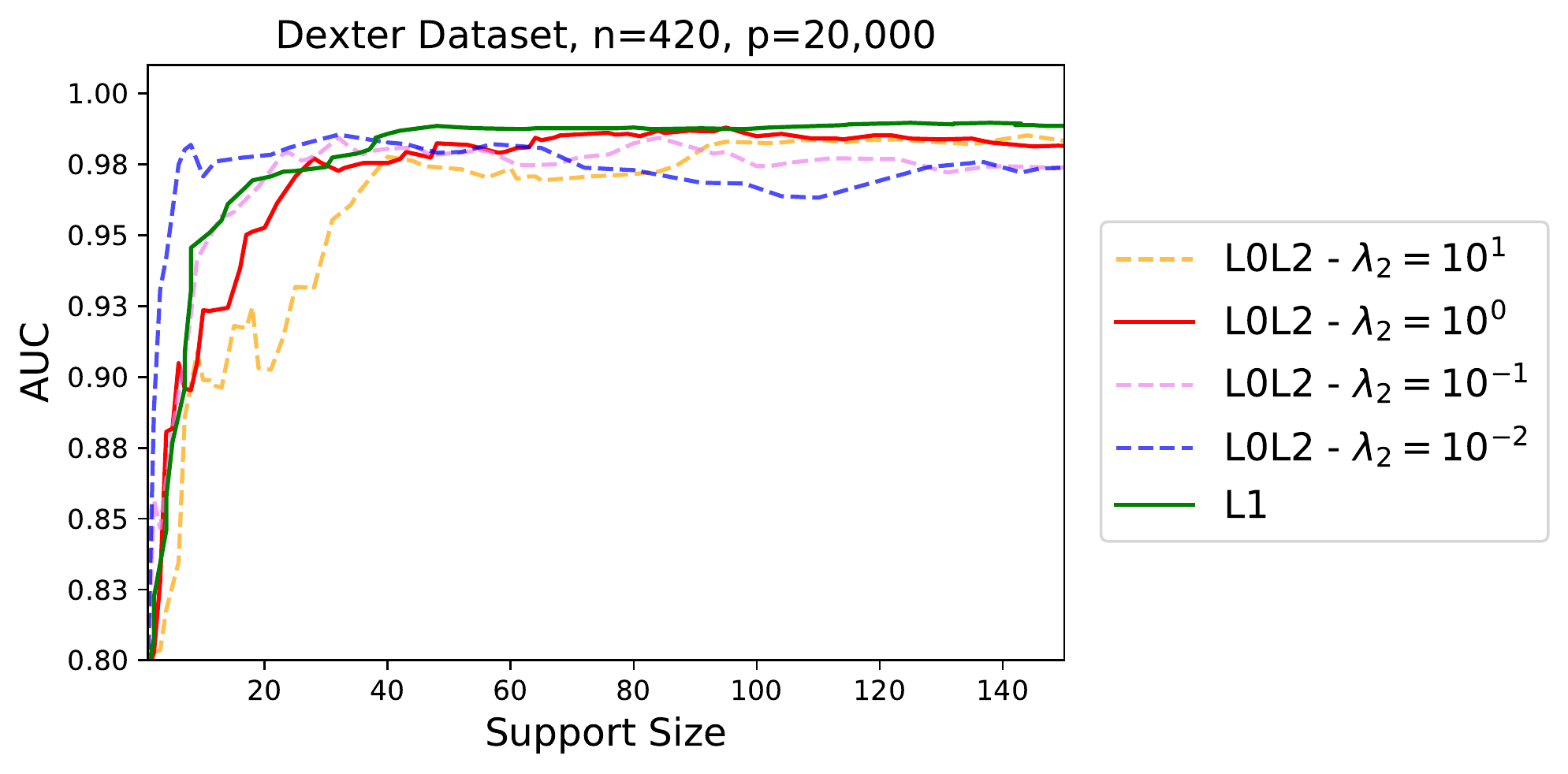}
\caption{Plots of AUC versus corresponding support sizes for the Arcene, Dorothea, and Dexter data sets. The green curves correspond to logistic regression with $\ell_1$ regularization. The other curves correspond to logistic regression with $\ell_0$-$\ell_2$ regularization using Algorithm 1 for different values of $\lambda_2$ (see legend). }
\label{fig:aucvssupp-appendix-L0L2CD}
\end{figure}

\begin{figure}[tb] 
\centering
\textbf{L0L1 (CD w. Local Search) vs. L1}\par\medskip
\includegraphics[scale=0.5]{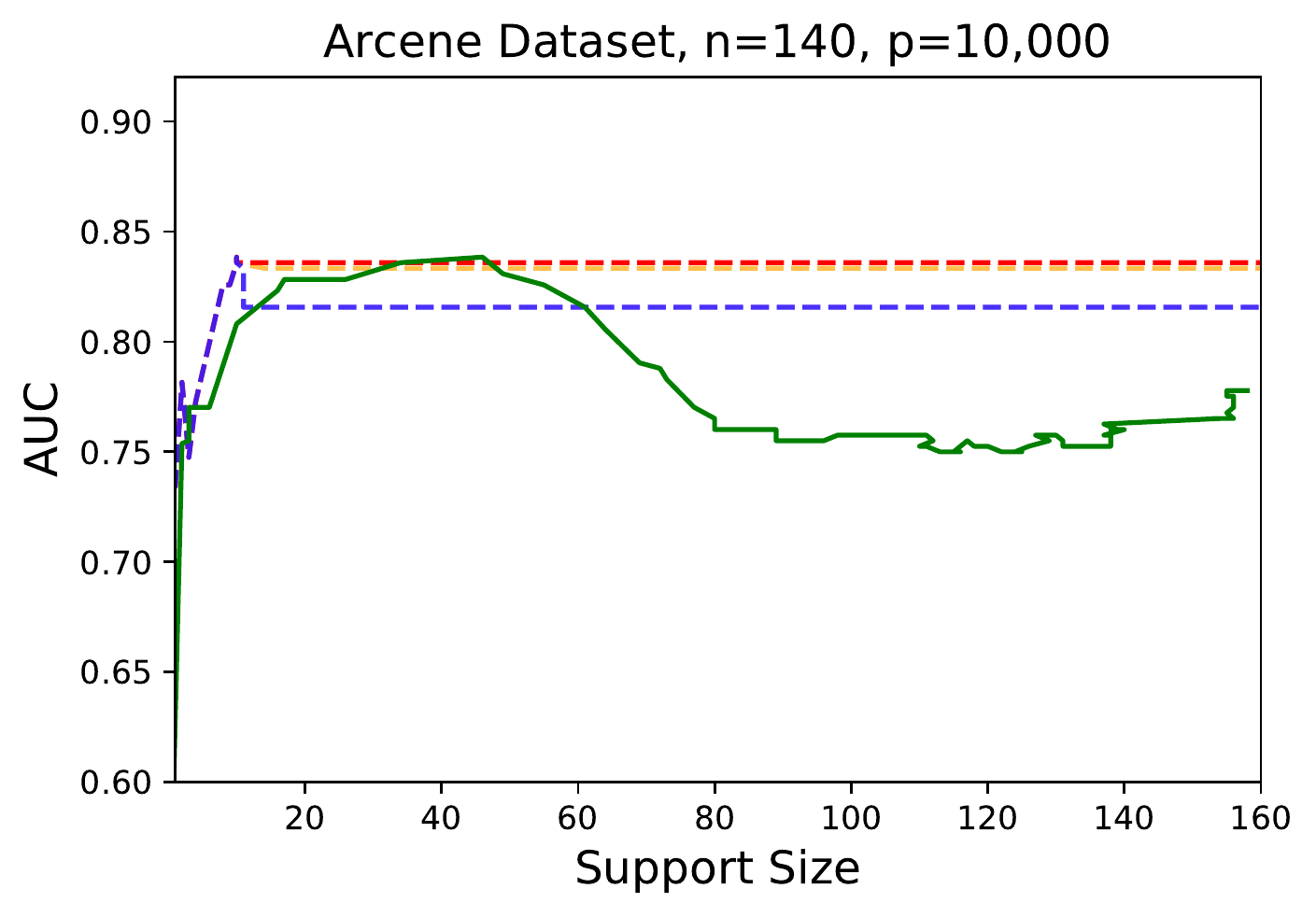}
\includegraphics[scale=0.5]{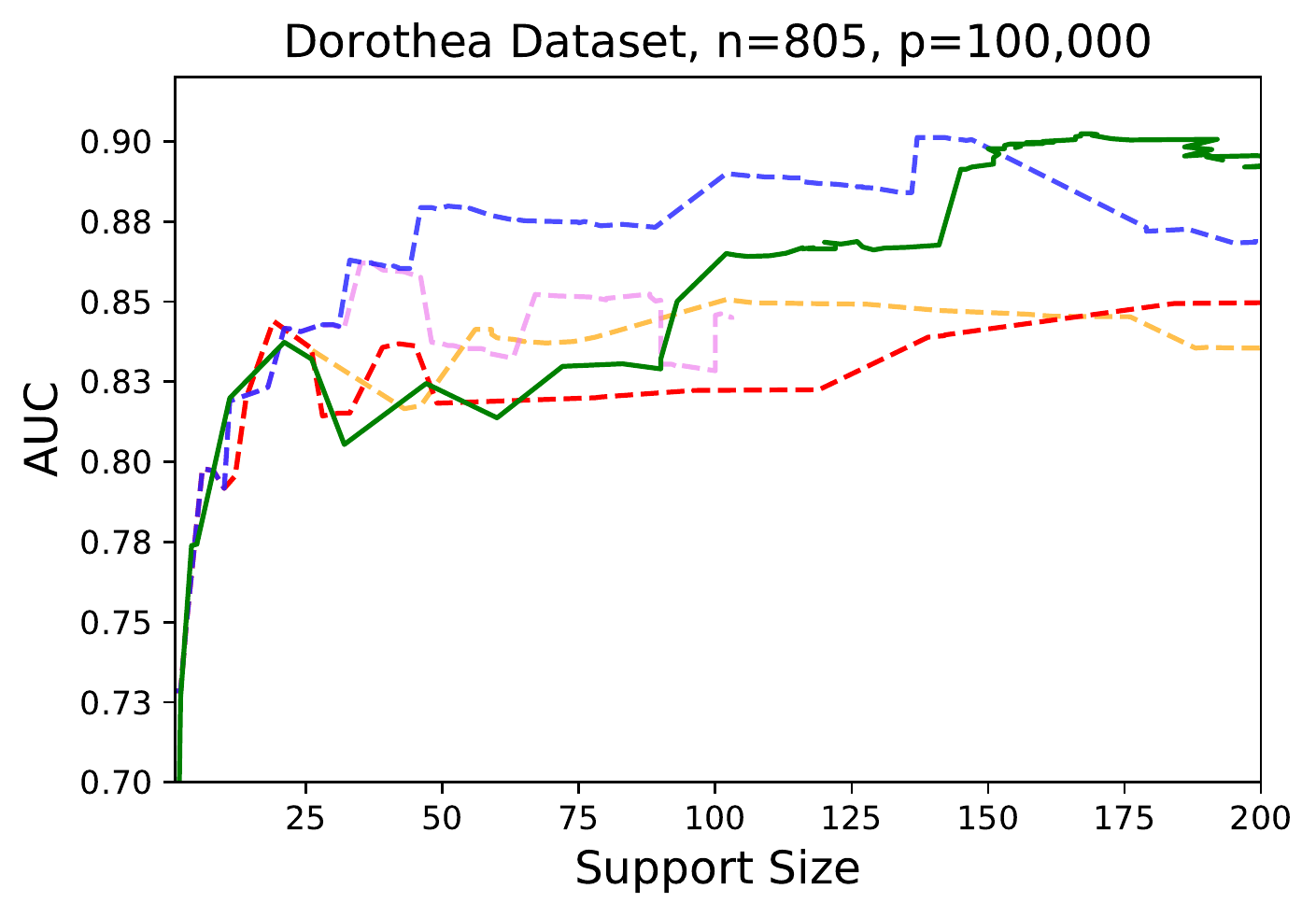}
\includegraphics[scale=0.5]{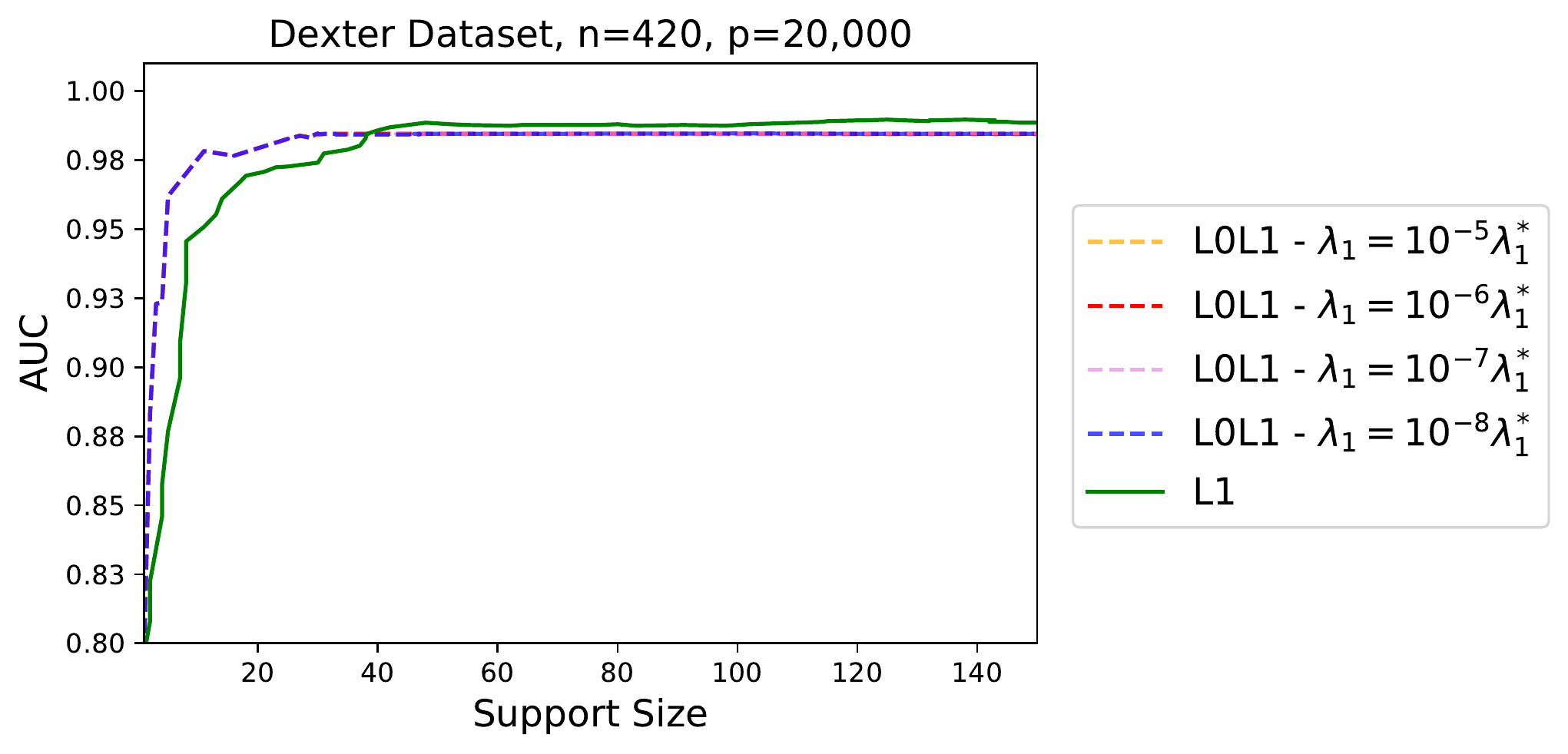}
\caption{Plots of the AUC versus the support size for the Arcene, Dorothea, and Dexter data sets. The green curves correspond to logistic regression with $\ell_1$ regularization. The other curves correspond to logistic regression with $\ell_0$-$\ell_1$  regularization using Algorithm 2 (with $m=1$) for different values of $\lambda_2$ (see legend). }
\label{fig:aucvssupp-appendix-L0L1CDwLoc}
\end{figure}

\begin{figure}[tb] 
\centering
\textbf{L0L1 (CD) vs. L1}\par\medskip
\includegraphics[scale=0.5]{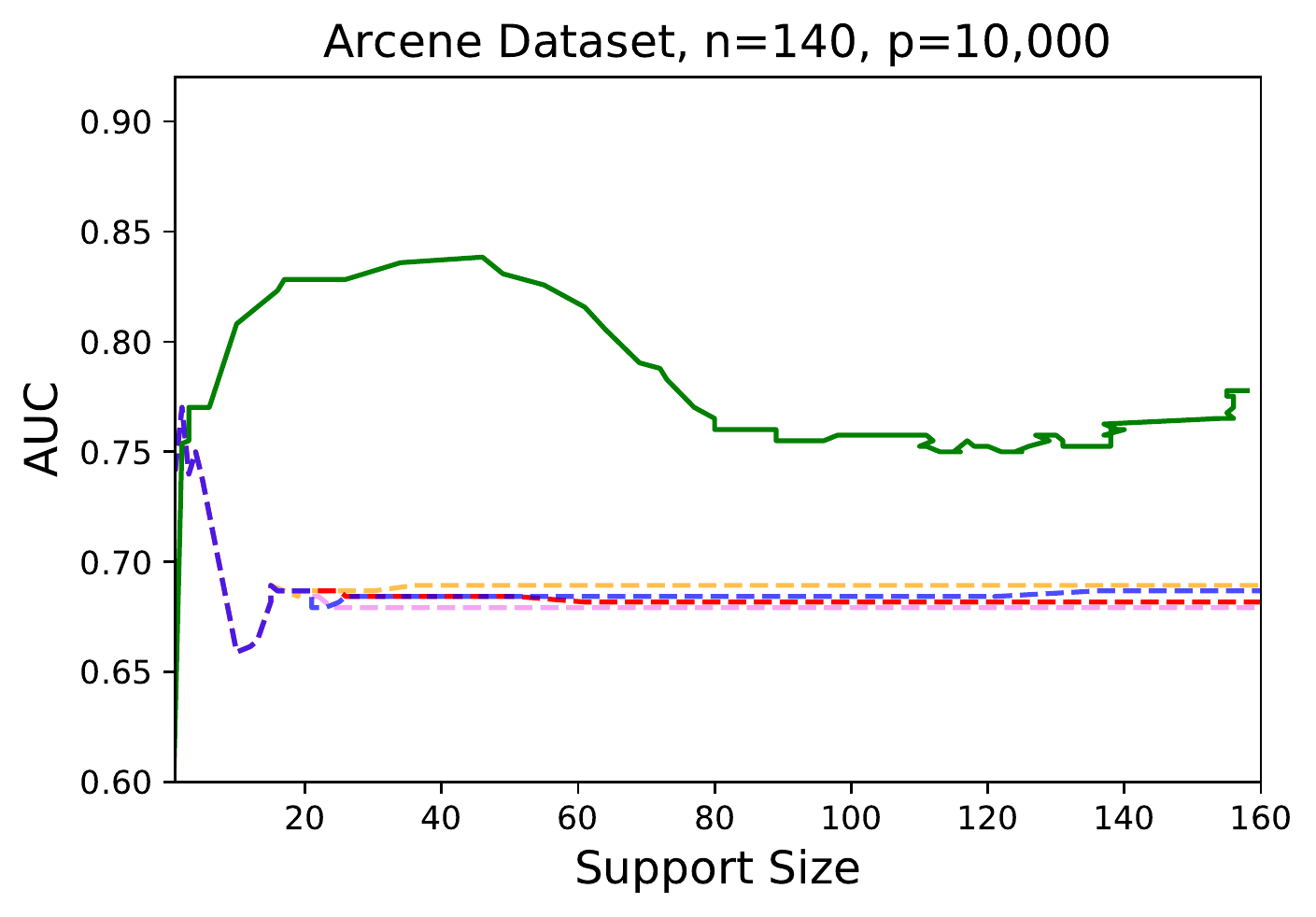}
\includegraphics[scale=0.5]{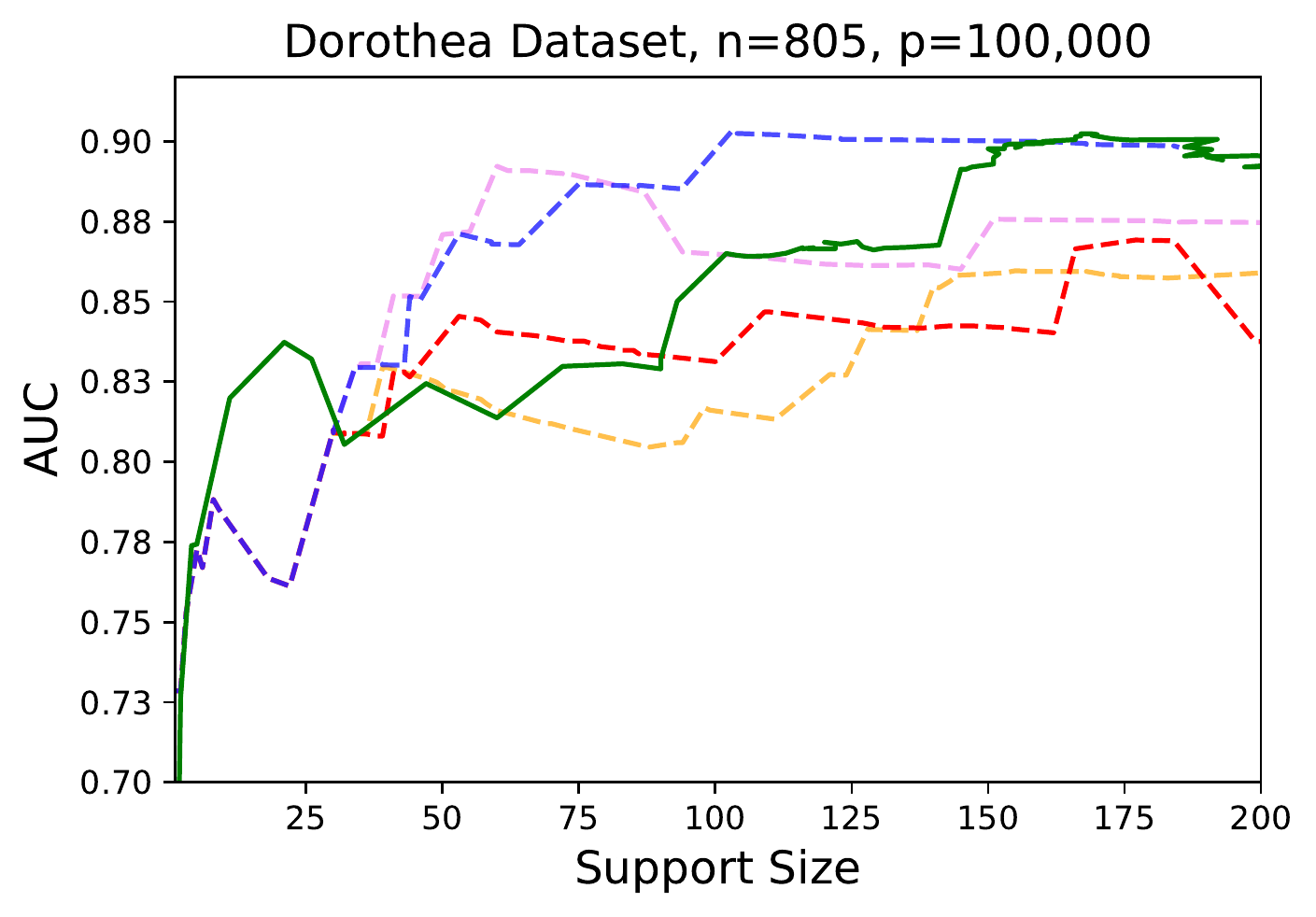}
\includegraphics[scale=0.5]{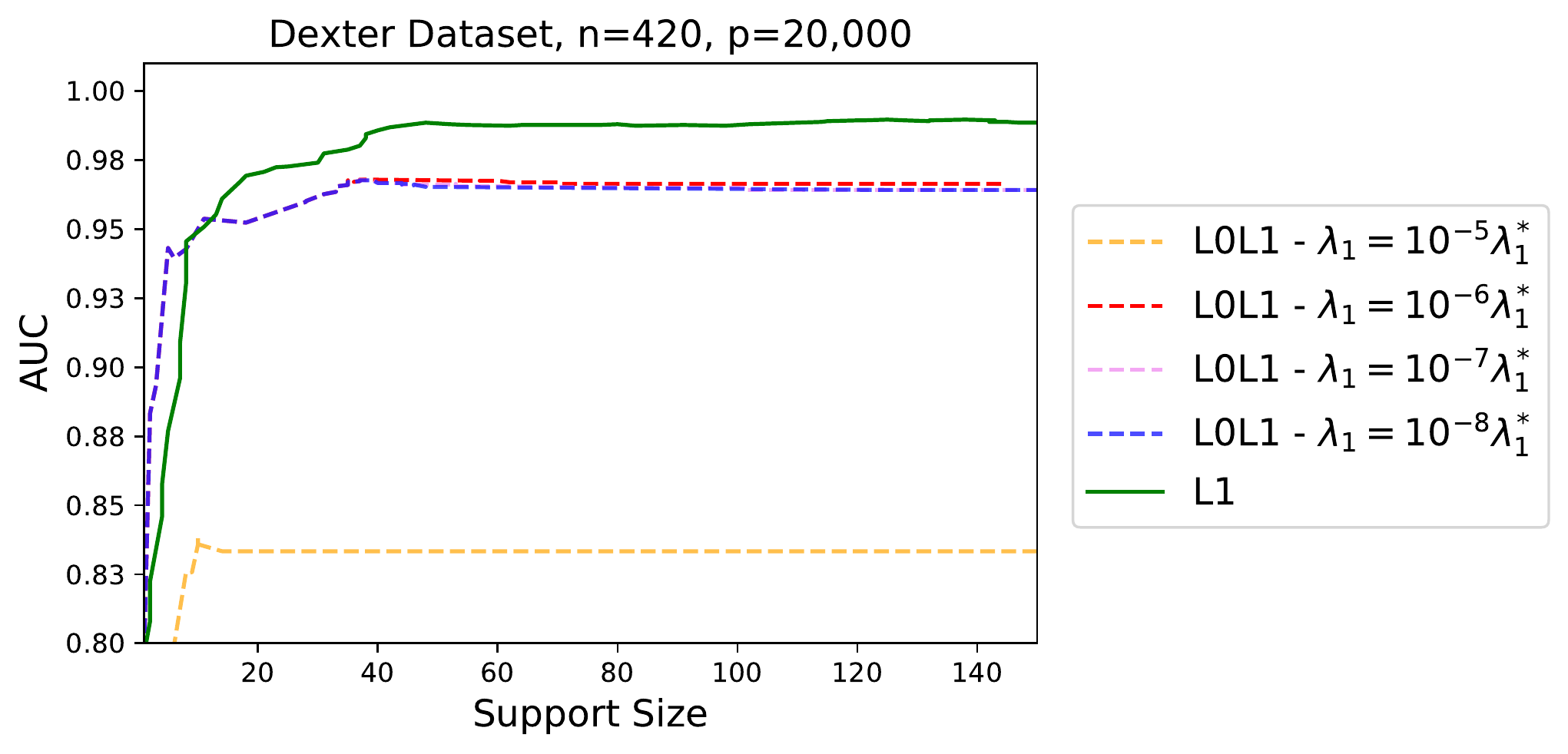}
\caption{Plots of the AUC versus the support size for the Arcene, Dorothea, and Dexter data sets. The green curves correspond to logistic regression with $\ell_1$ regularization. The other curves correspond to logistic regression with $\ell_0$-$\ell_1$ regularization using Algorithm 1 for different values of $\lambda_2$ (see legend). }
\label{fig:aucvssupp-appendix-L0L1CD}
\end{figure}

\clearpage
\bibliography{ref}

\end{document}